\pgfplotsset{compat=newest}
\newcolumntype{L}{>{\raggedright\arraybackslash}X}
\newcommand{\myalg}{NOPG}
\newcommand*\de{\mathop{}\!\mathrm{d}}
\DeclareMathOperator*{\EV}{\mathbb{E}}
\newcommand{\gradt}[0]{\nabla_{\theta}}
\newcommand{\Aset}[0]{\mathcal{A}}
\newcommand{\Sset}[0]{\mathcal{S}}
\newcommand{\svec}[0]{\mathbf{s}}
\newcommand{\xvec}[0]{\mathbf{x}}
\newcommand{\avec}[0]{\mathbf{a}}
\newcommand{\rvec}[0]{\mathbf{r}}
\newcommand{\qvec}[0]{\mathbf{q}_{\pi}}
\newcommand{\phivec}[0]{\bm{\phi}}
\newcommand{\pis}[0]{\pi_{\theta}}
\newcommand{\approxx}[1]{\hat{#1}}
\newcommand{\grad}[1]{\nabla_{#1}}
\newcommand{\kerres}{\bm{\Lambda}_{\pi}}
\newcommand{\pargrad}[1]{\frac{\partial}{\partial {#1}}}
\newcommand{\state}[0]{\mathbf{s}}
\newcommand{\action}[0]{\mathbf{a}}
\newcommand{\nextstate}[0]{\mathbf{s}'}
\newcommand{\Trans}[3]{p({#1}|{#2},{#3})}
\newcommand{\defeq}[0]{\coloneqq}
\newcommand{\bvec}{\mathbf{b}}
\newcommand{\zvec}{\mathbf{z}}
\newcommand{\hvec}{\mathbf{h}}
\newcommand{\deltavec}{\bm{\delta}}
\newcommand{\bigO}{\mathcal{O}}
\colorlet{darkgreen}{green!50!black!100!}
\newcommand{\reviewA}[1]{#1}	
\newcommand{\reviewB}[1]{#1}  
\newcommand{\reviewC}[1]{#1} 
\newcommand{\reviewD}[1]{#1} 
\newcommand{\reviewAll}[1]{#1} 
\DeclareMathOperator\erf{erf}
\newcommand{\e}[1]{e^{\textstyle #1}}
\newtheorem{definition}{Definition}
\newtheorem{theorem}{Theorem}
\newtheorem{proposition}{Proposition}
\begin{document}
	\newlength\figureheight
	\newlength\figurewidth
	
	\setlength\figureheight{5.25 cm}
	\setlength\figurewidth{0.64 \columnwidth}
%
\title{Batch Reinforcement Learning with a Nonparametric Off-Policy Policy Gradient}
%
%
%
%

\author{Samuele~Tosatto,
        Jo\~ao~Carvalho
        and~Jan~Peters
\IEEEcompsocitemizethanks{\IEEEcompsocthanksitem Samuele Tosatto is with the University of Alberta, Edmonton, Alberta, Canada. Department of Computing Science. E-mail: tosatto@ualberta.ca. 
\IEEEcompsocthanksitem Jo\~{a}o Carvalho and Jan Peters are with the Technische Universit\"at Darmstadt, ¨
Darmstadt, Germany, FG Intelligent Autonomous Systems. E-mail: \{name\}.\{surname\}@tu-darmstadt.de
}
}

%
%

\markboth{Submitted to IEEE Transaction on Pattern Analysis and Machine Intelligence}%
{Shell \MakeLowercase{\textit{et al.}}: Bare Demo of IEEEtran.cls for Computer Society Journals}
%



\IEEEtitleabstractindextext{%
\begin{abstract}
Off-policy Reinforcement Learning (RL) holds the promise of better data efficiency as it allows sample reuse and potentially enables safe interaction with the environment.
Current off-policy policy gradient methods either suffer from high bias or high variance, delivering often unreliable estimates.
The price of inefficiency becomes evident in real-world scenarios such as interaction-driven robot learning, where the success of RL has been rather limited, and a very high sample cost hinders straightforward application.
In this paper, we propose a nonparametric Bellman equation, which can be solved in closed form. The solution is differentiable w.r.t the policy parameters and gives access to an estimation of the policy gradient.
In this way, we avoid the high variance of importance sampling approaches, and the high bias of semi-gradient methods. 
We empirically analyze the quality of our gradient estimate against state-of-the-art methods, and show that it outperforms the baselines in terms of sample efficiency on classical control tasks.

\end{abstract}

\begin{IEEEkeywords}
Reinforcement Leanring, Policy Gradient, Nonparametric Estimation.
\end{IEEEkeywords}}

\maketitle

\IEEEdisplaynontitleabstractindextext

%
\IEEEpeerreviewmaketitle

\IEEEraisesectionheading{\section{Introduction}\label{sec:introduction}}

%
%
%
%
\IEEEPARstart{R}{einforcement learning} has made overwhelming progress in recent years, especially when applied to board and computer games, or simulated tasks \cite{mnih_human-level_2015,haarnoja_soft_2018,schulman_trust_2015}.
 However, in comparison, only a little improvement has been achieved on real-world tasks. 
One of the reasons of this gap is that the vast majority of reinforcement learning approaches are on-policy. On-policy algorithms require that the samples are collected using the optimization policy; and therefore this implies that a) there is little control on the environment and b) samples must be discarded after each policy improvement, causing high sample inefficiency.
In contrast, off-policy techniques are theoretically more sample efficient, because they decouple the procedures of data acquisition and policy update, allowing for the possibility of sample-reuse. Furthermore, off-policy estimation enables offline (or batch) reinforcement learning, which means that the algorithm extracts the optimal policy from a fixed dataset. This property is crucial in many real-world applications, since it allows a decoupled data-acquisition process, and, subsequently, a safer interaction with the environment. However, classical off-policy algorithms like Q-learning with function approximation and its offline version, fitted Q-iteration  \cite{ernst_tree-based_2005,riedmiller_neural_2005}, are not guaranteed to converge \cite{baird_residual_1995,lu_non-delusional_2018}, and allow only discrete actions. More recent semi-gradient\footnote{We adopt the terminology from \cite{imani_off-policy_2018}.} off-policy techniques, like Off-Policy Actor Critic (Off-PAC) \cite{degris_off-policy_2012-1}, Deep Deterministic Policy Gradient (DDPG) \cite{silver_deterministic_2014,lillicrap_continuous_2016}, Soft Actor Critic (SAC) \cite{haarnoja_soft_2018}, often perform sub-optimally, especially when the collected data is strongly off-policy, due to the biased semi-gradient update \cite{fujimoto_off-policy_2019}. 
\reviewAll{In the last years, offline techniques like Conservative Q-Learning (CQL) \cite{kumar_conservative_2020}, Bootstrapping Error Accumulation Reduction (BEAR) \cite{kumar_stabilizing_2019} and  Behavior Regularized Actor Critic (BRAC) \cite{wu_behavior_2019}, take care of a particular treatment of the out-of-distribution (OOD) policy improvement, however, still using the semi-gradient estimation}.
Another class of off-policy gradient estimation uses importance sampling \cite{shelton_policy_2001,meuleau_exploration_2001,peshkin_learning_2002}~to deliver an unbiased estimate of the gradient but suffer from high variance and is generally only applicable with stochastic policies. Moreover, these algorithms require the full knowledge of the behavioral policy, making them unsuitable when data stems from a human demonstrator. 
\begin{figure}[t]
	\centering
	\includegraphics[width=0.95 \columnwidth]{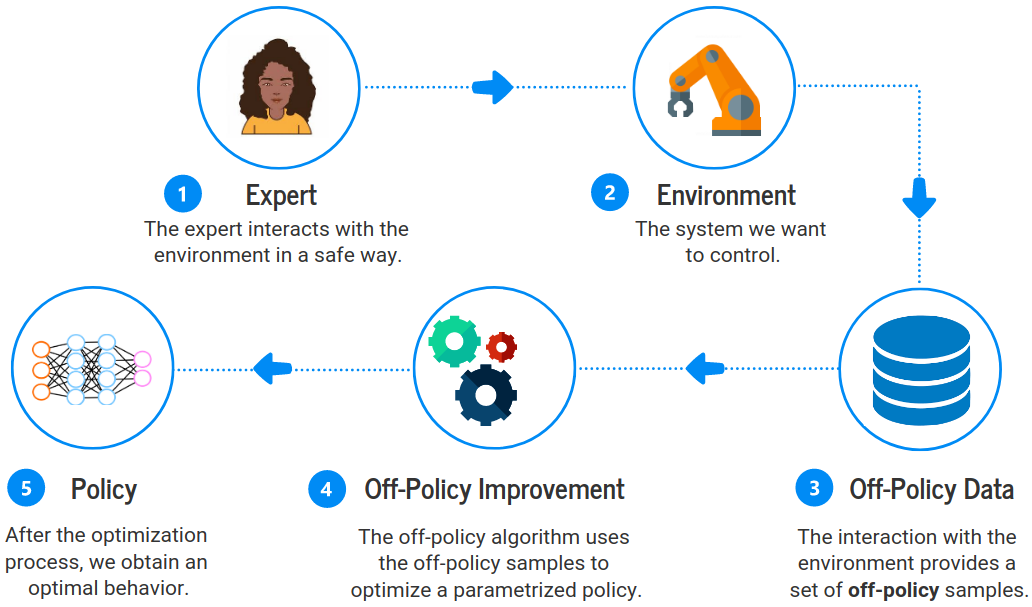}
	\caption{In the off-policy reinforcement learning scheme, the policy can be optimized using an off-policy dataset. This allows for safer interaction with the system and for better sample efficiency.}
	\label{figure:gradient}
\end{figure}
On the other hand, model-based approaches aim to approximate the environment's dynamics, allowing the generation of synthetic samples. The generative model can, in principle, be used by an on-policy reinforcement learning algorithm, therefore, encompassing the difficulty of off-policy estimation.
However, the generation of synthetic samples is also problematic, as they are affected by the error of the approximated dynamics. 
\reviewAll{To mitigate this problem, state-of-the-art techniques aim to quantify and penalize the most uncertain regions of the state-action space, like Model-based Offline Policy Optimization (MOPO) \cite{yu_mopo_2020} and Model-based Offline Reinforcement Learning (MOReL) \cite{kidambi_morel_2020}.} 
\reviewC{To address all previously highlighted issues, we propose a new algorithm, the Nonparametric Off-policy Policy Gradient (NOPG) \cite{tosatto_nonparametric_2020}. NOPG constructs a nonparametric model of the dynamics and the reward signal, and builds a full-gradient estimate based on the closed-form solution of a nonparametric Bellman equation. 
Our approach, in contrast to the majority of model-based approaches, does not require the generation of artificial data.
On the other hand, while model-free approaches are either built on semi-gradient estimation, or importance sampling, NOPG computes a full-gradient estimate without importance sampling estimators, and allows for the use of human demonstrations.} 
Figure~\ref{figure:gradient} shows the offline scheme of NOPG. A behavioral policy, represented by a human demonstrator, provides (possibly suboptimal) trajectories that solve a task. NOPG optimizes a policy from off-line and off-policy samples. 

\reviewC{\textbf{Contribution.}
This paper introduces a nonparametric Bellman equation, and the full-gradient  derived from its closed-form solution. We study the properties of the nonparametric Bellman equation, focusing on the bias of its closed-form solution.
We empirically analyze the bias and the variance of the proposed gradient estimator. We compare its effectiveness and efficiency w.r.t. state of the art online and offline techniques, observing that NOPG exhibits high sample efficiency.}


\section{Problem Statement}
	Consider the reinforcement learning problem of an agent interacting with a given environment, as abstracted by a Markov decision process (MDP) and defined over the tuple $(\Sset, \Aset, \gamma, P, R, \mu_0)$ where $\Sset \equiv \mathbb{R}^{d_s}$ is the state space, $\Aset \equiv \mathbb{R}^{d_a}$ the action space, the transition-based discount factor $\gamma$ is a stochastic mapping between $\Sset\times\Aset\times\Sset$ to $[0, 1)$, which allows for unification of episodic and continuing tasks \cite{white_unifying_2017}. \reviewD{The discount factor allows the transormation of a particular MDP to an equivalent MDP where, at each state-action, there is $1-\gamma$ probability to transition to an absorbing state with zero reward. A variable discount factor, can, therefore, be interpreted as a variable probability of transitioning to an absorbing state. } To keep however the theory simple, we will assume that $\gamma(\state, \action, \state') \leq \gamma_c$ where $\gamma_c < 1$. The transition probability from a state $\state$ to $\nextstate$ given an action $\action$ is governed by the conditional density $\Trans{\nextstate}{\state}{\action}$. The stochastic reward signal $R$ for a transition $(\state, \action, \nextstate) \in \Sset \times \Aset \times \Sset$ is drawn from a distribution $R(\state, \action, \nextstate)$ with mean value $\EV_{\nextstate}[R(\state, \action, \nextstate)] = r(\state, \action)$. The initial distribution $\mu_0(\state)$ denotes the probability of the state $\state \in \Sset$ to be a starting state. 
	A policy $\pi$ is a stochastic or deterministic mapping from $\Sset$ onto $\Aset$, usually  parametrized by a set of parameters $\theta$.
	
	We define an \textsl{episode} as $\tau \equiv \{\state_t, \action_t, r_t, \gamma_t\}_{t=1}^\infty$ where
	\begin{eqnarray}
	& &\mathbf{s}_0 \sim \mu_{0}(\cdot);\quad  \mathbf{a}_t \sim \pi(\cdot \mid \mathbf{s}_t);\quad  \mathbf{s}_{t+1} \sim p(\cdot \mid \mathbf{s}_t, \mathbf{a}_t) \nonumber \\
	& & r_t \sim R(\mathbf{s}_t, \mathbf{a}_t, \state_{t+1});\quad  \gamma_t \sim \gamma(\state_t, \action_t, \state_{t+1}). \nonumber 
	\end{eqnarray}
	In this paper we consider the discounted infinite-horizon setting, where the objective is to maximize the expected return
	\begin{equation}
	J_{\pi}= \EV_\tau\left[\sum_{t=0}^{\infty} r_t \prod_{i=0}^{t}\gamma_i\right] \label{eq:return}.
	\end{equation}
	It is important to introduce two important quantities: the \textsl{stationary state visitation} $\mu_{\pi}$ and the \textsl{value function} $V_{\pi}$.
	We naturally extend the stationary state visitation defined by \cite{sutton_policy_2000} with the transition-based discount factor  
	\begin{eqnarray}
		\mu(\mathbf{s}) = \EV_\tau\left[ \sum_{t=0}^{\infty}  p(\mathbf{s}=\state_t | \pi, \mu_0) \prod_{i=1}^t\gamma_i\right], \nonumber
	\end{eqnarray}
	or, equivalently, as the fixed point of 
	\begin{eqnarray}
		\mu(\mathbf{s}) = \mu_0(\mathbf{s}) +  \int_{\Sset}\int_{\Aset} p_\gamma(\svec | \svec', \avec')\pi(\avec' | \svec')\mu_\pi(\nextstate)\de \svec'\de \avec' \nonumber
	\end{eqnarray}
	where, from now on, $p_\gamma(\nextstate | \state, \action)\! =\! p(\nextstate | \state, \action)\mathbb{E}[\gamma(\state, \action, \nextstate)]$.
	The value function
	\begin{eqnarray}
	V_{\pi}(\svec) = \EV_\tau\left[ \sum_{t=0}^\infty  r_t p(r_t | \svec_0 = \svec,  \pi) \prod_{i=0}^t\gamma_i\right], \nonumber
	\end{eqnarray}
	corresponds to the fixed point of the Bellman equation,
	\begin{equation}
	V_\pi(\svec)  =  \int_{\Aset} \pi(\avec| \svec)\bigg(r(\svec, \avec)+ \int_{\Sset}V_\pi(\svec')p_\gamma(\svec' | \svec, \avec)\de \svec' \bigg)\de \avec . \nonumber
	\end{equation}
	The state-action value function is defined as
	\begin{equation}
		Q_\pi(\svec, \action)  =  r(\svec, \avec) +  \int_{\Sset}V_\pi(\svec')p_\gamma(\svec' | \svec, \avec)\de \svec'. \nonumber
	\end{equation}

	The expected return \eqref{eq:return} can be formulated as
	\begin{eqnarray}
		J_{\pi} = \int_{\Sset} \mu_0(\svec)V_{\pi}(\svec)\de \svec = \int_{\Sset}\int_{\Aset}\mu_\pi(\svec)\pi(\avec | \svec)r(\svec, \avec)\de \avec\de \svec . \nonumber
	\end{eqnarray}

\textbf{Policy Gradient Theorem.} Objective \eqref{eq:return} is usually maximized via gradient ascent. The gradient of $J_{\pi}$ w.r.t. the policy parameters $\theta$ is

\begin{equation*}
\nabla_{\theta}J_{\pi} \!=\! \int\displaylimits_{\Sset}\int\displaylimits_{\Aset}\! \mu_{\pi}(\svec)\pi_{\theta}(\avec|\svec)Q_{\pi}(\svec,\avec)\nabla_{\theta}\log \pi_{\theta}(\avec | \svec) \de \avec \de \svec,
\end{equation*}
as stated in the policy gradient theorem \cite{sutton_policy_2000}. 
When it is possible to interact with the environment with the policy $\pi_\theta$, one can approximate the integral by considering the state-action as a distribution (up to a normalization factor) and use the samples to perform a Monte-Carlo (MC) estimation \cite{williams_simple_1992}. The $Q$-function can be estimated via Monte-Carlo sampling, approximate dynamic programming or by direct Bellman minimization. 
In the off-policy setting, we do not have access to the state-visitation $\mu_\pi$ induced by the policy, but instead we observe a different state distribution.
While estimating the $Q$-function with the new state distribution is well established in the literature \cite{watkins_q-learning_1992, ernst_tree-based_2005}, the shift in the state visitation $\mu_\pi(\state)$ is more difficult to obtain. State-of-the-art techniques either omit to consider this shift (we refer to these algorithms as \textsl{semi-gradient}), or they try to estimate it via importance sampling correction. These approaches will be discussed in detail in Section~\ref{sec:related_work}.

\section{Nonparametric Off-Policy Policy Gradient}
In this section, we introduce a nonparametric Bellman equation with a closed form solution, which carries the dependency from the policy's parameters. We derive the gradient of the solution, and discuss the properties of the proposed estimator.
\subsection{A Nonparametric Bellman Equation}
\reviewC{Nonparametric methods, require the storage of all samples to make predictions, but usually exhibit a low bias. They gained popularity thanks to their general simplicity and their theoretical properties.
In this paper, we focus on kernel density estimation, and Nadaraya-Watson kernel regression. In this context, it is usual to assume the kernel to be a symmetric, real function $\kappa$, with positive co-domain and with $\int k(x, y) \de x = 1$~$\forall y$.
The goal of kernel density estimation is to predict the density of a distribution $p(\state)$. After the collection of $n$ samples $\xvec_i \sim p(\cdot)$, it is possible to obtain the estimated density 
\begin{equation}
	\hat{p}(\xvec) = n^{-1} \sum_{i=1}^n\kappa(\xvec, \xvec_i). \nonumber
\end{equation}
Nadaraya-Watson kernel regression builds on the kernel density estimation to predict the output of a function $f:\mathbb{R}^d\to\mathbb{R}$, assuming to observe a dataset of $n$ samples $\{\xvec_i, y_i\}_{i=1}^n$ with $\xvec_i \sim p(\cdot)$ and $y_i = f(\xvec_i) + \epsilon_i$ (where $\epsilon_i$ is a zero-mean noise with finite variance). The prediction 
\begin{equation}
	\hat{f}(\xvec) = \frac{\int_{\mathbb{R}} y \hat{p}(\xvec, y)\de y}{\hat{p}(\xvec)} = \frac{\sum_{i=1}^n y_i \kappa(\xvec, \xvec_i)}{\sum_{i=1}^n\kappa(\xvec, \xvec_i)} \nonumber
\end{equation}
is constructed on the expectation of the conditional density probability, keeping in account that $\int_{\mathbb{R}} y \kappa(y, y_i) \de y = y_i$.
}
Nonparametric Bellman equations have been developed in a number of prior works. \cite{ormoneit_kernel-based_2002,xu_kernel-based_2007,engel_reinforcement_2005} used nonparametric models such as Gaussian Processes for approximate dynamic programming. \cite{taylor_kernelized_2009} have shown that these methods differ mainly in their use of regularization. \cite{kroemer_non-parametric_2011} provided a Bellman equation using kernel density-estimation and a general overview on nonparametric dynamic programming. In contrast to prior work, our formulation preserves the dependency on the policy, enabling the computation of the policy gradient in closed-form. Moreover, we upper-bound the bias of the Nadaraya-Watson kernel regression to prove that our value function estimate is consistent w.r.t. the classical Bellman equation under smoothness assumptions. \label{nppg}
We focus on the maximization of the average return in the infinite horizon case formulated as a starting state objective  $\int_\state \mu_0(\state)V_\pi(\state) \de \state $ \cite{sutton_policy_2000}.
\begin{definition}
	The discounted infinite-horizon objective is defined by $J_{\pi} = \int \mu_{0}(\state)V_{\pi}(\state)\de \state$. Under a stochastic policy the objective is subject to the Bellman equation constraint
	\label{definition:objective}
	\begin{align}
	V_{\pi}(\state)&\!=\!\int_{\Aset} \pi_{\theta}(\action|\state) \bigg(r(\state,\action)  +\gamma\!\int_{\Sset} V_{\pi}(\nextstate)p(\nextstate|\state, \action)\de \nextstate \bigg)\!\de \action,
	\end{align}
	while in the case of a deterministic policy the constraint is given as
	\begin{equation*}
	V_{\pi}(\state) = r(\state,\pi_{\theta}(\state)) + \gamma \int_{\Sset} V_{\pi}(\nextstate)p(\nextstate|\state, \pi_{\theta}(\state))\de \nextstate.
	\end{equation*}
\end{definition}
Maximizing the objective in Definition~\ref{definition:objective} analytically is not possible, excluding special cases such as under  linear-quadratic assumptions \cite{borrelli_predictive_2017}, or finite state-action space. Extracting an expression for the gradient of $J_{\pi}$ w.r.t. the policy parameters $\theta$ is also not straightforward given the infinite set of possibly non-convex constraints represented in the recursion over $V_{\pi}$. Nevertheless, it is possible to transform the constraints in Definition~\ref{definition:objective} to a finite set of linear constraints via nonparametric modeling, thus leading to an expression of the value function with simple algebraic manipulation \cite{kroemer_non-parametric_2011}.
\subsubsection{Nonparametric Modeling.} 
Assume a set of $n$ observations $D\!\equiv\!\{\state_i, \action_i, r_i, {\nextstate}_i, \gamma_i\}_{i=1}^{n}$ sampled from interaction with an environment, with $\state_i, \action_i \sim \beta(\cdot, \cdot)$, $\state_i' \sim p(\cdot |\state_i, \action_i)$, $r_i \sim R(\state_i, \action_i)$ and $\gamma \sim \gamma(\state_i, \action_i, \nextstate_i)$. We define the kernels $\psi:\Sset\times\Sset \to \mathbb{R}^+$, $\varphi:\Aset\times\Aset\to \mathbb{R}^+$ and $\phi:\Sset\times\Sset\to\mathbb{R}^+$, as normalized, symmetric and positive definite functions with bandwidths $\hvec_{\psi}, \hvec_{\varphi}, \hvec_{\phi}$ respectively. We define $\psi_i(\state) = \psi(\state, \state_i)$, $\varphi_i(\action) = \varphi(\action, \action_i)$, and $\phi_i(\state) = \phi(\state, {\nextstate}_i)$. Following \cite{kroemer_non-parametric_2011}, the mean reward $r(\state, \action)$ and the transition conditional $p(\nextstate|\state, \action)$ are approximated by the Nadaraya-Watson regression \cite{nadaraya_estimating_1964,watson_smooth_1964} and kernel density estimation, respectively
\begin{align}
\approxx{r}(\state, \action) &\! \defeq \! \frac{\sum_{i=1}^n\psi_i(\state)\varphi_i(\action)r_i}{ \sum_{i=1}^n\psi_i(\state)\varphi_i(\action)}\nonumber , \\ 
\approxx{p}(\nextstate | \state, \action) &\! \defeq \! \frac{\sum_{i=1}^n \phi_i(\nextstate)\psi_i(\state)\varphi_i(\action)}{\sum_{i=1}^n \psi_i(\state)\varphi_i(\action)} \nonumber , \\
\approxx{\gamma}(\state, \action, \nextstate) & \defeq
\frac{ \sum_{i=1}^n\gamma_i\psi_i(\state)\varphi_i(\action)\phi_i(\nextstate)}{\sum_{i=1}^n\psi_i(\state)\varphi_i(\action)\phi_i(\nextstate)}\nonumber 
\end{align}
and, therefore, by the product of $\approxx{p}$ and $\approxx{\gamma}$ we obtain
\begin{eqnarray}
	\approxx{p}_{\gamma}(\nextstate |\state, \action) &\defeq& \approxx{p}(\nextstate | \state, \action)\approxx{\gamma}(\state, \action, \nextstate) \nonumber \\
	&=& \frac{ \sum_{i=1}^n\gamma_i\psi_i(\state)\varphi_i(\action)\phi_i(\nextstate)}{\sum_{i=1}^n\psi_i(\state)\varphi_i(\action)}. \nonumber 
\end{eqnarray}

Inserting the reward and transition kernels into the Bellman Equation for the stochastic policy case we obtain the nonparametric Bellman equation (NPBE)
\begin{align}
\approxx{V}_{\pi}(\svec) &\!=\!\int_{\Aset} \! \pis(\avec|\svec)\bigg(\approxx{r}(\state, \action)\!+\!\int_{\Sset} \approxx{V}_{\pi}(\svec')\approxx{p}_\gamma(\svec'|\svec, \avec) \de \svec'\bigg)\!\de \avec \nonumber \\
& \!=\!\sum_i \int_{\Aset}  \frac{\pis(\avec|\svec) \psi_i(\state)\varphi_i(\action)}{\sum_j\psi_j(\state)\varphi_j(\action)}\de \avec \nonumber \\
& \quad \quad \quad \times \bigg(r_i + \gamma_i \int_{\Sset} \phi_i(\svec')\approxx{V}_{\pi}(\svec') \de \svec' \bigg). \label{equation:npbe} 
\end{align}
Equation~\eqref{equation:npbe} can be conveniently expressed in matrix form by introducing the vector of responsibilities $\varepsilon_i(\state)\!=\!\int \pis(\avec|\svec)\! \psi_i(\state)\varphi_i(\action)/\sum_j\psi_j(\state)\varphi_j(\action)\de \avec$, which assigns each state $\state$ a weight relative to its distance to a sample $i$ under the current policy.
\begin{definition}{}
	\label{definition:npbe}
	The nonparametric Bellman equation on the dataset $D$ is formally defined as
	\begin{equation}
	\approxx{V}_{\pi}(\svec)\!=\!\bm{\varepsilon}_{\pi}^{\intercal}(\svec) \left(\rvec +  \int_{\Sset} \phivec_\gamma(\svec')\approxx{V}_{\pi}(\svec') \de \svec' \right), \label{eq:formalnpbe}
	\end{equation}
	with $\phivec_\gamma(\svec)  \!=\![\gamma_1\phi_1(\svec),\dots, \gamma_n\phi_n(\svec)]^{\intercal}, \rvec  \!=\![r_1,\dots, r_n]^{\intercal}$, $\bm{\varepsilon}_{\pi}(\svec)   \!=\![\varepsilon_1^\pi(\svec)$, \dots,  $\varepsilon_n^\pi(\svec)]^{\intercal}$,
	\begin{align}
		\varepsilon_i(\svec, \avec) & = \frac{\psi_i(\state)\varphi_i(\action)}{\sum_j\psi_j(\state)\varphi_j(\action)} \nonumber
	\end{align}
	and
	\begin{align}
		\varepsilon^{\pi}_i(\svec)\!  &= \! \begin{cases}
			\int \pis(\avec|\svec)\varepsilon_i(\svec, \avec)  \de \avec & \text{if $\pi$ is stochastic} \\  \varepsilon_i(\svec, \pis(\svec))  & \text{otherwise.} \nonumber
		\end{cases}
	\end{align}
\end{definition}
From Equation~\eqref{eq:formalnpbe} we deduce that the value function must be of the form $\bm{\varepsilon}_{\pi}^\intercal(\state)\qvec$, indicating that it can also be seen as a form of Nadaraya-Watson kernel regression,
\begin{equation}
\bm{\varepsilon}_{\pi}^\intercal(\state)\qvec = \bm{\varepsilon}_{\pi}^{\intercal}(\svec) \left(\rvec +  \int_{\Sset} \phivec_\gamma(\svec')\bm{\varepsilon}_{\pi}^{\intercal}(\svec')\qvec \de \svec' \right). \label{eq:npbewresp}
\end{equation}
Notice that, trivially,  every $\qvec$ which satisfies 
\begin{equation}
\qvec = \rvec + \int_{\Sset} \phivec_\gamma(\svec')\bm{\varepsilon}_{\pi}^{\intercal}(\svec')\qvec \de \svec' \label{eq:npbeworesp}
\end{equation}
also satisfies Equation~\eqref{eq:npbewresp}. Theorem~\ref{theorem:fixedpoint} demonstrates that the algebraic solution of Equation~\eqref{eq:npbeworesp} is the \textsl{only} solution of the nonparametric Bellman Equation~\eqref{eq:formalnpbe}. 
\begin{theorem}{}
	\label{theorem:fixedpoint}
	The nonparametric Bellman equation has a unique fixed-point solution 
	\begin{equation}
	\approxx{V}^*_{\pi}(\svec) \defeq \bm{\varepsilon}^{\intercal}_{\pi}(\svec)\bm{\Lambda}_\pi^{-1}\rvec, \nonumber
	\end{equation} with $\kerres \defeq I-\approxx{\mathbf{P}}_{\pi}^\gamma$ and $\approxx{\mathbf{P}}_\pi^\gamma \defeq \int_{\Sset} \phivec_\gamma(\svec')\bm{\varepsilon}_{\pi}^{\intercal}(\svec') \de \svec' $, where $\bm{\Lambda}_{\pi}$ is always invertible since $\approxx{\mathbf{P}}^{\pi, \gamma}$ is a strictly sub-stochastic matrix (Frobenius' Theorem). The statement is valid also for $n \to \infty$, provided bounded $R$.
\end{theorem}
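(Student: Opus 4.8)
The plan is to collapse the functional fixed-point equation~\eqref{eq:formalnpbe} into a finite linear system and then invert it using the non-negativity and sub-stochasticity of $\approxx{\mathbf{P}}_\pi^\gamma$. First I would observe that in~\eqref{eq:formalnpbe} the bracketed term $\rvec + \int_{\Sset}\phivec_\gamma(\svec')\approxx{V}_\pi(\svec')\de\svec'$ does not depend on $\svec$: it is a fixed vector in $\Real^n$, which I call $\qvec$. Hence \emph{any} solution of the NPBE is necessarily of the form $\approxx{V}_\pi(\svec)=\epsvec^\intercal(\svec)\qvec$, the structural observation already anticipated before the theorem statement. Substituting this form back into the definition of $\qvec$ reproduces exactly~\eqref{eq:npbeworesp}, i.e.\ $\qvec=\rvec+\approxx{\mathbf{P}}_\pi^\gamma\qvec$, so that solving the infinite-dimensional problem reduces to solving the $n\times n$ linear system $\kerres\qvec=\rvec$.

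The core of the argument is then to show that $\kerres=I-\approxx{\mathbf{P}}_\pi^\gamma$ is invertible, which I would do by bounding the row sums of $\approxx{\mathbf{P}}_\pi^\gamma$. Its $(i,j)$ entry is $\gamma_i\int_{\Sset}\phi_i(\svec')\varepsilon^\pi_j(\svec')\de\svec'$, which is non-negative because the kernels, the responsibilities, and the sampled discounts are all non-negative. Summing over $j$ and invoking the two normalization facts---that $\int_{\Sset}\phi_i(\svec')\de\svec'=1$ since $\phi$ is a normalized kernel, and that $\sum_j\varepsilon^\pi_j(\svec')=1$ for every $\svec'$ because the Nadaraya--Watson weights normalize and $\pi_\theta$ integrates to one---the $i$-th row sum collapses to exactly $\gamma_i$. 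By the standing assumption $\gamma_i\le\gamma_c<1$, so $\|\approxx{\mathbf{P}}_\pi^\gamma\|_\infty=\max_i\gamma_i\le\gamma_c<1$, making $\approxx{\mathbf{P}}_\pi^\gamma$ strictly sub-stochastic.

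With the induced infinity-norm (equivalently, by Perron--Frobenius, the spectral radius of a non-negative matrix) bounded by $\gamma_c<1$, the eigenvalue $1$ cannot occur, so $\kerres$ is invertible and the unique solution is $\qvec=\kerres^{-1}\rvec$. Uniqueness of the value function follows immediately: every solution must take the form $\epsvec^\intercal(\svec)\qvec$ and $\qvec$ is now pinned down, giving $\approxx{V}^*_\pi(\svec)=\epsvec^\intercal(\svec)\kerres^{-1}\rvec$. For the $n\to\infty$ claim I would reinterpret $\approxx{\mathbf{P}}_\pi^\gamma$ as a bounded operator on $\ell^\infty$; the same row-sum computation shows its operator norm is still at most $\gamma_c$, so the map $\qvec\mapsto\rvec+\approxx{\mathbf{P}}_\pi^\gamma\qvec$ is a contraction with modulus $\gamma_c$. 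Boundedness of $R$ guarantees $\rvec\in\ell^\infty$, whence Banach's fixed-point theorem---equivalently, convergence of the Neumann series $\sum_{k\ge0}(\approxx{\mathbf{P}}_\pi^\gamma)^k\rvec$---delivers a unique bounded solution. I expect the finite-dimensional linear algebra to be routine; the only delicate point is the $n\to\infty$ case, where one must check that the contraction modulus $\gamma_c$ is uniform in the sample size and that the bounded-reward hypothesis places the problem in the correct Banach space.
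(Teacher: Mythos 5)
Your proposal is correct, and it takes a genuinely different route to uniqueness than the paper does. The paper's own proof (in the appendix) splits the work into two independent pieces: existence is shown by direct algebraic substitution, verifying that $\bm{\varepsilon}_{\pi}^{\intercal}(\svec)\bm{\Lambda}_\pi^{-1}\rvec$ satisfies Equation~\eqref{eq:formalnpbe}, and uniqueness is established separately through two auxiliary propositions --- any solution is bounded by $R_{\max}/(1-\gamma_c)$, and the difference $g$ of two solutions obeys $|g(\svec)|\le\bm{\varepsilon}_{\pi}^{\intercal}(\svec)\int_{\Sset}\phivec_\gamma(\svec')\,|g(\svec')|\,\de\svec'$, which iterated gives $\sup|g|\le\gamma_c^k\cdot 2R_{\max}/(1-\gamma_c)\to 0$. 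You instead collapse the functional equation onto the finite linear system $\bm{\Lambda}_\pi\qvec=\rvec$ by noting that the canonical $\qvec$ attached to any solution (the bracketed constant vector) must itself satisfy \eqref{eq:npbeworesp}; invertibility of $\bm{\Lambda}_\pi$ then delivers existence and uniqueness in one stroke, with no need for the boundedness proposition or the contraction iteration. This is arguably cleaner and more self-contained for finite $n$: it makes explicit the two-way correspondence between NPBE solutions and solutions of the linear system, whereas the paper's main text only records the easy direction (``trivially, every $\qvec$ which satisfies \eqref{eq:npbeworesp} also satisfies \eqref{eq:npbewresp}'') and routes uniqueness through function space instead. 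The sub-stochasticity computation (nonnegativity plus row sums equal to $\gamma_i\le\gamma_c<1$, hence spectral radius below one) is identical in both arguments. For $n\to\infty$ the two approaches reconverge: your Banach fixed-point/Neumann-series argument with uniform modulus $\gamma_c$ is exactly the contraction mechanism the paper relies on when it asserts that its Propositions remain valid in the limit, the only cosmetic difference being that the paper phrases the limit as an integral operator over the sampling distribution $\beta$ on bounded functions, while you phrase it as an infinite matrix acting on $\ell^\infty$; bounded rewards play the same role in both, guaranteeing the fixed point lives in the right space.
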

\reviewD{
The transition matrix $\approxx{\mathbf{P}}_{\pi}^\gamma$ is strictly sub-stochastic since each row $\approxx{\mathbf{P}}_{\pi, i}^\gamma = \gamma_i \int \phi_i(\state')\bm{\varepsilon}_{\pi}^{\intercal}(\svec') \de \svec'$ is composed by the convolution between $\phi_i$, which by definition integrates to one, and $0 \leq \bm{\varepsilon}_{\pi}^{\intercal}(\svec') \leq 1$, as can be seen in Definition~\ref{definition:npbe}.}
Proof of Theorem~\ref{theorem:fixedpoint} is provided in the supplementary material.
\subsection{Nonparametric Gradient Estimation}
\label{sec:gradient-estimation}
With the closed-form solution of $\approxx{V}^*_{\pi}(\svec)$ from Theorem~\ref{theorem:fixedpoint}, it is possible to compute the analytical gradient of \reviewA{$\hat{J}_{\pi}$} w.r.t. the policy parameters
\begin{align}
\grad{\theta} \approxx{V}_{\pi}^*(\svec) &=  \bigg( \pargrad{\theta}\bm{\varepsilon}_{\pi}^{\intercal}(\svec)\bigg)\kerres^{-1} \mathbf{r}  + \bm{\varepsilon}_{\pi}^{\intercal}(\svec) \pargrad{\theta} \kerres^{-1} \mathbf{r}  \nonumber \\
&=  \underbrace{\bigg( \pargrad{\theta}\bm{\varepsilon}_{\pi}^{\intercal}(\svec)\bigg)\kerres^{-1} \mathbf{r}}_{\text{A}} \nonumber \\ &  \quad \quad + \underbrace{  \bm{\varepsilon}_{\pi}^{\intercal}(\svec)\kerres^{-1}\bigg(\pargrad{\theta} \approxx{\mathbf{P}}^\gamma_{\pi}\bigg)\kerres^{-1} \mathbf{r}}_{\text{B}}. \label{equation:kergradv}
\end{align}
Substituting the result of Equation~\eqref{equation:kergradv} into the return specified in Definition~\ref{definition:objective}, introducing $\bm{\varepsilon}_{\pi,0}^{\intercal} \defeq \int\mu_0(\svec)\bm{\varepsilon}_{\pi}^{\intercal}(\svec) \de \svec$, $\mathbf{q}_{\pi} = \bm{\Lambda}^{-1}_\pi \rvec$, and $\bm{\mu}_{\pi} = \bm{\Lambda}_{\pi}^{-\intercal} \bm{\varepsilon}_{\pi,0}$, we obtain
\begin{equation}
\grad{\theta} \approxx{J}_{\pi} = \bigg( \pargrad{\theta}\bm{\varepsilon}_{\pi,0}^{\intercal}\bigg)\mathbf{q}_{\pi}  + \bm{\mu}_{\pi}^{\intercal}\bigg(\pargrad{\theta} \approxx{\mathbf{P}}_{\pi}^\gamma \bigg)\mathbf{q}_{\pi}, \label{equation:algorithmicgradient}
\end{equation}
where $\mathbf{q}_{\pi}$ and $\bm{\mu}_{\pi}$ can be estimated via conjugate gradient to avoid the inversion of $\bm{\Lambda}_\pi$.
It is interesting to notice that \eqref{equation:algorithmicgradient} is closely related to the Policy Gradient Theorem \cite{sutton_policy_2000},
\begin{equation}
\nabla_\theta\hat{J}_\pi =\! \int\displaylimits_{\Sset \times \Aset} \underbrace{\left(\mu_0(\state) +  \bm{\phi}_\gamma^\intercal(\state)\bm{\mu}_\pi\right)}_\text{Stationary distribution}\underbrace{\bm{\varepsilon}^\intercal(\state, \action)\bm{q}_\pi}_{\hat{Q}_\pi(\state, \action)} \nabla_\theta \pi_\theta(\action | \state) \de \action\!\de \state \nonumber
\end{equation} 
and to the Deterministic Policy Gradient Theorem (in its onpolicy formulation) \cite{silver_deterministic_2014},
\begin{equation}
\nabla_\theta \hat{J}_\pi= \int_\Sset \underbrace{\left(\mu_0(\state) + \bm{\phi}_\gamma^\intercal(\state) \bm{\mu}_\pi\right)}_\text{Stationary distribution} \underbrace{\nabla_\action\bm{\varepsilon}^\intercal(\state, \action)\bm{q}_\pi}_{\nabla_\action \hat{Q}_\pi(\state, \action)}  \nabla_\theta \pi_\theta(\state) \de \state . \nonumber 
\end{equation}
In a later analysis, we will consider the state-action surrogate return $J_S^\pi(\state, \action) = (\mu_0(\state) + \Phi_\gamma^\intercal(\state) \bm{\mu}_\pi)\bm{\varepsilon}^\intercal(\state, \action)\bm{q}_\pi$.
The terms $\text{A}$ and $\text{B}$ in Equation~\eqref{equation:kergradv} correspond to the terms in Equation~\eqref{equation:qgrad}. In contrast to semi-gradient actor-critic methods, where the gradient bias is affected by both the critic bias and the semi-gradient approximation \cite{imani_off-policy_2018,fujimoto_off-policy_2019}, our estimate is the \textsl{full gradient} and the only source of bias is introduced by the estimation of $\approxx{V}_{\pi}$, which we analyze in Section~\ref{sec:nonparaestimation}.
The term $\bm{\mu}_\pi$ can be interpreted as the support of the state-distribution as it satisfies  $\bm{\mu}^{\intercal}_\pi =\bm{\varepsilon}_{\pi,0}^{\intercal} + \bm{\mu}_\pi^{\intercal} {\approxx{\mathbf{P}}^\gamma_{\pi}}$. In Section~\ref{section:experiments}, more specifically in Figure~\ref{figure:muv}, we empirically show that $\varepsilon_{\pi}^{\intercal}(\svec)\bm{\mu}_\pi$ provides an estimate of the state distribution over the whole state-space.
The quantities $\bm{\varepsilon}_{\pi,0}^{\intercal}$ and $\approxx{\mathbf{P}}^{\pi}_{i,j}$ are estimated via  Monte-Carlo sampling, which is unbiased but computationally demanding, or using other techniques such as unscented transform or numerical quadrature. The matrix $\approxx{\mathbf{P}}_{\pi}^\gamma$ is of dimension $n \times n$, which can be memory-demanding. In practice, we notice that the matrix is often sparse. By taking advantage of conjugate gradient and sparsification we are able to achieve computational complexity of $\mathcal{O}(n^2)$ per policy update and memory complexity of $\mathcal{O}(n)$. Further details on the computational and memory complexity can be found in the supplementary material. A schematic of our implementation is summarized in Algorithm~\ref{alg:kbpgalg}.
\begin{algorithm}[t]
	\caption{Nonparametric Off-Policy Policy Gradient}
	\label{alg:kbpgalg}
	\begin{algorithmic}
		\STATE \textbf{input:} dataset $\{\state_i, \action_i, r_i,\nextstate_i, \gamma_i \}_{i=1}^{n}$ where $\pi_{\theta}$ indicates the policy to optimize and $\psi, \phi, \varphi$ the kernels respectively for state, action and next-state.
		\WHILE{ \text{not converged}}
		\STATE Compute $\bm{\varepsilon}^{\intercal}_{\pi}(\svec)$ as in Definition~\ref{definition:npbe} and  $\bm{\varepsilon}_{\pi,0}^{\intercal} \defeq \int\mu_0(\svec)\bm{\varepsilon}_{\pi}^{\intercal}(\svec) \de \svec$.
		\STATE Estimate $\approxx{\mathbf{P}}_{\pi}^\gamma$ as defined in Theorem~\ref{theorem:fixedpoint} using MC ($\phi(\svec)$ is a distribution).
		\STATE Solve $\rvec = \bm{\Lambda}_\pi\mathbf{q}_{\pi}$ and $\bm{\varepsilon}_{\pi,0} = \bm{\Lambda}_{\pi}^{\intercal}\bm{\mu}_{\pi}$ for $\mathbf{q}_{\pi}$ and $\bm{\mu}_{\pi}$ using conjugate gradient.
		\STATE Update $\theta$ using Equation~\eqref{equation:algorithmicgradient}.
		\ENDWHILE
	\end{algorithmic}
\end{algorithm}

\subsection{A theoretical Analysis}
\label{sec:nonparaestimation}
Nonparametric estimates of the transition dynamics and reward enjoy favorable properties for an off-policy learning setting. A well-known asymptotic behavior of the Nadaraya-Watson kernel regression,
\begin{align}
& \EV \left[\lim_{n\to \infty}\approxx{f}_n(x) \right] - f(x)  \approx \nonumber\\
& \quad\quad\quad  h^2_n \bigg(\frac{1}{2}f''(x) + \frac{f'(x){ \beta'(x)}}{ \beta(x)}\bigg) \int u^2 K(u)\de u, \nonumber 
\end{align}
shows how the bias is related to the regression function $f(x)$, as well as to the samples' distribution $\beta(x)$  \cite{fan_design-adaptive_1992,wasserman_all_2006}. 
However, this asymptotic behavior is valid only for infinitesimal bandwidth, infinite samples ($h \to 0, nh \to \infty$) and requires the knowledge of the regression function and of the sampling distribution.

In a recent work, we propose an upper bound of the bias that is also valid for finite bandwidths \cite{tosatto_upper_2020}. 
We show under some Lipschitz conditions that the bound of the Nadaraya-Watson kernel regression bias does not depend on the samples' distribution, which is a desirable property in off-policy scenarios. 
The analysis is extended to multidimensional input space. For clarity of exposition, we report the main result in its simplest formulation, and later use it to infer the bound of the NPBE bias.

\begin{theorem}{}
	\label{theorem:biasnadaraya}
	Let $f\!:\!\mathbb{R}^d\!\to\!\mathbb{R}$ be a Lipschitz continuous function with constant $L_{f}$.
	Assume a set $\{\xvec_i, y_i\}_{i=1}^n$ of i.i.d. samples from a log-Lipschitz distribution $\beta$ with a Lipschitz constant $L_{\beta}$. Assume $y_i = f(\xvec_i) + \epsilon_i$, where $f\!:\!\mathbb{R}^d\!\to\!\mathbb{R}$ and $\epsilon_i$ is i.i.d. and zero-mean. The bias of the Nadaraya-Watson kernel regression with Gaussian kernels in the limit of infinite samples $n\to \infty$ is bounded by
	\begin{equation*}
	\begin{split}
	& \left|\EV \left[\lim_{n\to \infty}\approxx{f}_n(\xvec) \right] - f(\xvec) \right| \leq \\
	& \quad \quad \quad \quad \frac{L_f \sum\limits_{k=1}^d \hvec_k \left(\prod\limits_{i\neq k}^d \chi_i\right)  \left( \frac{1}{\sqrt{2 \pi} } + \frac{L_{\beta}\hvec_k}{2} \chi_k \right) }{ \prod\limits_{i=1}^d \e{\frac{L_{\beta}^2 h_i^2}{2}}\left(1 - \erf\left(\frac{\hvec_i L_{\beta}}{\sqrt{2}} \right) \right)},\\
	\end{split}
	\end{equation*}
	where \begin{equation}
	\chi_i = e^{\frac{L_{\beta}^2\hvec^2_i}{2}}\bigg(1+\erf\bigg(\frac{\hvec_iL_\beta}{\sqrt{2}} \bigg)\bigg) \nonumber ,
	\end{equation}
	$\hvec > 0 \in \mathop{R}^d$ is the vector of bandwidths and $\erf{}$ is the error function.
\end{theorem}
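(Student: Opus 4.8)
The plan is to reduce the asymptotic bias to a deterministic ratio of Gaussian integrals and then bound that ratio coordinate by coordinate. First I would take the limit $n\to\infty$ in the estimator $\approxx{f}_n(\xvec)=\sum_i y_i\kappa(\xvec,\xvec_i)\big/\sum_i\kappa(\xvec,\xvec_i)$. Writing numerator and denominator as empirical averages and invoking the law of large numbers, together with the fact that $\epsilon_i$ is zero-mean and independent of $\xvec_i$ (so the noise averages out), the limit becomes deterministic almost surely, and
\[
\EV\left[\lim_{n\to\infty}\approxx{f}_n(\xvec)\right]=\frac{\int f(\xvec')\kappa(\xvec,\xvec')\beta(\xvec')\de\xvec'}{\int \kappa(\xvec,\xvec')\beta(\xvec')\de\xvec'}.
\]
The kernel normalization cancels between the two integrals, so I may use the normalized Gaussian density freely. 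Subtracting $f(\xvec)$ and folding it into the numerator expresses the bias as a kernel-weighted average of increments,
\[
\EV\left[\lim_{n\to\infty}\approxx{f}_n(\xvec)\right]-f(\xvec)=\frac{\int \big(f(\xvec')-f(\xvec)\big)\kappa(\xvec,\xvec')\beta(\xvec')\de\xvec'}{\int \kappa(\xvec,\xvec')\beta(\xvec')\de\xvec'}.
\]

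The central idea is to remove the dependence on $\beta$ using the log-Lipschitz assumption, which is what yields the distribution-free bound. Taking absolute values and applying $|f(\xvec')-f(\xvec)|\le L_f\|\xvec'-\xvec\|_1$ (the $\ell_1$ norm upper-bounds the Euclidean one and lets everything factorize), I would bound $\beta(\xvec')\le\beta(\xvec)\,e^{L_\beta\|\xvec'-\xvec\|_1}$ in the numerator and $\beta(\xvec')\ge\beta(\xvec)\,e^{-L_\beta\|\xvec'-\xvec\|_1}$ in the denominator. The shared factor $\beta(\xvec)$ then cancels, leaving
\[
\left|\,\text{bias}\,\right|\le \frac{L_f\int \|\xvec'-\xvec\|_1\,\kappa(\xvec,\xvec')\,e^{L_\beta\|\xvec'-\xvec\|_1}\de\xvec'}{\int \kappa(\xvec,\xvec')\,e^{-L_\beta\|\xvec'-\xvec\|_1}\de\xvec'}.
\]

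Next I would substitute $u_j=x_j'-x_j$ coordinate-wise. Since the Gaussian product kernel, the weights $e^{\pm L_\beta\|\cdot\|_1}=\prod_j e^{\pm L_\beta|u_j|}$, and the norm $\sum_j|u_j|$ all factorize, the denominator becomes $\prod_j C_j$ and the numerator becomes $\sum_k B_k\prod_{j\ne k}A_j$, where
\[
A_j=\int \tfrac{1}{\sqrt{2\pi}\hvec_j}e^{-u^2/(2\hvec_j^2)}e^{L_\beta|u|}\de u,\quad C_j=\int \tfrac{1}{\sqrt{2\pi}\hvec_j}e^{-u^2/(2\hvec_j^2)}e^{-L_\beta|u|}\de u,
\]
and $B_k$ is $A_k$ with an extra factor $|u|$ inside the integral. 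Each one-dimensional integral I would evaluate in closed form by completing the square and splitting at $u=0$, which produces error functions; this gives $A_j=\chi_j$, $C_j=e^{L_\beta^2\hvec_j^2/2}\big(1-\erf(\hvec_j L_\beta/\sqrt{2})\big)$, and a $B_k$ proportional to $\hvec_k\big(\tfrac{1}{\sqrt{2\pi}}+\tfrac{L_\beta\hvec_k}{2}\chi_k\big)$, where the $\tfrac{1}{\sqrt{2\pi}}$ term comes from the primitive of $u\,e^{-u^2/(2\hvec_k^2)}$ and the second term from the truncated-Gaussian first moment. Reassembling the factored pieces reproduces, up to the multiplicative constants, the claimed bound.

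The hard part will be the careful evaluation of the weighted integrals, in particular the $|u|$-weighted $B_k$: it is a first moment of a half-truncated shifted Gaussian and is the common source of both the $1/\sqrt{2\pi}$ and the error-function contributions, so the bookkeeping of the completed-square shift $L_\beta\hvec_k^2$ must be done precisely. A secondary technical point is justifying the interchange of limit and expectation and the almost-sure convergence of the ratio (kernel-density consistency), and checking that the exponential weights $e^{\pm L_\beta|u|}$ never break integrability — which holds because the quadratic Gaussian exponent always dominates the linear term for large $|u|$.
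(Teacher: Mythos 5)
Your overall route---pass to the infinite-sample limit via the law of large numbers, write the bias as a kernel-weighted average of increments, eliminate $\beta$ with the log-Lipschitz tilts $e^{\pm L_\beta\|\cdot\|_1}$, factorize coordinate-wise, and evaluate one-dimensional Gaussian-exponential integrals---is the natural one, and it is essentially the route of the reference on which the paper relies: the paper itself contains no proof of Theorem~\ref{theorem:biasnadaraya}, it imports the result from \cite{tosatto_upper_2020} and merely re-uses the bound as $\text{A}_\text{Bias}$ inside the supplementary proof of Theorem~\ref{theorem:ultimate}. Your technical caveats (interchange of limit and expectation, integrability of the tilted Gaussians) are indeed minor, and your closed forms $A_j=\chi_j$ and $C_j=e^{L_\beta^2\hvec_j^2/2}\left(1-\erf\left(\hvec_jL_\beta/\sqrt{2}\right)\right)$ are correct.

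The genuine gap is the constant you defer with ``up to the multiplicative constants'': carried out exactly, your derivation does not give the stated bound but \emph{twice} it. The $|u|$-weighted integral is
\begin{equation*}
B_k=\int_{\mathbb{R}}|u|\,\frac{1}{\sqrt{2\pi}\,\hvec_k}\,e^{-u^2/(2\hvec_k^2)}\,e^{L_\beta|u|}\de u
=2\,\hvec_k\left(\frac{1}{\sqrt{2\pi}}+\frac{L_\beta\hvec_k}{2}\chi_k\right),
\end{equation*}
because the even integrand doubles the half-line integral; the factor $\hvec_k\left(\frac{1}{\sqrt{2\pi}}+\frac{L_\beta\hvec_k}{2}\chi_k\right)$ appearing in the theorem is exactly the one-sided integral $\int_0^\infty u\,\frac{1}{\sqrt{2\pi}\hvec_k}e^{-u^2/(2\hvec_k^2)}e^{L_\beta u}\de u$, which your (correct) two-sided bound cannot produce. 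Since matching this constant is the entire content of the statement, the discrepancy is not cosmetic, and nothing in your outline removes it. Worse, it cannot be removed by any refinement: in $d=1$ take $f(\xvec')=L_f|\xvec'-\xvec|$ and $\beta(\xvec')\propto e^{-\epsilon|\xvec'-\xvec|}$, which is log-Lipschitz with constant $\epsilon$; as $\epsilon\to 0$ the asymptotic bias tends to $L_f\hvec\sqrt{2/\pi}$, twice the value $L_f\hvec/\sqrt{2\pi}$ to which the stated bound tends. So what your argument actually establishes (legitimately) is the inequality with an extra factor of $2$ in the numerator, while the stated constant appears unreachable by any correct argument; you should either prove and state the factor-$2$ version explicitly, flagging the discrepancy with the theorem and with \cite{tosatto_upper_2020}, or identify where that reference loses the factor. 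As submitted, the proposal does not establish the theorem as stated.
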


	Building on Theorem~\ref{theorem:biasnadaraya} we show that the solution of the NPBE is consistent with the solution of the true Bellman equation. Moreover, although the bound is not affected directly by $\beta(\state)$, a smoother sample distribution $\beta(\state)$ plays favorably in the bias term (a low $L_{\beta}$ is preferred).
\begin{theorem}{}
	\label{theorem:ultimate}
	Consider an arbitrary MDP $\mathcal{M}$ with a transition density $p$ and a stochastic reward function $R(\svec, \avec) = r(\svec,\avec) + \epsilon_{\svec, \avec}$, where $r(\svec,\avec)$ is a Lipschitz continuous function with $L_R$ constant and $\epsilon_{\svec, \avec}$ denotes zero-mean noise. Assume $|R(\svec, \avec)|\!\leq\!R_{\text{max}}$ and a dataset $D_n$ sampled from a log-Lipschitz distribution $\beta$ defined over the state-action space with Lipschitz constant $L_{\beta}$. Let $V_D$ be the unique solution of a nonparametric Bellman equation with Gaussian kernels $\psi,\varphi,\phi$ with positive bandwidths $\hvec_{\psi},\hvec_{\varphi}, \hvec_{\phi}$ defined over the dataset $\lim_{n\to\infty} D_n$. 
	Assume $V_D$ to be Lipschitz continuous with constant $L_V$. The bias of such estimator is bounded by
	\begin{equation}
	\big|\overline{V}(\svec) - V^*(\svec)\big| \leq  \frac{1}{1-\gamma_c} \bigg(\text{A}_\text{Bias} + \gamma_c L_{V} \sum_{k=1}^{d_s}\frac{h_{\phi,k}}{\sqrt{2 \pi}} \bigg),
	\end{equation}
	where $\overline{V}(\svec) = \EV_{D}[V_D(\svec)]$, $\text{A}_\text{Bias}$ is the bound of the bias provided in Theorem~\ref{theorem:biasnadaraya} with $L_f\!=\!L_R$, $\hvec\!=\![\hvec_{\psi},\hvec_{\varphi}]$, $d\!=\!d_s\!+\!d_a$ and $V^*(\state)$ is the fixed point of the ordinary Bellman equation. \footnote{Complete proofs of the theorems and precise definitions can be found in the supplementary material.}
\end{theorem}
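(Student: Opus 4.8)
The plan is to cast both the exact and the nonparametric Bellman updates as operators and to compare their fixed points through a contraction argument, so that the one-step estimation errors are amplified only by the geometric factor $1/(1-\gamma_c)$. Let $\mathcal{T}$ denote the true Bellman operator from Definition~\ref{definition:objective}, and let $\overline{\mathcal{T}}$ denote the operator obtained by replacing $\approxx{r}$ and $\approxx{p}_\gamma$ with their dataset-expectations. Because the Nadaraya--Watson and kernel-density estimators are consistent as $n\to\infty$, their variance vanishes, so I would first argue that $\overline{V}=\EV_D[V_D]$ is precisely the fixed point of $\overline{\mathcal{T}}$; this is the step that legitimizes interchanging the expectation $\EV_D[\cdot]$ with the fixed-point relation $V_D=\hat{\mathcal{T}}V_D$ in the infinite-sample limit. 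Both $\mathcal{T}$ and $\overline{\mathcal{T}}$ are $\gamma_c$-contractions in $\|\cdot\|_\infty$ (using $\gamma\le\gamma_c<1$, exactly as in the sub-stochasticity argument behind Theorem~\ref{theorem:fixedpoint}).

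With that in place, I would decompose
\[
\overline{V} - V^* = \big(\overline{\mathcal{T}}\,\overline{V} - \mathcal{T}\,\overline{V}\big) + \big(\mathcal{T}\,\overline{V} - \mathcal{T}V^*\big),
\]
bound the second summand by $\gamma_c\|\overline{V}-V^*\|_\infty$ via the contraction of $\mathcal{T}$, and solve the resulting inequality to get $\|\overline{V}-V^*\|_\infty\le(1-\gamma_c)^{-1}\|\overline{\mathcal{T}}\,\overline{V}-\mathcal{T}\,\overline{V}\|_\infty$. Evaluating the residual on $\overline{V}$ rather than on $V^*$ is the deliberate choice that makes the smoothing term carry the Lipschitz constant $L_V$ of the value estimate assumed in the statement.

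The core of the proof is then to bound the per-step residual $\overline{\mathcal{T}}\,\overline{V}-\mathcal{T}\,\overline{V}$ pointwise, which I would split into a reward part and a transition part. The reward part is the expected Nadaraya--Watson bias of estimating $r$ over the $(\state,\action)$ space of dimension $d_s+d_a$ with bandwidths $[\hvec_\psi,\hvec_\varphi]$; Theorem~\ref{theorem:biasnadaraya} with $L_f=L_R$ delivers exactly the $\text{A}_\text{Bias}$ term (the policy average over $\action$ is a convex combination and cannot enlarge the bound). For the transition part I would use that $\int\overline{V}(\nextstate)\approxx{p}_\gamma(\nextstate|\state,\action)\de\nextstate$ is a Nadaraya--Watson regression of the smoothed targets $\gamma_i(\overline{V}*\phi)(\nextstate_i)$, and write its bias as the sum of (i) the kernel-smoothing bias $\gamma\,\EV_{\nextstate}[(\overline{V}*\phi)(\nextstate)-\overline{V}(\nextstate)]$, which by the Lipschitz property of $\overline{V}$ and the first absolute moment of the Gaussian kernel contributes $\gamma_c L_V\sum_{k=1}^{d_s}h_{\phi,k}/\sqrt{2\pi}$, and (ii) a Nadaraya--Watson bias of the one-step look-ahead, again controlled by Theorem~\ref{theorem:biasnadaraya} under the stated identification and absorbed into $\text{A}_\text{Bias}$. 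Collecting the $1/(1-\gamma_c)$ factor with these two contributions yields the claimed bound.

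The main obstacle I anticipate is twofold. First, the rigorous justification that $\EV_D[V_D]$ is the fixed point of the mean operator $\overline{\mathcal{T}}$ requires controlling the correlation between the random operator $\hat{\mathcal{T}}$ and its random fixed point $V_D$; this is clean only in the $n\to\infty$ limit where the estimator variance vanishes, and one must invoke the boundedness $|R|\le R_{\text{max}}$ (hence $\|V_D\|_\infty\le R_{\text{max}}/(1-\gamma_c)$) to secure dominated convergence and a well-defined limit. Second, the transition term is genuinely a composition of two distinct error sources --- the convolution of $\overline{V}$ with $\phi$ and the Nadaraya--Watson smoothing in $(\state,\action)$ --- so keeping the Lipschitz bookkeeping consistent (the value-dependent smoothing bias appearing with $L_V$ and $h_\phi$, while the regression bias appears with $L_R$ and $[\hvec_\psi,\hvec_\varphi]$) is the delicate accounting step that determines whether the two contributions cleanly separate into the stated form.
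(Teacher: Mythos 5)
Your overall skeleton is the same as the paper's: compare the two fixed-point equations, bound the reward part by Theorem~\ref{theorem:biasnadaraya} with $L_f=L_R$, extract the Gaussian-smoothing bias $\gamma_c L_V\sum_{k}h_{\phi,k}/\sqrt{2\pi}$ from the transition part using the Lipschitz property of $\overline{V}$, and let the contraction (in the paper: an iterated recursion, which is equivalent) produce the $1/(1-\gamma_c)$ factor. Your identification of $\overline{V}=\EV_D[V_D]$ with the fixed point of the mean operator also matches the paper, which obtains it by noting that in the limit $n\to\infty$ the nonparametric Bellman equation becomes a deterministic integral equation, so $V_D$ is no longer random.

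The gap is in your item (ii). You decompose the transition residual into the convolution bias plus ``a Nadaraya--Watson bias of the one-step look-ahead, again controlled by Theorem~\ref{theorem:biasnadaraya} under the stated identification and absorbed into $\text{A}_\text{Bias}$.'' This cannot work as stated: $\text{A}_\text{Bias}$ is, by definition, the bound of Theorem~\ref{theorem:biasnadaraya} instantiated with $L_f=L_R$, i.e., it quantifies the regression bias of the \emph{reward} only. The Nadaraya--Watson bias of the look-ahead $W(\state,\action)=\gamma\int p(\svec'|\state,\action)(\overline{V}*\phi)(\svec')\de\svec'$ would require the Lipschitz constant of $W$ in $(\state,\action)$, which depends on the smoothness of the transition kernel, is not among the theorem's assumptions, and in general has nothing to do with $L_R$; such a term cannot be folded into $\text{A}_\text{Bias}$, and keeping it honestly would produce a bound with an extra additive summand that does not appear in the statement. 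The paper's proof never applies Theorem~\ref{theorem:biasnadaraya} a second time: it bounds the entire transition residual by first taking a supremum over the next state, $\gamma_c\max_{\svec'}\big|\int\overline{V}(\zvec')\phi(\zvec',\svec')\de\zvec'-V^*(\svec')\big|$, so that the quantity being kernel-averaged no longer depends on the sample variables $(\zvec,\bvec)$; the Nadaraya--Watson weights then integrate to one and disappear, leaving exactly the convolution bias (the $L_V$ term) plus the recursion term $\gamma_c\max_{\svec'}|\overline{V}(\svec')-V^*(\svec')|$, after which the recursion is solved. To close your argument you must replace the ``absorb into $\text{A}_\text{Bias}$'' step with this order of operations --- bound in sup norm first, kernel-average second --- which is precisely what makes the transition part contribute no additional regression-bias term.
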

Theorem~\ref{theorem:ultimate} shows that the value function provided by Theorem~\ref{theorem:fixedpoint} is consistent when the bandwidth approaches infinitesimal values. Moreover, it is interesting to notice that the error can be decomposed in $A_\text{Bias}$, which is the bias component dependent on the reward's approximation, and the remaining term that depends on the smoothness of the value function and the bandwidth of $\phi$, which can be read as the error of the transition's model.

The bound shows that smoother reward functions, state-transitions and sample distributions play favorably against the estimation bias. 
\reviewD{Notice that the bias persist even in the support points. This issue is known in Nadaraya-Watson kernel regression. However, the bias can be controlled and lowered by reducing the bandwidth.
The i.i.d. assumption required by Theorem~\ref{theorem:ultimate} is not restrictive, as even if the samples are collected by a sequential process that interacts with the MPD, the sample distribution will eventually converge to the stationary distribution of the MDP. Furthermore, our algorithm considers all the samples simoultaneously, therefore, some inter-correlations between samples does not affect the estimation.}
\begin{figure}[H]
	\centering
	\includegraphics[width=0.95 \columnwidth]{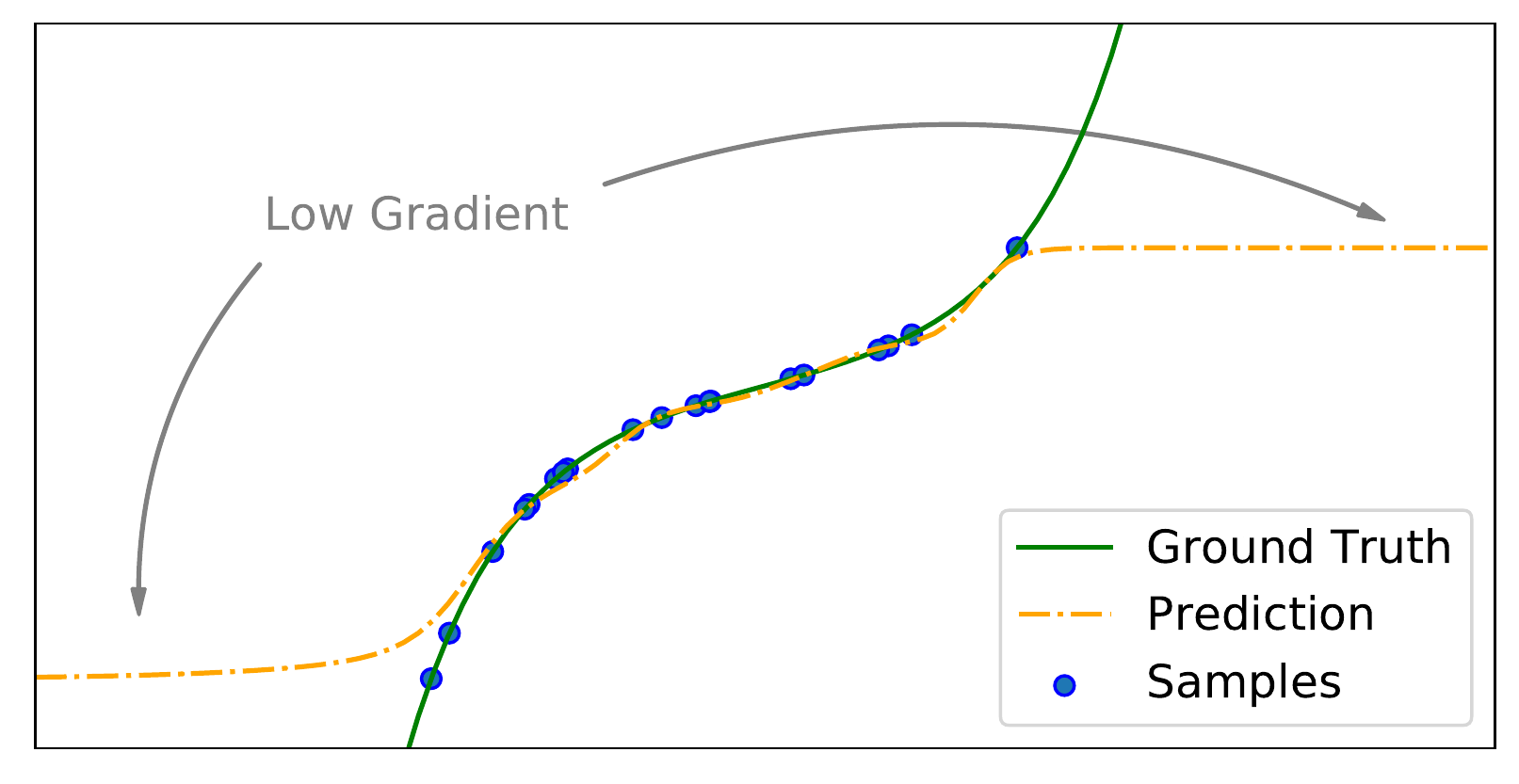}
	\caption{The classic effect (known as boundary-bias) of the Nadaraya-Watson regression predicting a constant function in low-density regions is beneficial in our case, as it prevents the policy from moving in those areas as the gradient gets close to zero. \label{fig:block}}
\end{figure}

\reviewB{
Beyond the bias analysis in the limit of infinite data is important to understand the variance and the bias for finite data.
We provide a finite sample analysis in the empirical section. It is known that the lowest mean squared error is obtained by setting a higher bandwidth in presence of scarce data, and decreasing it with more data. 
In our implementation of NOPG, we select the bandwidth using cross validation.}

\subsubsection{A Trust Region}
\label{set:trust-region}
Very commonly,  in order to prevent harmful policy optimization, the policy is constrained to stay close to the data \cite{peters_relative_2010}, to avoid taking large steps \cite{schulman_trust_2015, schulman_proximal_2017} or to circumvent large variance in the estimation \cite{metelli_policy_2018,chua_deep_2018}. These techniques, which keep the policy updates in a trusted-region, prevent incorrect and dangerous estimates of the gradient.
Even if we do not include any explicit constraint of this kind, the Nadaraya-Watson kernel regression automatically discourages policy improvements towards low-data areas. 
In fact, as depicted in Figure~\ref{fig:block}, the Nadaraya-Watson kernel regression, tends to predict a constant function in low density regions. Usually, this characteristic is regarded as an issue, as it causes the so-called boundary-bias. In our case, this effect turns out to be beneficial, as it constrains the policy to stay close to the samples, where the model is more correct.

\section{Related Work}
\label{sec:related_work}
\reviewAll{Off-policy and offline policy optimization became increasingly more popular in the recent years and have been explored in many different flavors. Batch reinforcement learning, in the model-free view, has been elaborated both as a value-based and a policy gradient technique (we include actor-critic in this last class). Recently, offline reinforcement learning has been proposed also in the mode-based formulation. In this section, we aim to detail both the advantages and disadvantages of the proposed approaches. As we will see, our solution shares some advantages typical of the model-free policy gradient algorithms while using an approximated model of the dynamics and reward.}
\subsection{Model Free}
\reviewAll{The model-free offline formulation aims to improve the policy based purely on the samples collected in the dataset, without generating synthetic samples from an approximated model. We can divide this broad category in two main families: the value-based approaches, and the policy gradients.}
\subsubsection{Value Based}
\reviewAll{Value-based techniques are constructed on the approximate dynamic theory, leveraging on the policy-improvement theorem and on the contraction property of the Bellman operator. Examples of offline approximate dynamic programming algorithms are Fitted Q-Iteration (FQI) and Neural FQI (NFQI) \cite{ernst_tree-based_2005, riedmiller_neural_2005}. The $\max$ operator used in the optimal Bellman operator, usually restricts the applicability to discrete action spaces (few exceptions, \cite{antos_fitted_2007}). Furthermore, the projected error can prevent the convergence to a satisfying solution \cite{baird_residual_1995}. These algorithms usually suffer also of the delusion bias \cite{lu_non-delusional_2018}.
Despite these issues, there has been a recent revival of value-based techniques in the context of offline reinforcement learning. Batch-Constrained Q-learning (BCQ) \cite{fujimoto_off-policy_2019}, at the best of our knowledge, firstly defined the extrapolation error, which is partially caused by optimizing unseen (out of distribution, OOD) state-action pairs in the dataset. }
This source of bias has been extensively studied in subsequent works, both in the model-free and model-based setting.
\reviewA{ Value-based methods using non-parametric kernel-based approaches have also been used \cite{ormoneit_kernel-based_2002,xu_kernel-based_2007,kroemer_kernel-based_2012}.  An interesting discussion in \cite{taylor_kernelized_2009} shows that kernelized least square temporal difference approaches are also approximating the reward and transition models, and therefore they can be considered model-based.}
\subsubsection{Policy Gradient}
\reviewAll{This class of algorithms leverages their theory on the policy gradient theorem \cite{sutton_policy_2000,silver_deterministic_2014}. Examples of policy gradients are REINFORCE \cite{williams_simple_1992}, G(PO)MDP \cite{baxter_infinite-horizon_2001}, natural policy gradient  \cite{kakade_natural_2001}, and Trust-Region Policy Optimization (TRPO) \cite{schulman_trust_2015}. Often, these approaches make use of approximate dynamic programming to estimate the $Q$-function (or related quantities), to obtain lower variance \cite{peters_natural_2008,lillicrap_continuous_2016,schulman_proximal_2017,haarnoja_soft_2018}. The policy gradient theorem, however, defines the policy gradient w.r.t. \textsl{on-policy samples}. To overcome this problem two main techniques have been introduced: semi-gradient approaches, which rely on the omission of one term in the computation of the gradient, introduce an irreducible source of bias, while importance sampling solutions, which are unbiased, suffer from high variance.}
\begin{figure*}[pht]
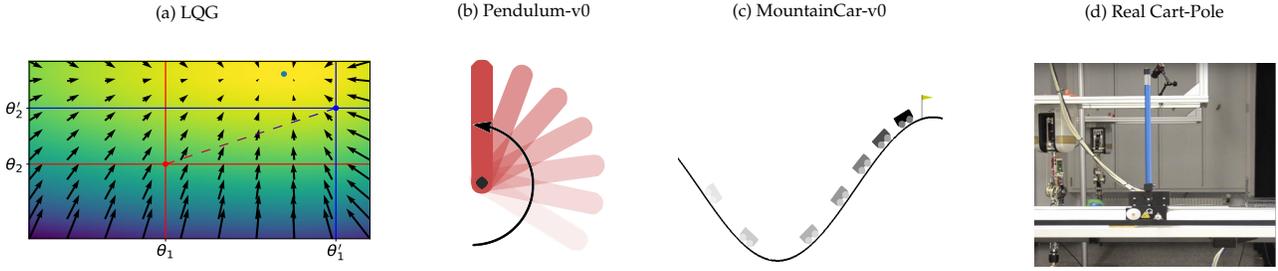

	\centering
	\begin{subfigure}[t]{.6\columnwidth}
		\input{plots/poles/lqg.tikz}
	\end{subfigure}
	\hspace{3pt}
	\begin{subfigure}[t]{0.35\columnwidth}
		\input{plots/poles/pendulum.tikz}
	\end{subfigure}
	\hspace{3pt}
	\begin{subfigure}[t]{.5\columnwidth}
		\input{plots/poles/mountain.tikz}
	\end{subfigure}
	\hspace{3pt}
	\begin{subfigure}[t]{.5\columnwidth}
		\input{plots/poles/real-cart.tikz}
	\end{subfigure}
	\vspace{-0.5cm}
	\caption{Some of the benchmarking tasks. The return's landscape (a) of the LQG problem. In the gradient analysis, we obtain the gradient of the policy with parameters $\theta_1, \theta_2$ by sampling from a policy interpolated with the parameters $\theta_1', \theta_2'$. Sub-figures (b) and (d) depicts the OpenAI environment used. The real system in (c) has been used to evaluate the policy learned to stabilize the cart-pole task.\label{fig:tasks}}
\end{figure*}
\\
\textbf{Semi-Gradient:}
The off-policy policy gradient theorem was the first proposed off-policy actor-critic algorithm \cite{degris_off-policy_2012-1}. Since then, it has been used by the vast majority of state-of-the-art off-policy algorithms \cite{silver_deterministic_2014, lillicrap_continuous_2016, haarnoja_soft_2018}. Nonetheless, it is important to note that this theorem and its successors, introduce two approximations to the original policy gradient theorem \cite{sutton_policy_2000}. First, semi-gradient approaches consider a modified discounted infinite-horizon return objective $\approxx{J}_{\pi} = \int\rho_{\beta}(\svec)V_{\pi}(\svec) \de \state$, where $\rho_{\beta}(\svec)$ is state distribution under the behavioral policy $\pi_{\beta}$. Second, the gradient estimate is modified to be
\begin{align}
\gradt \approxx{J}_{\pi} & = \gradt\int_{\Sset} \rho_{\beta}(\state) V_{\pi}(\state)\de \state \nonumber \\
& = \gradt \int_{\Sset} \rho_{\beta}(\state) \int_{\Aset} \pi_{\theta}(\action|\state) Q_{\pi}(\state, \action)\de \action \de \state  \nonumber                                                 \\
& = \int\displaylimits_{\Sset} \rho_{\beta}(\state) \int\displaylimits_{\Aset} \underbrace{\gradt \pi_{\theta}(\action|\state) Q_{\pi}(\state, \action)}_{\text{A}} \nonumber \\
& \quad \quad + \underbrace{\pi_{\theta}(\action|\state) \gradt Q_{\pi}(\state, \action)}_{\text{B}} \de \action \de \state \label{equation:qgrad} \\
& \approx \int\displaylimits_{\Sset} \rho_{\beta}(\state) \int\displaylimits_{\Aset} \gradt \pi_{\theta}(\action|\state) Q_{\pi}(\state, \action) \de \action\de \state, \nonumber
\end{align}
where the term $\text{B}$ related to the derivative of $Q_{\pi}$ is ignored. The authors provide a proof that this biased gradient, or \textsl{semi-gradient}, still converges to the optimal policy in a tabular setting \cite{degris_off-policy_2012-1, imani_off-policy_2018}. However, further approximation (e.g., given by the critic and by finite sample size), might disallow the convergence to a satisfactory solution.
Although these algorithms work correctly sampling from the replay memory (which discards \textsl{the oldest} samples), they have shown to fail with samples generated via a completely different process \cite{imani_off-policy_2018,fujimoto_off-policy_2019}.
The source of bias introduced in the semi-gradient estimation depends fully on the distribution mismatch, and cannot be recovered by an increment of the dataset size or with a more powerful function approximator. Still, mainly due to its simplicity, semi-gradient estimation has been used in the off-line scenario.
The authors of these works do not tackle the distribution mismatch with its impact to the gradient estimation, but rather consider to prevent the policy taking OOD actions.
In this line of thoughts, Batch-Constrained Q-learning 
(BCQ) \cite{fujimoto_off-policy_2019} introduces a regularization that keeps the policy close to the behavioral policy,
Bootstraping Error Accumulation Reduction (BEAR) \cite{kumar_stabilizing_2019} considers instead a pessimistic estimate of the $Q$ function which penalizes the uncertainty, Behavior Regularized Actor Critic (BRAC) \cite{wu_behavior_2019} proposes instead a generalization of the aforementioned approaches.  
\\
\textbf{Importance-Sampling:}
 One way to obtain an unbiased estimate of the policy gradient in an off-policy scenario is to re-weight every trajectory via importance sampling \cite{meuleau_exploration_2001,shelton_policy_2001, peshkin_learning_2002}. An example of the gradient estimation via G(PO)MDP \cite{baxter_infinite-horizon_2001} with importance sampling is given by
\begin{align}
\nabla_{\theta} J_{\pi} = \mathbb{E}\left[\sum_{t=0}^{T-1} \rho_t \left(\prod_{j=0}^{t-1}\gamma_j\right)r_t \sum_{i=0}^t \nabla_{\theta} \log \pi_{\theta}(\action_i | \state_i) \right] \label{eq:pwis},
\end{align}
where $\rho_t = \prod_{z=0}^{t} \pi_{\theta}(\action_z | \state_z)/\pi_{\beta}(\action_z | \state_z)$.
This technique applies only to stochastic policies and requires the knowledge of the behavioral policy $\pi_\beta$. Moreover, Equation~\eqref{eq:pwis} shows that path-wise importance sampling (PWIS) needs a trajectory-based dataset, since it needs to keep track of the past in the correction term $\rho_t$, hence introducing more restrictions on its applicability.
Additionally, importance sampling suffers from high variance  \cite{owen_monte_2013}. 
Recent works have helped to make PWIS more reliable. For example, \cite{imani_off-policy_2018}, building on the emphatic weighting framework \cite{sutton_emphatic_2016},  proposed a trade-off between PWIS and semi-gradient approaches.  Another possibility consists in restricting the gradient improvement to a safe-region, where the importance sampling does not suffer from too high variance \cite{metelli_policy_2018}. Another interesting line of research is to estimate the importance sampling correction on a state-distribution level instead of on the classic trajectory level \cite{liu_breaking_2018, liu_off-policy_2019, nachum_algaedice:_2019}. We note that despite the nice theoretical properties, all these promising algorithms have been applied on low-dimensional problems, as importance sampling suffers from the curse of dimensionality. 
\\
\subsection{Model-Based}
\reviewAll{
While all model-free offline techniques rely purely on the samples contained in the offline dataset, model-based techniques aim to approximate the transitions from that  dataset. The approximated model is then used to generate new artificial samples.
These approaches do not suffer from the distribution mismatch problem described above, as the samples can be generated on-policy by the model (and therefore, in principle, any online algorithm can  be used). 
This advantage is overshadowed by the disadvantage of having unrealistic trajectories generated by the approximated model.  
For this reason, many recent works focus on discarding unrealistic samples, relying on a pessimistic approach towards uncertainty.
In some sense, we encounter again the problem of quantifying our uncertainty - given the limited information contained in the dataset - and to prevent the policy or the model to use or generate uncertain samples.
In this view, the Probabilistic Ensemble with Trajectory Sampling (PETS) \cite{chua_deep_2018} uses an ensemble of probabilistic models of the dynamics to quantify both the epistemic and the aleatoric uncertainties, and uses model predictive control to act in the real environment.
Model Based Offline Planning (MBOP) \cite{argenson_model-based_2020}  uses a behavioral cloning policy as a prior to the planning algorithm. Plan Online and Learn Offline (POLO) \cite{lowrey2018plan} proposes to learn the value function from a known model of the dynamics and to use model predictive control on the environment. Both Model Based Reinforcement Learning (MOReL) \cite{kidambi_morel_2020} and Model-based Offline Policy Optimization (MOPO) \cite{yu_mopo_2020} learn instead the model to train a parametric policy. MOReL learns a pessimistic version of the real MDP, by partitioning the state-action space in known and unknown and penalizing the policies visiting the unknown region. Instead, MOPO builds a pessimistic MDP by introducing a reward penalization toward model uncertainty.
}

\reviewAll{
NOPG, instead,  provides a theoretical framework built on a nonparametric approximation of the reward and the state transition. Such approximation is used to compute the policy gradient in closed-form. For this reason, our method differs from classic model-based solutions since it does not generate synthetic trajectories.
Most of the state-of-the-art model-free offline algorithms, on the other hand, utilize a biased and inconsistent gradient estimate. Instead, our approach delivers a full-gradient estimate that allows a trade-off between bias and variance. The quality of the gradient estimate results in a particularly sample-efficient policy optimization, as seen in the empirical section.}

\section{Empirical Evaluation}
\label{section:experiments}
In this section, we analyze our method. Therefore, we divide our experiments in two: the analysis of the gradient, and the analysis of the policy optimization using a gradient ascent technique.
The analysis of the gradient comprises an empirical evaluation of the bias, the variance and the gradient direction w.r.t. the ground truth, in relation to some quantities such as the size of the dataset or its degree of ``off-policiness''. 
In the policy optimization analysis, instead, we aim to both compare the sample efficiency of our method in comparison to state-of-the-art policy gradient algorithms, and to study its applicability to unstructured and human-demonstrated datasets.
\subsection{Benchmarking Tasks}
In the following, we give a brief description of the tasks involved in the empirical analysis.
\subsubsection{Linear Quadratic Gaussian Controller}\label{sec:LQG}
A very classical control problem consists of linear dynamics, quadratic reward and Gaussian noise. The main advantage of this control problem relies in the fact that it is fully solvable in closed-form, using the Riccati equations, which makes it appropriate for verifying the correctness of our algorithm.
In our specific scenario, we have a policy encoded with two parameters for illustration purposes. The LQG is defined as
\begin{align}
	\max_{\bm{\theta}} &  \sum_{t=0}^\infty \gamma^t r_t \nonumber \\
	\text{s.t.}\quad  &\state_{t+1} =  A \state_t + B \action_t ;\quad  r_t  = - \state_t^\intercal Q \state_t - \action_t^\intercal R \action_t \nonumber \\
	& \action_{t+1} = \Theta \state_t + \Sigma \epsilon_t; \quad \epsilon_t \sim \mathcal{N}(0, I), \nonumber  
\end{align}  
with $A$, $B$, $Q$, $R$, $\Sigma$ diagonal matrix and $\Theta = \mathrm{diag}(\bm{\theta})$ where $\bm{\theta}$ are considered the policy's parameters.  
In the stochastic policy experiments, $\pi_\theta(\action|\state) = \mathcal{N}(\action | \Theta\state;\Sigma)$, while for the deterministic case $\Sigma=\mathbf{0}$ and $\pi_\theta(\state) = \Theta\state$.
For further details, please refer to the supplementary material.
\subsubsection{OpenAI Pendulum-v0}
The OpenAI Pendulum-v0 \cite{brockman_openai_2016} is a popular benchmark in reinforcement learning. It simulates a simple under-actuated inverted-pendulum. The goal is to swing the pendulum until it reaches the top position, and then to keep it stable.
The state of the system is fully described by the angle of the pendulum $\omega$ and its angular velocity $\dot{\omega}$. The applied torque $\tau \in [-2, 2]$ corresponds to the agent's action.
One of the advantages of such a system, is that its well-known value function is two-dimensional.      
\subsubsection{Quanser Cart-pole}
The cart-pole is another classical task in reinforcement learning. It consists of an actuated cart moving on a track, to which a pole is attached. The goal is to actuate the cart in a way to balance the pole in the top position. 
Differently from the inverted pendulum, the system has a further degree of complexity, and the state space requires the position on the track $x$, the velocity of the cart $\dot{x}$, the angle of the pendulum $\omega$ and its angular velocity $\dot{\omega}$.  
\subsubsection{OpenAI Mountain-Car}
The mountain-car (also known as car-on-hill), consists on an under-powered car that must reach the top of a hill. The car is placed in the valley connecting two hills. In order to reach the goal position, it must first go in opposite direction in order to gain momentum.
Its state is described by the $x$-position of the car, and by its velocity $\dot{x}$. The episodes terminate when the car reaches the goal.
In contrast to the swing-up pendulum, which is hardly controllable by a human-being, this car system is ideal to provide human-demonstrated data.
\reviewAll{
\subsubsection{U-Maze}
U-Maze is an environment from the D4RL dataset \cite{fu_d4rl_2020}. It consist of a simple maze with a 2d shape, where a ball should reach a goal position. The state representation is 4-dimensional (2d position and velocity), and the 2d action represents the velocity of the ball.  
\subsubsection{Hopper}
The Hopper is a popular one-legged robot, with 11-dimensional state and 3-dimensional action space, that should hop forward as fast as possible in a two-dimensional world. We use the implementation offered by MuJoCo \cite{todorov2012mujoco}. Also in this case, we test NOPG on a dataset provided by D4RL.  
}
\begin{table}[]
	\begin{tabularx}{\linewidth}{|l|L|c|}
		\hline
		\rowcolor{gray!50} Acronym & Description & Typology \\
		\hline
		NOPG-D & Our method with deterministic policy. & \multirow{2}{*}{NOPG}  \\
		\rowcolor{gray!25}\rowcolor{gray!25} NOPG-S &  Our method with stochastic policy. & \cellcolor{white} \\ \hline
		G(PO)MDP+N & G(PO)MDP with normalized importance sampling. & \multirow{2}{*}{PWIS}\\
		\rowcolor{gray!25}G(PO)MDP+BN & G(PO)MDP with normalized importance sampling and generalized baselines. & \cellcolor{white}  \\ \hline
		DPG+Q  & Offline version of the deterministic policy gradient theorem with an oracle for the $Q$-function. & \multirow{2}{*}{SG}    \\
		\rowcolor{gray!25}DDPG & Deep Deterministic Policy Gradient. & \cellcolor{white}\\
		TD3 & Improved version of DDPG. &  \\
		\rowcolor{gray!25}SAC & Soft Actor Critic. &   \cellcolor{white}\\
		BEAR& Bootstrapping Error Accumulation Reduction. &  \\
		\rowcolor{gray!25}BRAC & Behavior Regularized Actor Critic. & \cellcolor{white}  \\
		\hline
		MOPO & Model-based Offline Policy Optimization. &  
		\multirow{2}{*}{MB} \\
		\rowcolor{gray!25}MOReL & Model-Based Offline Reinforcement Learning. &  \cellcolor{white}
		\\
		\hline
	\end{tabularx}
	\caption{Acronyms used in the paper to refer to practical implementation of the algorithms (SG: semi-gradient, PWIS: path-wise importance sampling, MB: model based).\label{table:algorithms}}
\end{table}
\begin{figure}[b]
	\centering
	\includegraphics[width=0.7 \columnwidth]{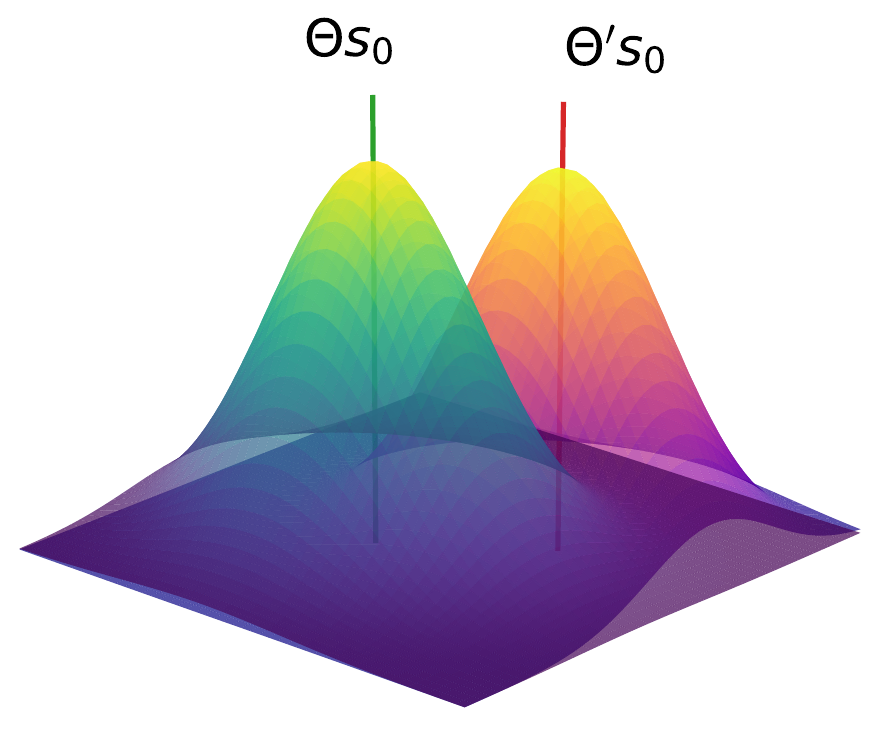}
	\caption{Evaluated in the initial state, the optimization policy having parameters $\theta_1, \theta_2$ and the behavioral policy having parameters $\theta_1', \theta_2'$ exhibit a fair distance in probability space.\label{fig:gaussian} }
\end{figure}
 \begin{figure*}[ht!]
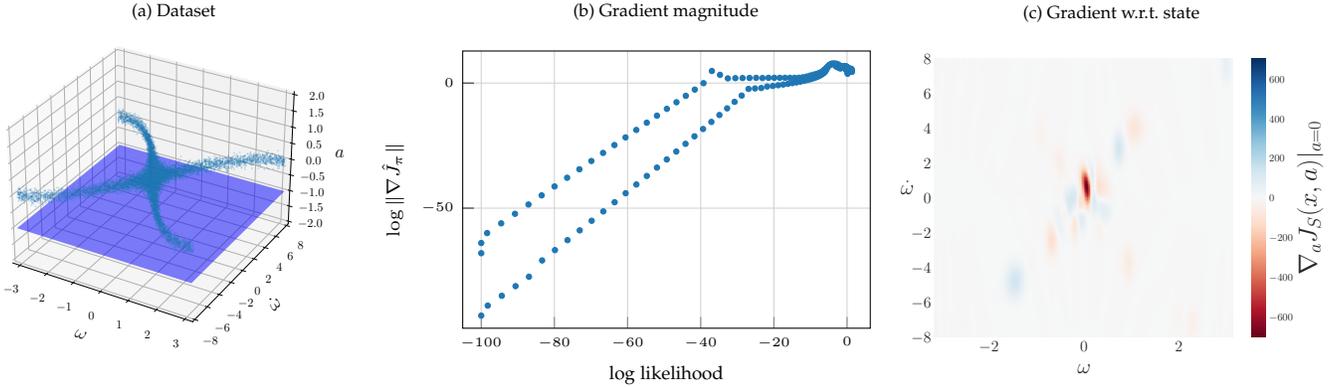

 	\input{plots/pendulum-dataset.tikz}
\begin{tikzpicture}

\definecolor{color0}{rgb}{0.917647058823529,0.917647058823529,0.949019607843137}
\definecolor{color1}{rgb}{0.12156862745098,0.466666666666667,0.705882352941177}

\begin{axis}[
title=\textsuperscript{(b) Gradient magnitude},
height=5.27cm,
width=7cm,
tick align=inside,
tick pos=left,
x grid style={white},
xlabel={log likelihood},
xmajorgrids,
xmin=-105.064175128937, xmax=6.34767770767212,
x grid style={white!82.74509803921568!black},
y grid style={white!82.74509803921568!black},
xlabel style={font=\scriptsize},
ylabel style={font=\scriptsize},
yticklabel style={font=\tiny},
xticklabel style={font=\tiny},
ylabel={$\log\|\nabla\hat{J}_\pi\|$},
ymajorgrids,
ymin=-97.9892494964593, ymax=12.7983847272458,
ytick style={color=white!15!black}
]
\addplot [draw=color1, fill=color1, mark=*, only marks, mark size=1]
table{%
x  y
-100 -68.0034844012202
-100 -63.9397150508587
-98.3885879516602 -59.9651292056444
-94.5401840209961 -56.0733634841627
-90.772834777832 -52.264660234152
-87.087532043457 -48.5389896900818
-83.4843444824219 -44.8964336193111
-79.9632110595703 -41.3369834765184
-76.5240631103516 -37.8606481769035
-73.1669006347656 -34.4674979236737
-69.8916320800781 -31.1575185886519
-66.6982345581055 -27.9307384127891
-63.5866355895996 -24.7872194357847
-60.5567474365234 -21.7269757639353
-57.6085357666016 -18.7500314464979
-54.7419128417969 -15.8564554995906
-51.9567909240723 -13.0462587715276
-49.2530975341797 -10.3195068102823
-46.6307334899902 -7.67615519583885
-44.0895805358887 -5.1152226366389
-41.6295623779297 -2.62447221041749
-39.2505416870117 -0.0644627069033886
-36.952392578125 4.8051794786341
-34.7349853515625 3.27453383803843
-32.5981674194336 1.86148151379735
-30.5417747497559 1.946935504285
-28.5656299591064 1.98537962760117
-26.6695423126221 2.02499413277082
-24.8532829284668 2.06634901354962
-23.116626739502 1.98831322942611
-21.4592895507812 2.08446030367575
-19.8809814453125 2.0938871765329
-18.3813343048096 2.08381202451233
-16.9599380493164 2.08508964147311
-15.6163339614868 2.07572329474785
-14.349946975708 2.07246734703959
-13.1600875854492 2.13098117183766
-12.0459451675415 2.30933939854973
-11.0065002441406 2.60695047784742
-10.0404949188232 3.00028546206342
-9.14636707305908 3.34939777556312
-8.32214832305908 3.67506950545886
-7.56537771224976 4.05582817047495
-6.87298440933228 4.56078867206782
-6.24118518829346 5.33859675783109
-5.66539573669434 6.09574654711169
-5.14025449752808 6.80936615449384
-4.65974283218384 7.39864271670373
-4.21750497817993 7.75056525707418
-3.80734944343567 7.76258317162284
-3.42382502555847 7.38333017474167
-3.06269955635071 6.68979723426853
-2.72112464904785 6.45528265108714
-2.39740657806396 6.20877768852222
-2.09046840667725 6.01430145886247
-1.79924917221069 5.81621925850373
-1.52231788635254 6.20586053821302
-1.25786375999451 6.44666098895348
-1.00405740737915 6.61983672837629
-0.759608209133148 6.72751364383237
-0.524213492870331 6.75360313431006
-0.298665791749954 6.68805703774885
-0.0845597311854362 6.53328332933312
0.116224482655525 6.28174621543322
0.302021592855453 5.9483099482206
0.471685439348221 5.57096702964207
0.6246577501297 5.23748518458775
0.760859727859497 5.0899117640238
0.880509912967682 5.25121789766796
0.983952641487122 5.46519858500164
1.07155096530914 5.61357159884928
1.14364993572235 5.6495380902167
1.20058107376099 5.55468529532023
1.24267041683197 5.31708943864437
1.27022695541382 4.97837873309629
1.28350257873535 4.67365898859983
1.28265142440796 4.48173233507079
1.26770842075348 4.51450051704709
1.23860108852386 4.69222516640101
1.19518744945526 4.86452620641851
1.13729894161224 5.0933116250884
1.06477677822113 5.40042525029758
0.977497458457947 5.60895438560784
0.875389039516449 5.70647644397844
0.758448004722595 5.69103525213669
0.626737892627716 5.53082994654444
0.480365514755249 5.15262944817574
0.319414556026459 4.31257996245134
0.143854752182961 3.69113713560379
-0.0465543754398823 4.7901823293223
-0.252317607402802 5.24692664456381
-0.474191397428513 5.66042460656357
-0.713093638420105 6.00750995021572
-0.969957590103149 6.29685266957715
-1.24557828903198 6.52434255259699
-1.5404931306839 6.68936474218009
-1.85493588447571 6.80376860216153
-2.18885278701782 7.17858887318516
-2.541916847229 7.5058650300136
-2.91339564323425 7.70218862183024
-3.30165767669678 7.76021109629621
-3.7031397819519 7.69107283293666
-4.11081552505493 7.54649676987859
-4.51291275024414 7.37718774218339
-4.89382457733154 7.0921399069861
-5.23950719833374 6.48678838878649
-5.54617977142334 5.64122552298358
-5.8252387046814 4.88596496317569
-6.09905433654785 4.37672979998866
-6.39200353622437 3.98938228522955
-6.72419452667236 3.62202203853192
-7.10984897613525 3.2523583167352
-7.55825901031494 2.88373813892731
-8.07532119750977 2.52261011640177
-8.66477298736572 2.17245087861107
-9.32904148101807 1.83311316746161
-10.0697631835938 1.5030606075824
-10.888072013855 1.1811475795998
-11.7848033905029 0.866640372231541
-12.7605962753296 0.559461314825605
-13.8159341812134 0.259428704363386
-14.9512214660645 -0.0339879698125498
-16.1667919158936 -0.320527133086939
-17.462911605835 -0.602579678708152
-18.8398246765137 -0.885126677099597
-20.2977237701416 -1.17982902875751
-21.8367824554443 -1.37643501469941
-23.4571647644043 -1.99975792017037
-25.1589984893799 -2.15823514276274
-26.9423999786377 -2.35498424597741
-28.8074932098389 -5.0115012857945
-30.7543487548828 -7.53227102111386
-32.7830619812012 -10.1205983580959
-34.8937187194824 -12.7891919767198
-37.0863609313965 -15.5409292911361
-39.3610725402832 -18.3758866880221
-41.7178993225098 -21.2942240896525
-44.1568908691406 -24.2958744214657
-46.6781005859375 -27.3808217464336
-49.2815475463867 -30.5489854753799
-51.9672775268555 -33.8003227475662
-54.7353477478027 -37.1347680832675
-57.5857353210449 -40.5522057478459
-60.5185050964355 -44.0525057939608
-63.5336875915527 -47.6354792928272
-66.6312713623047 -51.3007813075784
-69.8112869262695 -55.0479524361721
-73.0737380981445 -58.8762031650117
-76.4186706542969 -62.7843189035325
-79.8461227416992 -66.7698618396147
-83.3560180664062 -70.8931454284304
-86.9484252929688 -77.410357089343
-90.6233596801758 -81.1743647197045
-94.3808364868164 -85.0216072174177
-98.2226791381836 -88.9498774142074
-100 -92.9534479408364
};
\end{axis}

\end{tikzpicture}
 	\input{plots/gradient-state-plot.tikz}
 	\caption{\reviewD{(a) The dataset used for the experiment in Section~\ref{sec:trust-region-empirical}. The blue plane represent a policy with constant action. By setting different values of $\theta_3$ we can obtain different policies. For $\theta_3 \approx 0$, the policy is most fitting with the data. (b) When the policy has low log-likelihood, the gradient quickly approaches the zero (i.e., $\|\nabla_\theta \hat{J}_\pi\| \to 0 $). (c) The magnitude of the gradient decreases in low density regions, where the prediction is most uncertain.  }\label{fig:trust-region} }
 \end{figure*}
\subsection{Algorithms Used for Comparisons}
To provide an analysis of the gradient, we compare our algorithm against G(PO)MDP with importance sampling, and with offline DPG (DPG with fixed dataset). Instead of using the na{\"i}ve form of G(PO)MDP with importance sampling, which suffers from high variance, we used the normalized importance sampling \cite{shelton_policy_2013, rubinstein_simulation_2016} (which introduces some bias but drastically reduces the variance), and  the generalized baselines \cite{jie_connection_2010} (which also introduce some bias, as they are estimated from the same dataset). The offline version of DPG, suffers from three different sources of bias: the semi-gradient, the critic approximation and the improper use of the discounted state distribution \cite{thomas_bias_2014,nota_is_2020}. To mitigate these issues and focus more on the semi-gradient contribution to the bias, we provide an oracle $Q$-function (we denote this version as DPG+Q).
\reviewAll{
For the policy improvement, instead, we compare both with online algorithms (Figure~\ref{figure:comparison}) such as TD3 \cite{fujimoto_off-policy_2019} and SAC \cite{haarnoja_soft_2018}, and with offline algorithms (BEAR \cite{kumar_stabilizing_2019}, BRAC \cite{wu_behavior_2019}, MOPO \cite{yu_mopo_2020} and MOReL \cite{kidambi_morel_2020}) as depicted in Figure~\ref{figure:offline-comparison}.}
A full list of the algorithms used in the comparisons with a brief description is available in Table~\ref{table:algorithms}.
 \begin{figure*}[p]
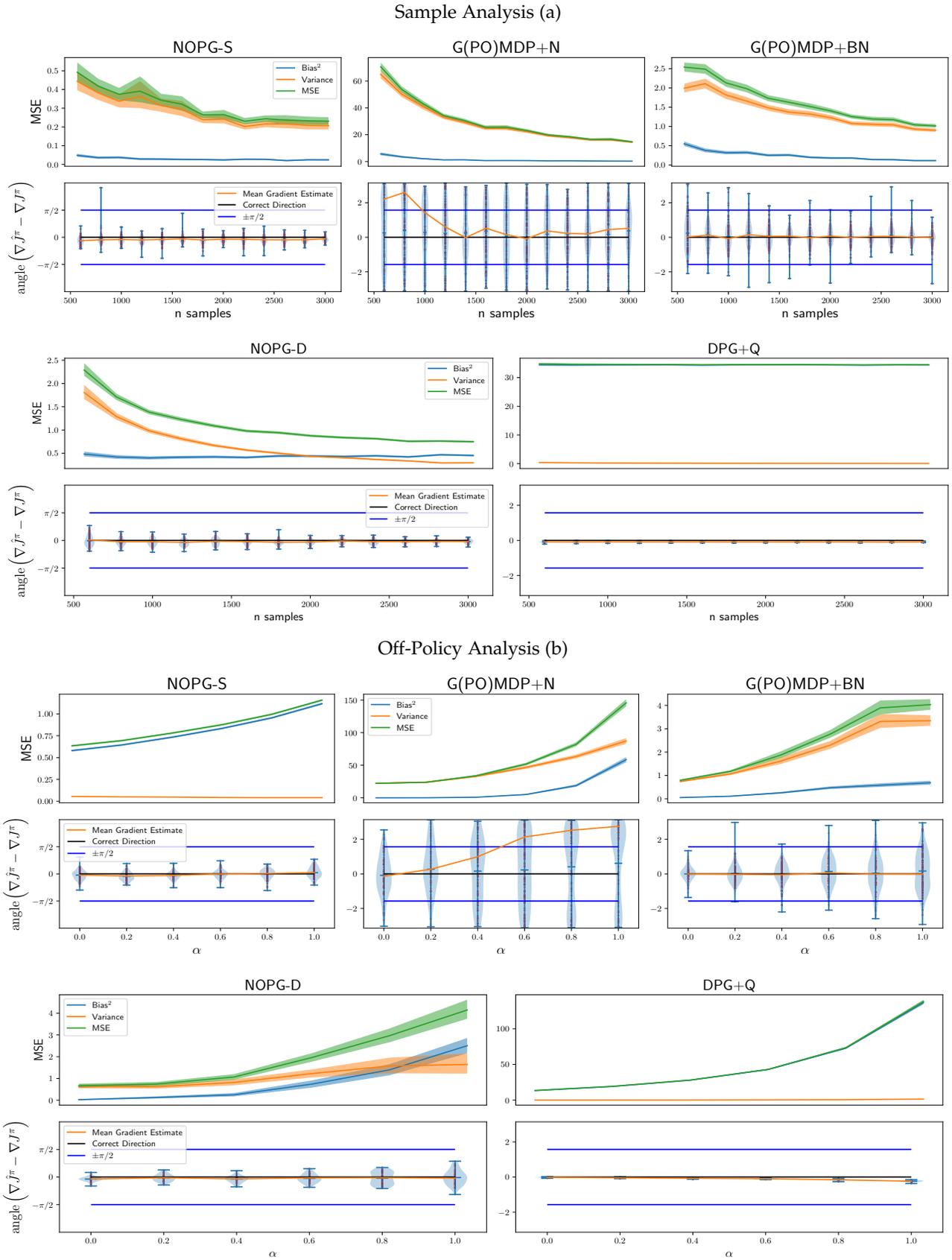

	\centering
	\input{plots/gradient/sample.tikz}\\
	\centering
	\input{plots/gradient/offpolicy.tikz}
	\vspace{-0.5em}
	\caption{Bias, variance, MSE and gradient direction analysis. The MSE plots are equipped with a $95\%$ interval using bootstrapping techniques. The direction analysis plots describe the distribution of angle between the estimates and the ground truth gradient. NOPG exhibits favorable bias, variance and gradient direction compared to PWIS and semi-gradient.\label{fig:sample_analysis}}
\end{figure*}

\subsection{Analysis of the Gradient}
\label{sec:gradient_analysis}
We want to compare the bias and variance of our gradient estimator w.r.t. the already discussed classical estimators.
Therefore, we use the LQG setting described in Section~\ref{sec:LQG}, which allows us to compute the true gradient. Our goal is to estimate the gradient w.r.t. the policy $\pi_\theta$ diagonal parameters $\theta_1, \theta_2$, while sampling from a policy which is a linear combination of $\Theta$ and $\Theta'$. The hyper-parameter $\alpha$ determines the mixing between the two parameters. When $\alpha=1$ the behavioral policy will have  parameters $\Theta'$, while when $\alpha=0$ the dataset will be sampled using $\Theta$.
In Figure~\ref{fig:gaussian}, we can visualize the difference of the
\begin{figure*}[t]
	\centering
	\begin{subfigure}[t]{0.5\columnwidth}
		\begin{tikzpicture}
    \begin{axis}[
        axis on top,
        width=3.9cm,
        scale only axis,
        enlargelimits=false,
        xmin=-3.14,
        xmax=3.14,
        ymin=-8,
        ymax=8,
        title={$\tilde{\mu}_{\pi_0}$},
        xlabel={$\omega$},
        ylabel={$\dot{\omega}$},
        xlabel style ={font=\scriptsize},
        ylabel style ={font=\scriptsize, rotate=-90, xshift=7pt},
        yticklabel style={font=\tiny},
        xticklabel style={font=\tiny}
        ]
      \addplot[thick,blue] graphics[xmin=-3.14,ymin=-8,xmax=3.14,ymax=8] {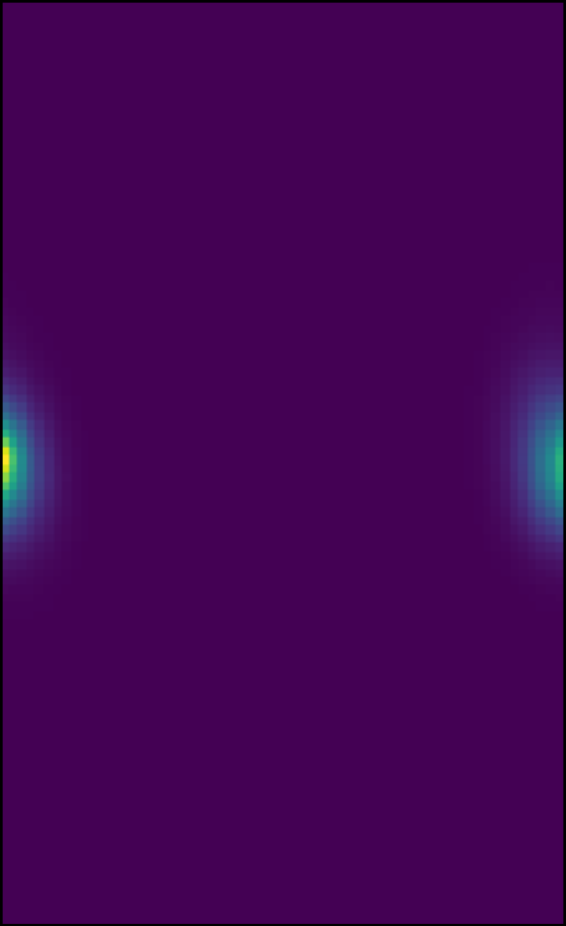};
    \end{axis}
  \end{tikzpicture}
	\end{subfigure}
	\hspace{0.8em}
	\begin{subfigure}[t]{0.5\columnwidth}
		\begin{tikzpicture}
    \begin{axis}[
        axis on top,
        width=3.9cm,
        scale only axis,
        enlargelimits=false,
        xmin=-3.14,
        xmax=3.14,
        ymin=-8,
        ymax=8,
        title={$\tilde{V}_{\pi_0}$},
        xlabel={$\omega$},
        xlabel style ={font=\scriptsize},
        yticklabel =\empty,
        xticklabel style={font=\tiny}
        ]
      \addplot[thick,blue] graphics[xmin=-3.14,ymin=-8,xmax=3.14,ymax=8] {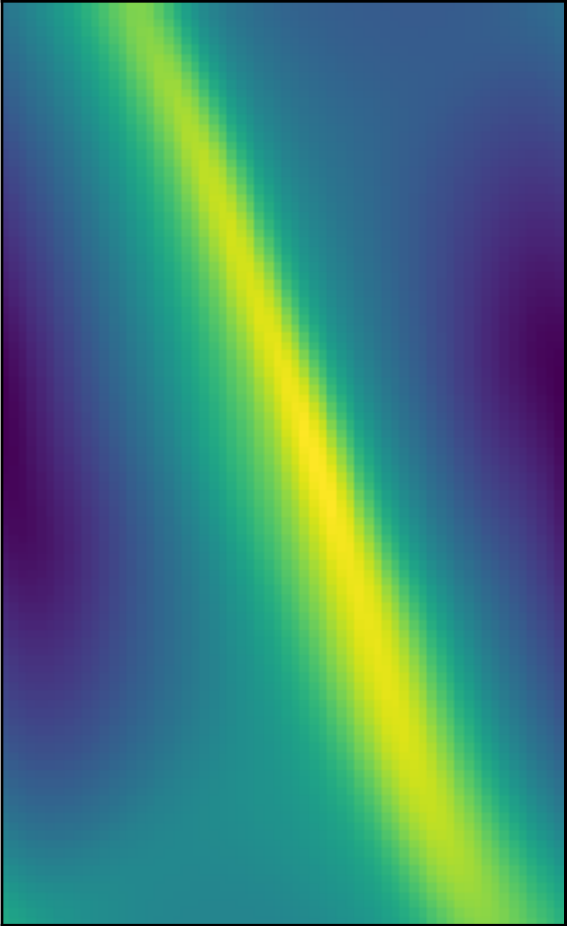};
    \end{axis}
  \end{tikzpicture}
	\end{subfigure}
	\hspace{-1em}
	\begin{subfigure}[t]{0.5\columnwidth}
		\begin{tikzpicture}
    \begin{axis}[
        axis on top,
        width=3.9cm,
        scale only axis,
        enlargelimits=false,
        xmin=-3.14,
        xmax=3.14,
        ymin=-8,
        ymax=8,
        title={$\tilde{\mu}_{\pi_{300}}$},
        xlabel={$\omega$},
		xlabel style ={font=\scriptsize},
		yticklabel =\empty,
		xticklabel style={font=\tiny}
        ]
      \addplot[thick,blue] graphics[xmin=-3.14,ymin=-8,xmax=3.14,ymax=8] {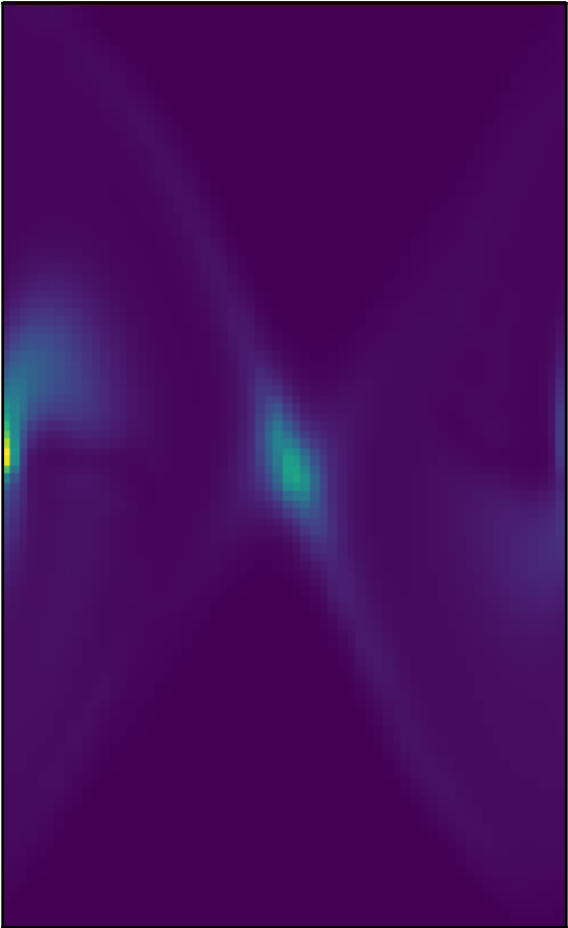};
    \end{axis}
  \end{tikzpicture}
	\end{subfigure}
	\hspace{-1em}
	\begin{subfigure}[t]{0.5\columnwidth}
		\begin{tikzpicture}
    \begin{axis}[
        axis on top,
        width=3.9cm,
        scale only axis,
        enlargelimits=false,
        xmin=-3.14,
        xmax=3.14,
        ymin=-8,
        ymax=8,
        title={$\tilde{V}_{\pi_{300}}$},
        xlabel={$\omega$},
		xlabel style ={font=\scriptsize},
		yticklabel =\empty,
		xticklabel style={font=\tiny}
        ]
      \addplot[thick,blue] graphics[xmin=-3.14,ymin=-8,xmax=3.14,ymax=8] {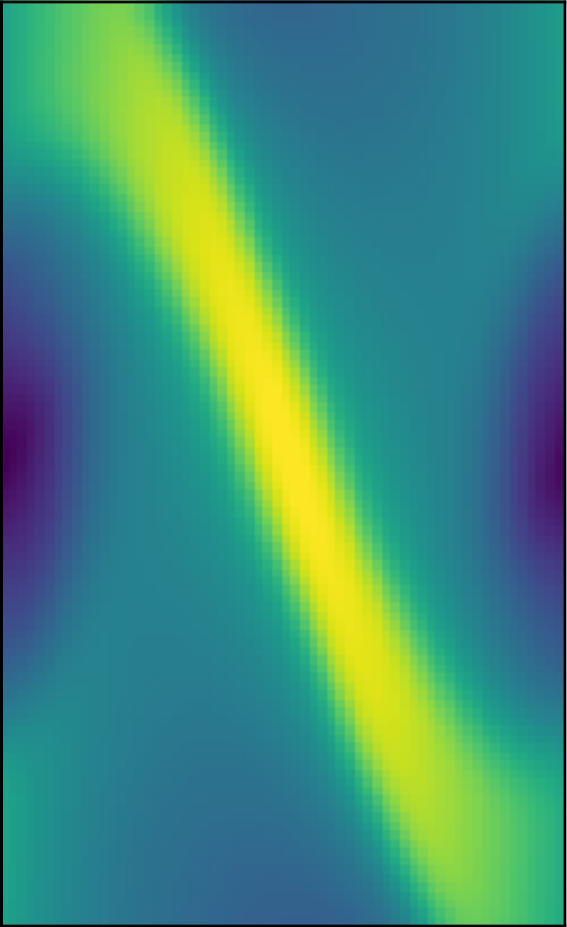};
    \end{axis}
  \end{tikzpicture}
	\end{subfigure}
	\vspace{-1em}
	\caption{A phase portrait of the state distribution $\tilde{\mu}_{\pi}$ and value function $\tilde{V}_{\pi}$ estimated in the swing-up pendulum task with NOPG-D. Green corresponds to higher values. The two leftmost figures show the estimates before any policy improvement, while the two rightmost show them after $300$ offline updates of NOPG-D. Notice that the algorithm finds a very good approximation of the optimal value function and is able to predict that the system will reach the goal state ($(\omega, \dot{\omega}) = (0, 0) $).}
	\label{figure:muv}
\end{figure*}
 \begin{figure}[t]
	\includegraphics[width=0.9 \columnwidth]{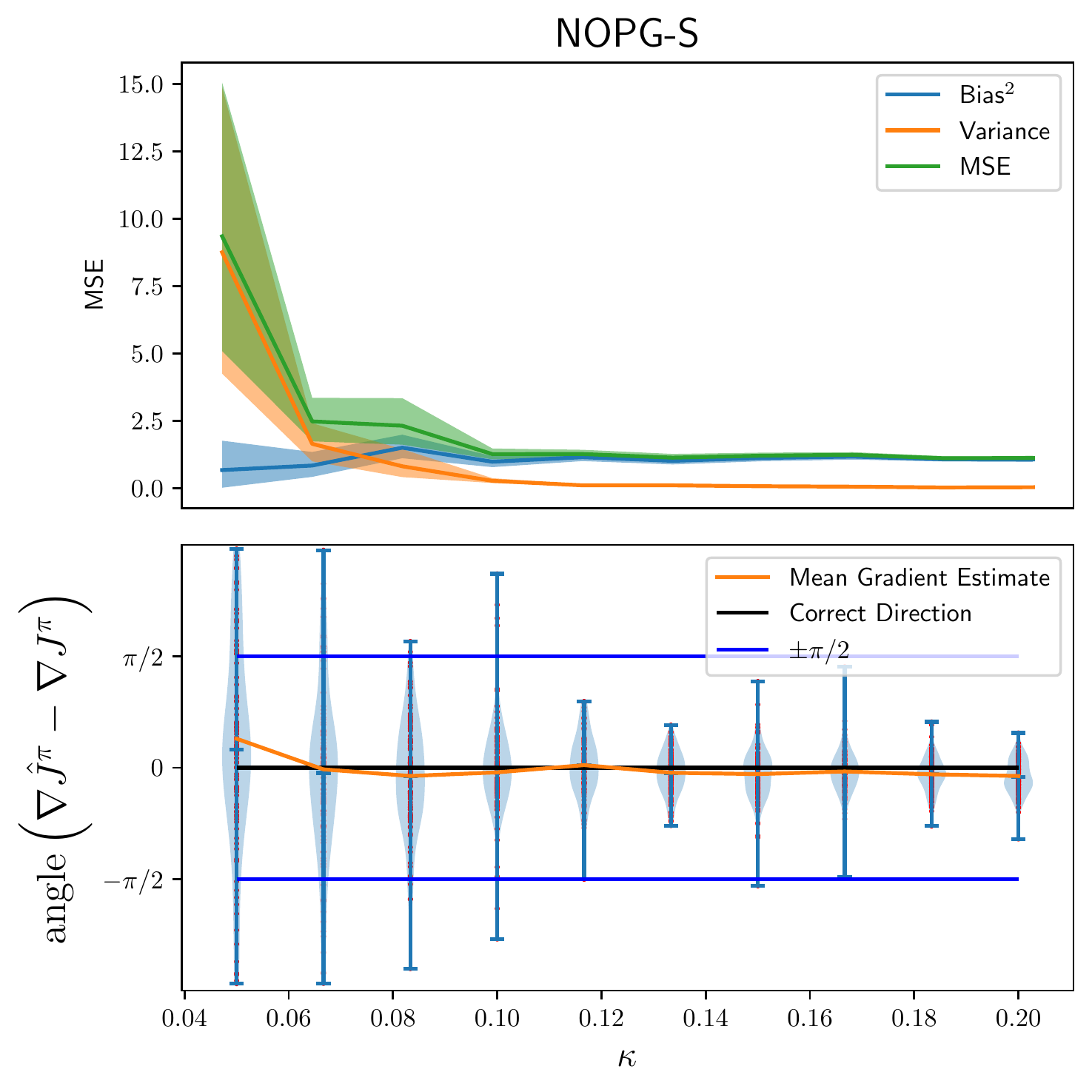}
	\caption{A lower bandwidth corresponds to higher variance, while higher bandwidth increases the bias up to a plateau.\label{fig:parameter} }
\end{figure}
 two policies with parameters $\Theta$ and $\Theta'$. Although not completely disjoint, they are fairly far in the probability space, especially if we take into account that such distance propagates in the length of the trajectories.
\subsubsection{Sample Analysis}
We want to study how the bias, the variance and the direction of the estimated gradient vary w.r.t. the dataset's size. We are particularly interested in the off-policy strategy for sampling, and in this set of experiments we will use constant $\alpha=0.5$. 
Figure~\ref{fig:sample_analysis}a depicts these quantities w.r.t. the number of collected samples. As expected, a general trend for all algorithms is that with a higher number of samples we are able to reduce the variance. The importance sampling based G(PO)MDP algorithms eventually obtain a low bias as well. Remarkably, NOPG has significantly both lower bias and variance, and its gradient direction is also more accurate w.r.t. the G(PO)MDP algorithms (note the different scales of the y-axis). Between DPG+Q and NOPG there is no sensible difference, but we should take into account the already-mentioned advantage of DPG+Q to have access to the true $Q$-function.
\subsubsection{Off-Policy Analysis}
We want to estimate the bias and the variance w.r.t. different degrees of ``off-policiness'' $\alpha$, \reviewC{as defined in the beginning of Section~\ref{sec:gradient_analysis}}.
We want to highlight that in the deterministic experiment the behavioral policy remains stochastic. This is needed to ensure the stochastic generation of datasets, which is essential to estimate the bias and the variance of the estimator.
As depicted in Figure~\ref{fig:sample_analysis}, the variance in importance sampling based techniques tends to increase when the dataset is off-policy. On the contrary, NOPG seems to be more subject to an increase of bias. This trend is also noticeable in DPG+Q, where the component of the bias is the one playing a major role in the mean squared error. The gradient direction of NOPG seems however unbiased, while DPG+Q has a slight bias but remarkably less variance (note the different scales of the y-axis). We remark that DPG+Q uses an oracle for the $Q$-function, which supposedly results in lower variance and bias\footnote{Furthermore, we suspect that the particular choice of a LQG task tends to mitigate the problems of DPG, as the fast convergence to a stationary distribution due to the stable attractor, united with the improper discounting, results in a coincidental correction of the state-distribution.}. The positive bias of DPG+Q in the on-policy case ($\alpha=0$) is caused by the improper use of discounting. 
In general, NOPG shows a decrease in bias and variance in order of magnitudes when compared to the other algorithms.
\subsubsection{Bandwidth Analysis}
In the previous analysis, we kept the bandwidth's parameters of our algorithm fixed, even though a dynamic adaptation of this parameter w.r.t. the size of the dataset might have improved the bias/variance trade-off. We are now interested in studying how the bandwidth impacts the gradient estimation. For this purpose, we generated datasets of $1000$ samples with $\alpha=0.5$. We set all the bandwidths of state, action and next state, for each dimension equal to $\kappa$. From Figure~\ref{fig:parameter}, we evince that a lower bandwidth corresponds to a higher variance, while a larger bandwidth approaches a constant bias and the variance tends to zero. This result is in line with the theory.

\reviewB{
\subsubsection{Trust Region}
\label{sec:trust-region-empirical}
In Section~\ref{set:trust-region}, we claimed that the nonparametric technique used has the effect to not consider OOD actions, and more in general, state-action pairs that are in a low density region: in fact the magnitude of the gradient in these regions is low. 
To appreciate this effect, we considered the Pendulum-v0. We generated the data using a Gaussan policy $\mathcal{N}(\bm{\mu}=0, \bm{\Sigma}= 0.2 \mathrm{I})$. Figure~\ref{fig:trust-region}a depicts the generated datataset. Subsequently, we generated a set of linear policies $\avec = \theta_0 \sin \omega + \theta_1 \cos\omega + \theta_2 \dot{\omega} + \theta_3 $ where
the parameters $\theta_0=\theta_1=\theta_2=0$, $\theta_3 \in [-2, 2]$ and $\omega$ and $\dot{\omega}$ represent the angle and the angular velocity of the pendulum. 
When $\theta_3$ is close to $0$, the policy is close to the samples contained in the data, while if $\theta_3$ is close either to $-2$ or $2$, then it is more distant.
In Figure~\ref{fig:trust-region}b shows the estimated policy gradient for different policies. In particular, each point represents the log-likelihood of the policy (w.r.t. the actions contained in the dataset) on the $x$-axis, and the logarithm of the magnitude of the gradient on the $y$-axis. There is a clear correlation between the log-likelihood and the magnitude of the gradient, which tells us that the gradient is close to zero for unlikely policies, supporting, therefore, our claim.
Furthermore, we investigate the contribution of the single state-action pairs to the gradient. The heatmap in Figure~\ref{fig:trust-region}c shows that the highest gradient magnitude appears on the diagonals of the state space. Looking at the generated data, in Figure~\ref{fig:trust-region}a, we notice that the majority of samples are also present on the aforementioned diagonals forming a ``X'' shape. Hence, our experiment shows that the magnitude of the gradient depends also on the density of the state-action space, with lower magnitude in correspondence of lower density of the state space.}
\begin{figure*}[t]


	\begin{subfigure}[t]{0.66\columnwidth}
\begin{tikzpicture}

\definecolor{color0}{rgb}{1,0.647058823529412,0}

\begin{axis}[
height=4.98cm,
width=6.75cm,
xmin=0, xmax=3200.0,
ymin=-3967.86533175936, ymax=-143.376848722761,
try min ticks=3,
tick align=inside,
x grid style={white!82.74509803921568!black},
y grid style={white!82.74509803921568!black},
xmajorticks=true,
ymajorticks=true,
xminorticks=true,
yminorticks=true,
xmode=log,
xmajorgrids,
ymajorgrids,
xtick={200,450, 800, 1250, 1800,3200},
xticklabels={2,4.5,8,18,32},
yticklabel style={font=\tiny},
xticklabel style={font=\tiny, yshift=-3pt},
scaled y ticks=base 10:-3,
scaled x ticks=base 10:-2,
xlabel={Sample Size $\cdot 10^2$},
xlabel style ={font=\scriptsize, yshift=2pt},
ylabel style ={font=\scriptsize, yshift=-5pt},
ylabel={\scriptsize{Return}},
title={Pendulum-v0},
title style={font=\scriptsize},
log ticks with fixed point
]

\path [draw=color0, fill=color0, opacity=0.1] (axis cs:200,-3775.19959014973)
--(axis cs:200,-3775.18814914785)
--(axis cs:450,-591.90217816584)
--(axis cs:800,-576.783659376169)
--(axis cs:1250,-578.560411555018)
--(axis cs:1800,-459.985221504609)
--(axis cs:3200,-430.752099338165)
--(axis cs:3200,-433.108391695221)
--(axis cs:3200,-433.108391695221)
--(axis cs:1800,-545.400524422499)
--(axis cs:1250,-580.745819304394)
--(axis cs:800,-581.346848985241)
--(axis cs:450,-595.488795336325)
--(axis cs:200,-3775.19959014973)
--cycle;

\path [draw=green!50.19607843137255!black, fill=green!50.19607843137255!black, opacity=0.1] (axis cs:200,-3795.62418039938)
--(axis cs:200,-3791.22916095543)
--(axis cs:450,-590.239975072795)
--(axis cs:800,-578.148283359131)
--(axis cs:1250,-579.362788505597)
--(axis cs:1800,-433.869276677966)
--(axis cs:3200,-428.137769813099)
--(axis cs:3200,-431.767095934437)
--(axis cs:3200,-431.767095934437)
--(axis cs:1800,-510.136905039136)
--(axis cs:1250,-585.11098375442)
--(axis cs:800,-582.496690347455)
--(axis cs:450,-596.039198425476)
--(axis cs:200,-3795.62418039938)
--cycle;

\addplot [semithick, color0, opacity=0.7, mark=*, mark size=1, mark options={solid}]
table [row sep=\\]{%
	200	-3775.19386964879 \\
	450	-593.695486751082 \\
	800	-579.065254180705 \\
	1250	-579.653115429706 \\
	1800	-502.692872963554 \\
	3200	-431.930245516693 \\
};
\addplot [semithick, green!50.19607843137255!black, opacity=0.7, mark=*, mark size=1, mark options={solid}]
table [row sep=\\]{%
	200	-3793.42667067741 \\
	450	-593.139586749135 \\
	800	-580.322486853293 \\
	1250	-582.236886130008 \\
	1800	-472.003090858551 \\
	3200	-429.952432873768 \\
};
\end{axis}

\end{tikzpicture}
	\end{subfigure}\hspace{0.5em}
	\begin{subfigure}[t]{.66\columnwidth}
		\input{plots/poles/pendulumrand.tikz}
	\end{subfigure}\hspace{-0.5em}
	\begin{subfigure}[t]{.66\columnwidth}
		\input{plots/poles/cartpole.tikz}
	\end{subfigure}
	\vspace{-0.2cm}
	\centering
	\begin{tikzpicture}

\definecolor{color0}{rgb}{1,0.647058823529412,0}
\definecolor{color1}{rgb}{0.117647058823529,0.564705882352941,1}
\definecolor{color2}{rgb}{0.61,0.20,0.64}

\begin{axis}[
height=2cm,
width=8cm,
hide axis,
xmin=10,
xmax=50,
ymin=0,
ymax=0.5,
legend columns=-1,
legend entries={{NOPG-D},{NOPG-S}, {DDPG}, {TD3}, {SAC}},
legend style={at={(1.0,0.1)}, anchor=north, draw=none, font=\scriptsize, column sep=1ex, line width=2 pt},
]

\addlegendimage{no markers, color0}
\addlegendimage{no markers, green!50.19607843137255!black}
\addlegendimage{no markers, color1}
\addlegendimage{no markers, red}
\addlegendimage{no markers, color2}

\end{axis}

\end{tikzpicture}
	\caption{Comparison of NOPG in its deterministic and stochastic versions to state-of-the-art \textbf{online} algorithms on continuous control tasks: Swing-Up Pendulum with \textbf{uniform grid} sampling (left), Swing-Up Pendulum  with the \textbf{random agent} (center) and the Cart-Pole stabilization (right). The figures depict the mean and 95\% confidence interval over 10 trials. \myalg~ outperforms the baselines w.r.t the sample complexity. \textbf{Note the log-scale along the $x$-axis}.}
	\label{figure:comparison}
\end{figure*}
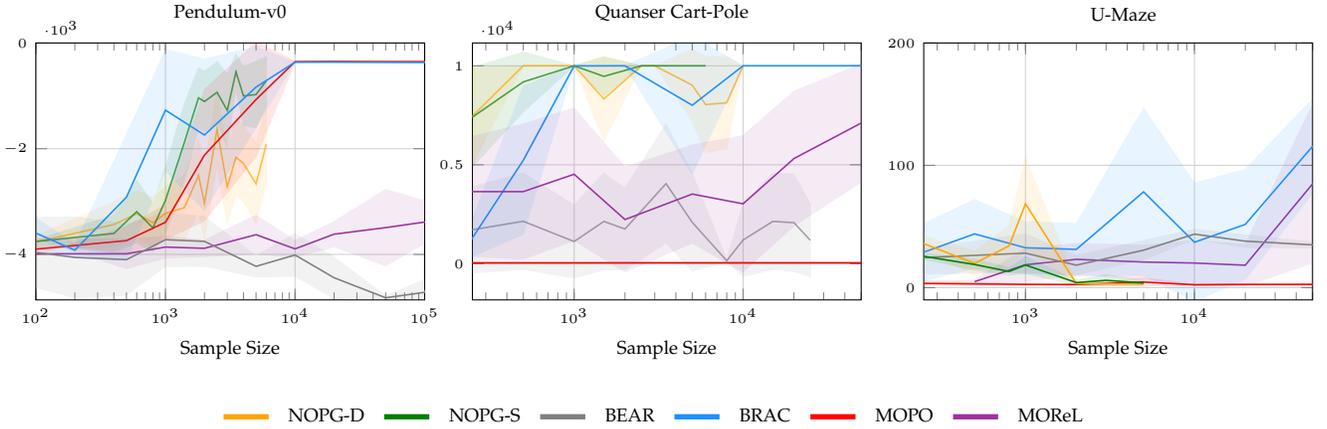
\begin{figure*}[t]
	\begin{subfigure}[t]{.66\columnwidth}
\begin{tikzpicture}

\definecolor{color0}{rgb}{1,0.647058823529412,0}
\definecolor{color1}{rgb}{0.117647058823529,0.564705882352941,1}
\definecolor{color2}{rgb}{0.61,0.20,0.64}

\begin{axis}[
height=5cm,
width=6.75cm,
xmin=100.0, 
xmax=100000,
ymin=-4861.06291217909, ymax=1.03579848121649,
try min ticks=3,
tick align=inside,
x grid style={white!82.74509803921568!black},
y grid style={white!82.74509803921568!black},
xmajorticks=true,
ymajorticks=true,
xminorticks=true,
yminorticks=true,
xmajorgrids,
ymajorgrids,
yticklabel style={font=\tiny},
xticklabel style={font=\tiny},
scaled y ticks=base 10:-3,
xmode=log,
xmajorgrids,
ymajorgrids,
xlabel={Sample Size},
xlabel style={font=\scriptsize},
title={Pendulum-v0},
title style={font=\scriptsize},
]
\addlegendimage{no markers, color0}
\addlegendimage{no markers, green!50.19607843137255!black}
\addlegendimage{no markers, color1}
\addlegendimage{no markers, red}
\addlegendimage{no markers, white!50.19607843137255!black}

\path [draw=color0, fill=color0, opacity=0.1] (axis cs:100,-3879.5718175294)
--(axis cs:100,-3652.07129868322)
--(axis cs:400,-3001.68436656388)
--(axis cs:600,-2797.39050155719)
--(axis cs:800,-2918.67804647365)
--(axis cs:1000,-2723.59697337297)
--(axis cs:1400,-2513.77559199741)
--(axis cs:1800,-1804.59800923639)
--(axis cs:2000,-2373.75207877783)
--(axis cs:2500,-947.249291596133)
--(axis cs:3000,-2032.57831330798)
--(axis cs:3500,-1428.39820782276)
--(axis cs:4000,-1552.56803239566)
--(axis cs:5000,-1922.086035905)
--(axis cs:6000,-1168.72395817863)
--(axis cs:6000,-2661.62730965616)
--(axis cs:6000,-2661.62730965616)
--(axis cs:5000,-3406.45353434095)
--(axis cs:4000,-3002.37728630533)
--(axis cs:3500,-2897.68280873886)
--(axis cs:3000,-3402.4534155717)
--(axis cs:2500,-2313.90452831998)
--(axis cs:2000,-3689.46921373146)
--(axis cs:1800,-3206.95551926132)
--(axis cs:1400,-3714.80887353538)
--(axis cs:1000,-3724.59258623981)
--(axis cs:800,-3881.82731096882)
--(axis cs:600,-3664.32785540097)
--(axis cs:400,-3867.73244500736)
--(axis cs:100,-3879.5718175294)
--cycle;

\path [draw=green!50.19607843137255!black, fill=green!50.19607843137255!black, opacity=0.1] (axis cs:100,-3784.01852118592)
--(axis cs:100,-3727.04294759646)
--(axis cs:400,-3256.4246191887)
--(axis cs:600,-2599.33116272189)
--(axis cs:800,-3037.8290542597)
--(axis cs:1000,-2356.03079351955)
--(axis cs:1400,-1150.55284735467)
--(axis cs:1800,-471.76174308283)
--(axis cs:2000,-535.950105104398)
--(axis cs:2500,-354.743288595078)
--(axis cs:3000,-591.962816707221)
--(axis cs:3500,-219.968688366979)
--(axis cs:4000,-446.028563143085)
--(axis cs:5000,-346.907162557308)
--(axis cs:6000,-261.775574868832)
--(axis cs:6000,-1166.89882424796)
--(axis cs:6000,-1166.89882424796)
--(axis cs:5000,-1601.41128448586)
--(axis cs:4000,-1557.75675791563)
--(axis cs:3500,-871.995218893654)
--(axis cs:3000,-1958.07607667869)
--(axis cs:2500,-1511.80138475548)
--(axis cs:2000,-1678.23130939295)
--(axis cs:1800,-1602.31282767571)
--(axis cs:1400,-2610.0538611951)
--(axis cs:1000,-3595.2041379206)
--(axis cs:800,-3959.15667418118)
--(axis cs:600,-3781.91446348335)
--(axis cs:400,-3950.62700395923)
--(axis cs:100,-3784.01852118592)
--cycle;

\addplot [semithick, color0, opacity=0.7]
table [row sep=\\]{%
	100	-3765.82155810631 \\
	400	-3434.70840578562 \\
	600	-3230.85917847908 \\
	800	-3400.25267872124 \\
	1000	-3224.09477980639 \\
	1400	-3114.2922327664 \\
	1800	-2505.77676424886 \\
	2000	-3031.61064625465 \\
	2500	-1630.57690995806 \\
	3000	-2717.51586443984 \\
	3500	-2163.04050828081 \\
	4000	-2277.47265935049 \\
	5000	-2664.26978512297 \\
	6000	-1915.17563391739 \\
};
\addplot [semithick, green!50.19607843137255!black, opacity=0.7]
table [row sep=\\]{%
	100	-3755.53073439119 \\
	400	-3603.52581157397 \\
	600	-3190.62281310262 \\
	800	-3498.49286422044 \\
	1000	-2975.61746572007 \\
	1400	-1880.30335427488 \\
	1800	-1037.03728537927 \\
	2000	-1107.09070724867 \\
	2500	-933.272336675277 \\
	3000	-1275.01944669296 \\
	3500	-545.981953630317 \\
	4000	-1001.89266052936 \\
	5000	-974.159223521582 \\
	6000	-714.337199558396 \\
};

\path [draw=red, fill=red, opacity=0.1]
(axis cs:100,-3800.81910439733)
--(axis cs:500,-4006.46917685267)
--(axis cs:1000,-3715.89594689068)
--(axis cs:2000,-3360.63102824293)
--(axis cs:5000,-2153.73703565278)
--(axis cs:10000,-366.532750818473)
--(axis cs:20000,-362.884749929778)
--(axis cs:50000,-360.578312317223)
--(axis cs:100000,-362.061421358459)
--(axis cs:200000,-360.370658457379)
--(axis cs:200000,-344.787831532855)
--(axis cs:200000,-344.787831532855)
--(axis cs:100000,-343.958659207948)
--(axis cs:50000,-345.806306823402)
--(axis cs:20000,-336.858469064363)
--(axis cs:10000,-340.74320743348)
--(axis cs:5000,16.0885737387132)
--(axis cs:2000,-881.023506669175)
--(axis cs:1000,-3068.66020545307)
--(axis cs:500,-3472.81910439733)
--(axis cs:100,-4010.81910439733)
--cycle;

\addplot [semithick, red]
table {%
	100 -3901.01
	500 -3739.644140625
	1000 -3392.27807617188
	2000 -2120.82726745605
	5000 -1068.82423095703
	10000 -353.637979125977
	20000 -349.87160949707
	50000 -353.192309570312
	100000 -353.010040283203
	200000 -352.579244995117
};
\path [draw=color2, fill=color2, opacity=0.1]
(axis cs:100,-3707.97431040648)
--(axis cs:500,-4267.7140126487)
--(axis cs:1000,-4064.3722751027)
--(axis cs:2000,-4041.20590627114)
--(axis cs:5000,-3992.01506184734)
--(axis cs:10000,-4051.69425138449)
--(axis cs:20000,-3854.34897353053)
--(axis cs:50000,-4216.38594060887)
--(axis cs:100000,-3788.47277892867)
--(axis cs:200000,-3833.89705235399)
--(axis cs:200000,-3651.63026594428)
--(axis cs:200000,-3651.63026594428)
--(axis cs:100000,-2987.50262042722)
--(axis cs:50000,-2770.32668856256)
--(axis cs:20000,-3380.35226532747)
--(axis cs:10000,-3738.76701682904)
--(axis cs:5000,-3261.78020495076)
--(axis cs:2000,-3728.11203021625)
--(axis cs:1000,-3658.65476091518)
--(axis cs:500,-3707.97431040648)
--(axis cs:100,-3707.97431040648)
--cycle;

\addplot [semithick, color2]
table {%
	100 -3987.84416152759
	500 -3987.84416152759
	1000 -3861.51351800894
	2000 -3884.65896824369
	5000 -3626.89763339905
	10000 -3895.23063410677
	20000 -3617.350619429
	50000 -3493.35631458572
	100000 -3387.98769967795
	200000 -3742.76365914914
};
\path [fill=color1, fill opacity=0.1]
(axis cs:100,-3895.31324666405)
--(axis cs:100,-3299.09480997658)
--(axis cs:200,-3787.22457594069)
--(axis cs:500,-1709.72020586082)
--(axis cs:1000,-110.005158782317)
--(axis cs:2000,-294.876785519283)
--(axis cs:5000,-56.0489169597911)
--(axis cs:10000,-354.905530908581)
--(axis cs:20000,-354.197479481587)
--(axis cs:50000,-358.604393533243)
--(axis cs:100000,-356.189450791771)
--(axis cs:200000,-360.916392882918)
--(axis cs:500000,-359.882100449941)
--(axis cs:1000000,-365.107626545826)
--(axis cs:1000000,-376.228164469799)
--(axis cs:1000000,-376.228164469799)
--(axis cs:500000,-380.51498206959)
--(axis cs:200000,-381.805067078019)
--(axis cs:100000,-379.651992079322)
--(axis cs:50000,-372.847254415976)
--(axis cs:20000,-369.751739268413)
--(axis cs:10000,-370.358653661732)
--(axis cs:5000,-1612.29336819646)
--(axis cs:2000,-3186.07122473462)
--(axis cs:1000,-2430.5640672919)
--(axis cs:500,-4123.9738249009)
--(axis cs:200,-4053.77806077806)
--(axis cs:100,-3895.31324666405)
--cycle;

\addplot [semithick, color1]
table {%
	100 -3597.20402832031
	200 -3920.50131835938
	500 -2916.84701538086
	1000 -1270.28461303711
	2000 -1740.47400512695
	5000 -834.171142578125
	10000 -362.632092285156
	20000 -361.974609375
	50000 -365.725823974609
	100000 -367.920721435547
	200000 -371.360729980469
	500000 -370.198541259766
	1000000 -370.667895507812
};

\path [fill=white!50.19607843137255!black, fill opacity=0.1]
(axis cs:100,-4636.04772567563)
--(axis cs:100,-3287.42176820599)
--(axis cs:200,-3282.19115570587)
--(axis cs:500,-3414.53671281802)
--(axis cs:1000,-3205.9481033758)
--(axis cs:2000,-3271.56743839064)
--(axis cs:5000,-4015.75888130889)
--(axis cs:10000,-3596.05485254866)
--(axis cs:20000,-4037.12350772343)
--(axis cs:50000,-4771.77287985678)
--(axis cs:100000,-4466.83997973117)
--(axis cs:200000,-1338.52616524368)
--(axis cs:500000,-4348.35259047911)
--(axis cs:1000000,-3168.68045137547)
--(axis cs:1000000,-4668.24706614392)
--(axis cs:1000000,-4668.24706614392)
--(axis cs:500000,-4348.35259047911)
--(axis cs:200000,-4471.60011629168)
--(axis cs:100000,-4968.67271378178)
--(axis cs:50000,-4869.75738084608)
--(axis cs:20000,-4845.45834970559)
--(axis cs:10000,-4426.47734601259)
--(axis cs:5000,-4436.04805364076)
--(axis cs:2000,-4234.08109623874)
--(axis cs:1000,-4237.75751256382)
--(axis cs:500,-4783.3292104102)
--(axis cs:200,-4832.43616376943)
--(axis cs:100,-4636.04772567563)
--cycle;

\addplot [semithick, white!50.19607843137255!black]
table {%
	100 -3961.73474694081
	200 -4057.31365973765
	500 -4098.93296161411
	1000 -3721.85280796981
	2000 -3752.82426731469
	5000 -4225.90346747483
	10000 -4011.26609928063
	20000 -4441.29092871451
	50000 -4820.76513035143
	100000 -4717.75634675647
	200000 -2905.06314076768
	500000 -4348.35259047911
	1000000 -3918.46375875969
};
\end{axis}

\end{tikzpicture}
	\end{subfigure}\hspace{-0.5em}
	\begin{subfigure}[t]{.66\columnwidth}
\begin{tikzpicture}

\definecolor{color0}{rgb}{1,0.647058823529412,0}
\definecolor{color1}{rgb}{0.117647058823529,0.564705882352941,1}
\definecolor{color2}{rgb}{0.61,0.20,0.64}

\begin{axis}[
height=5cm,
width=6.75cm,
xmin=250.0, 
xmax=50000,
ymin=-1824.6301808112, ymax=11153.5351782129,
try min ticks=3,
tick align=inside,
x grid style={white!82.74509803921568!black},
y grid style={white!82.74509803921568!black},
xmajorticks=true,
ymajorticks=true,
xminorticks=true,
yminorticks=true,
xmajorgrids,
ymajorgrids,
yticklabel style={font=\tiny},
xticklabel style={font=\tiny},
xmode=log,
xmajorgrids,
ymajorgrids,
xlabel={Sample Size},
xlabel style ={font=\scriptsize},
title={Quanser Cart-Pole},
title style={font=\scriptsize, yshift=-1pt, xshift=2pt},
]
\addlegendimage{no markers, color0}
\addlegendimage{no markers, green!50.19607843137255!black}
\addlegendimage{no markers, color1}
\addlegendimage{no markers, red}
\addlegendimage{no markers, white!50.19607843137255!black}
\path [draw=color0, fill=color0, opacity=0.1] (axis cs:100,-336.291324648284)
--(axis cs:100,4547.97717927399)
--(axis cs:250,9900.32311943864)
--(axis cs:500,10000.6655560479)
--(axis cs:1000,9999.9745152272)
--(axis cs:1500,10446.2068521835)
--(axis cs:2500,10000.0015281523)
--(axis cs:3000,9999.97924566496)
--(axis cs:5000,10840.457536161)
--(axis cs:6000,10469.3907297757)
--(axis cs:8000,10446.8051247267)
--(axis cs:10000,9999.98328586249)
--(axis cs:10000,9999.93795909257)
--(axis cs:10000,9999.93795909257)
--(axis cs:8000,5816.05297343455)
--(axis cs:6000,5614.8914810035)
--(axis cs:5000,7203.50769381516)
--(axis cs:3000,9999.95233682166)
--(axis cs:2500,9999.8215386545)
--(axis cs:1500,6202.1414593361)
--(axis cs:1000,9999.90731372464)
--(axis cs:500,9997.38394969849)
--(axis cs:250,5018.75863454248)
--(axis cs:100,-336.291324648284)
--cycle;

\path [draw=green!50.19607843137255!black, fill=green!50.19607843137255!black, opacity=0.1] (axis cs:100,3445.17401657395)
--(axis cs:100,9115.4662608356)
--(axis cs:250,9905.84417940503)
--(axis cs:500,10698.8192178942)
--(axis cs:1000,9999.99188830604)
--(axis cs:1500,10457.9050944474)
--(axis cs:2500,9999.97058500291)
--(axis cs:3000,9999.98617800509)
--(axis cs:5000,9999.99071261827)
--(axis cs:6000,9999.97055579963)
--(axis cs:6000,9999.92467675862)
--(axis cs:6000,9999.92467675862)
--(axis cs:5000,9999.9591166621)
--(axis cs:3000,9999.93021720613)
--(axis cs:2500,9999.9401083207)
--(axis cs:1500,8476.35641171665)
--(axis cs:1000,9999.92950197707)
--(axis cs:500,7674.67205301745)
--(axis cs:250,4907.25217418546)
--(axis cs:100,3445.17401657395)
--cycle;

\path [draw=white!50.19607843137255!black, fill=white!50.19607843137255!black, opacity=0.1] (axis cs:100,-676.865508497188)
--(axis cs:100,2983.64566209359)
--(axis cs:500,4580.73671097934)
--(axis cs:1000,2959.04572588988)
--(axis cs:1500,4576.18529487338)
--(axis cs:2000,3803.87030649452)
--(axis cs:3500,7052.2055275987)
--(axis cs:5000,4544.85103093397)
--(axis cs:8000,216.228225773447)
--(axis cs:10000,3021.79472273405)
--(axis cs:15000,4576.76036650158)
--(axis cs:20000,4531.75363761676)
--(axis cs:25000,3011.12901984341)
--(axis cs:25000,-639.356676493951)
--(axis cs:25000,-639.356676493951)
--(axis cs:20000,-378.498618941438)
--(axis cs:15000,-295.286029782063)
--(axis cs:10000,-620.404753861983)
--(axis cs:8000,89.7743948401283)
--(axis cs:5000,-355.63141829979)
--(axis cs:3500,1047.36200405852)
--(axis cs:2000,-289.71816964416)
--(axis cs:1500,-298.109173280861)
--(axis cs:1000,-708.600517841058)
--(axis cs:500,-287.498574830883)
--(axis cs:100,-676.865508497188)
--cycle;

\addplot [semithick, color0, opacity=0.7]
table [row sep=\\]{%
	100	2105.84292731285 \\
	250	7459.54087699056 \\
	500	9999.02475287318 \\
	1000	9999.94091447592 \\
	1500	8324.17415575981 \\
	2500	9999.9115334034 \\
	3000	9999.96579124331 \\
	5000	9021.98261498809 \\
	6000	8042.1411053896 \\
	8000	8131.42904908061 \\
	10000	9999.96062247753 \\
};
\addplot [semithick, green!50.19607843137255!black, opacity=0.7]
table [row sep=\\]{%
	100	6280.32013870478 \\
	250	7406.54817679524 \\
	500	9186.74563545585 \\
	1000	9999.96069514155 \\
	1500	9467.13075308204 \\
	2500	9999.95534666181 \\
	3000	9999.95819760561 \\
	5000	9999.97491464019 \\
	6000	9999.94761627913 \\
};

\addplot [semithick, white!50.19607843137255!black, opacity=0.7]
table [row sep=\\]{%
	100	1153.3900767982 \\
	500	2146.61906807423 \\
	1000	1125.22260402441 \\
	1500	2139.03806079626 \\
	2000	1757.07606842518 \\
	3500	4049.78376582861 \\
	5000	2094.60980631709 \\
	8000	153.001310306788 \\
	10000	1200.69498443604 \\
	15000	2140.73716835976 \\
	20000	2076.62750933766 \\
	25000	1185.88617167473 \\
};

\path [draw=color1, fill=color1, opacity=0.1]
(axis cs:100,36.3080908745794)
--(axis cs:100,134.416726520631)
--(axis cs:200,60.0188240538325)
--(axis cs:500,8964.95388906766)
--(axis cs:1000,9998.65620902289)
--(axis cs:2000,9999.55412385443)
--(axis cs:5000,11481.6939803676)
--(axis cs:10000,9999.8745074475)
--(axis cs:20000,9999.98789772233)
--(axis cs:50000,9999.92895861425)
--(axis cs:100000,9999.96170819475)
--(axis cs:200000,9999.9499534918)
--(axis cs:500000,9999.97470347144)
--(axis cs:1000000,9999.96880520385)
--(axis cs:1000000,9999.92053646642)
--(axis cs:1000000,9999.92053646642)
--(axis cs:500000,9999.41955132983)
--(axis cs:200000,9999.93547179807)
--(axis cs:100000,9999.93559567379)
--(axis cs:50000,9999.20806123279)
--(axis cs:20000,9999.85807210723)
--(axis cs:10000,9999.731581192)
--(axis cs:5000,4534.62337057255)
--(axis cs:2000,9995.33144065996)
--(axis cs:1000,9989.89016157996)
--(axis cs:500,1485.22687277923)
--(axis cs:200,22.5279970546279)
--(axis cs:100,36.3080908745794)
--cycle;

\addplot [semithick, color1]
table {%
	100 85.3624086976051
	200 41.2734105542302
	500 5225.09038092345
	1000 9994.27318530142
	2000 9997.4427822572
	5000 8008.15867547005
	10000 9999.80304431975
	20000 9999.92298491478
	50000 9999.56850992352
	100000 9999.94865193427
	200000 9999.94271264494
	500000 9999.69712740064
	1000000 9999.94467083514
};
\path [draw=color2, fill=color2, opacity=0.1]
(axis cs:100,5600.76846513341)
--(axis cs:100,500.735268666589)
--(axis cs:500,238.735268666589)
--(axis cs:1000,1177.7719879302)
--(axis cs:2000,-402.440884481127)
--(axis cs:5000,1049.99170065785)
--(axis cs:10000,-411.609358028239)
--(axis cs:20000,1918.95190367183)
--(axis cs:50000,4129.91979096261)
--(axis cs:100000,9505.03544171967)
--(axis cs:200000,5070.54018320638)
--(axis cs:200000,10997.4759367936)
--(axis cs:200000,10997.4759367936)
--(axis cs:100000,10099.6646582803)
--(axis cs:50000,10086.2327440374)
--(axis cs:20000,8710.70693452817)
--(axis cs:10000,6470.12586702824)
--(axis cs:5000,5993.70975234215)
--(axis cs:2000,4871.91644828113)
--(axis cs:1000,7867.2316206698)
--(axis cs:500,7050.76846513341)
--(axis cs:100,5600.76846513341)
--cycle;

\addplot [semithick, color2]
table {%
	100 3644.7518669
	500 3644.7518669
	1000 4522.5018043
	2000 2234.7377819
	5000 3521.8507265
	10000 3029.2582545
	20000 5314.8294191
	50000 7108.0762675
	100000 9802.35005
	200000 8034.00806
};

\path [draw=red, fill=red, opacity=0.1]
(axis cs:100,50.343031607478)
--(axis cs:500,46.3478443996021)
--(axis cs:1000,44.3285505326215)
--(axis cs:2000,49.8492845973649)
--(axis cs:5000,49.922433062446)
--(axis cs:10000,49.8645496510289)
--(axis cs:20000,49.8647612195768)
--(axis cs:50000,49.8510987938674)
--(axis cs:100000,49.9121885891092)
--(axis cs:200000,49.8802471945745)
--(axis cs:200000,49.9894679238337)
--(axis cs:200000,49.9894679238337)
--(axis cs:100000,49.9738015537131)
--(axis cs:50000,49.9789472876756)
--(axis cs:20000,49.9835801500521)
--(axis cs:10000,49.9530063487269)
--(axis cs:5000,49.9714046755911)
--(axis cs:2000,49.9993367710433)
--(axis cs:1000,51.0866235701617)
--(axis cs:100,50.343031607478)
--cycle;

\addplot [semithick, red]
table {%
	100 48.34543800354
	1000 47.7075870513916
	2000 49.9243106842041
	5000 49.9469188690186
	10000 49.9087779998779
	20000 49.9241706848145
	50000 49.9150230407715
	100000 49.9429950714111
	200000 49.9348575592041
};
\end{axis}

\end{tikzpicture}
	\end{subfigure}
	\begin{subfigure}[t]{0.66\columnwidth}
\begin{tikzpicture}

\definecolor{color0}{rgb}{1,0.647058823529412,0}
\definecolor{color1}{rgb}{0.117647058823529,0.564705882352941,1}
\definecolor{color2}{rgb}{0.61,0.20,0.64}

\begin{axis}[
height=5cm,
width=6.75cm,
xmin=250.0, 
xmax=50000,
ymin=-10, ymax=200,
try min ticks=3,
tick align=inside,
x grid style={white!82.74509803921568!black},
y grid style={white!82.74509803921568!black},
xmajorticks=true,
ymajorticks=true,
xminorticks=true,
yminorticks=true,
xmajorgrids,
ymajorgrids,
yticklabel style={font=\tiny},
xticklabel style={font=\tiny},
xmode=log,
xmajorgrids,
ymajorgrids,
xlabel={Sample Size},
xlabel style ={font=\scriptsize},
title={U-Maze},
title style={font=\scriptsize, yshift=-1pt, xshift=2pt},
]
\addlegendimage{no markers, color0}
\addlegendimage{no markers, green!50.19607843137255!black}
\addlegendimage{no markers, color1}
\addlegendimage{no markers, red}
\addlegendimage{no markers, white!50.19607843137255!black}
\path [draw=color0, fill=color0, opacity=0.1] (axis cs:100,-336.291324648284)
--(axis cs:100,4547.97717927399)
--(axis cs:250,9900.32311943864)
--(axis cs:500,10000.6655560479)
--(axis cs:1000,9999.9745152272)
--(axis cs:1500,10446.2068521835)
--(axis cs:2500,10000.0015281523)
--(axis cs:3000,9999.97924566496)
--(axis cs:5000,10840.457536161)
--(axis cs:6000,10469.3907297757)
--(axis cs:8000,10446.8051247267)
--(axis cs:10000,9999.98328586249)
--(axis cs:10000,9999.93795909257)
--(axis cs:10000,9999.93795909257)
--(axis cs:8000,5816.05297343455)
--(axis cs:6000,5614.8914810035)
--(axis cs:5000,7203.50769381516)
--(axis cs:3000,9999.95233682166)
--(axis cs:2500,9999.8215386545)
--(axis cs:1500,6202.1414593361)
--(axis cs:1000,9999.90731372464)
--(axis cs:500,9997.38394969849)
--(axis cs:250,5018.75863454248)
--(axis cs:100,-336.291324648284)
--cycle;

\path [draw=green!50.19607843137255!black, fill=green!50.19607843137255!black, opacity=0.1] (axis cs:100,3445.17401657395)
--(axis cs:100,9115.4662608356)
--(axis cs:250,9905.84417940503)
--(axis cs:500,10698.8192178942)
--(axis cs:1000,9999.99188830604)
--(axis cs:1500,10457.9050944474)
--(axis cs:2500,9999.97058500291)
--(axis cs:3000,9999.98617800509)
--(axis cs:5000,9999.99071261827)
--(axis cs:6000,9999.97055579963)
--(axis cs:6000,9999.92467675862)
--(axis cs:6000,9999.92467675862)
--(axis cs:5000,9999.9591166621)
--(axis cs:3000,9999.93021720613)
--(axis cs:2500,9999.9401083207)
--(axis cs:1500,8476.35641171665)
--(axis cs:1000,9999.92950197707)
--(axis cs:500,7674.67205301745)
--(axis cs:250,4907.25217418546)
--(axis cs:100,3445.17401657395)
--cycle;

\addplot [semithick, color0, opacity=0.7]
table [row sep=\\]{%
	100	2105.84292731285 \\
	250	7459.54087699056 \\
	500	9999.02475287318 \\
	1000	9999.94091447592 \\
	1500	8324.17415575981 \\
	2500	9999.9115334034 \\
	3000	9999.96579124331 \\
	5000	9021.98261498809 \\
	6000	8042.1411053896 \\
	8000	8131.42904908061 \\
	10000	9999.96062247753 \\
};
\addplot [semithick, green!50.19607843137255!black, opacity=0.7]
table [row sep=\\]{%
	100	6280.32013870478 \\
	250	7406.54817679524 \\
	500	9186.74563545585 \\
	1000	9999.96069514155 \\
	1500	9467.13075308204 \\
	2500	9999.95534666181 \\
	3000	9999.95819760561 \\
	5000	9999.97491464019 \\
	6000	9999.94761627913 \\
};

\path [draw=color2, fill=color2, opacity=0.1]
(axis cs:500,6.50743259610917)
--(axis cs:500,3.31056740389083)
--(axis cs:1000,5.35428530382275)
--(axis cs:2000,10.2181837357252)
--(axis cs:5000,7.33542866870088)
--(axis cs:10000,1.85532190120219)
--(axis cs:20000,1.39080117368241)
--(axis cs:50000,20.2125080180298)
--(axis cs:100000,19.755289069092)
--(axis cs:200000,6.15349379036897)
--(axis cs:200000,34.658506209631)
--(axis cs:200000,34.658506209631)
--(axis cs:100000,67.572710930908)
--(axis cs:50000,149.41749198197)
--(axis cs:20000,35.3211988263176)
--(axis cs:10000,38.4006780987978)
--(axis cs:5000,34.5825713312991)
--(axis cs:2000,36.0518162642748)
--(axis cs:1000,32.0677146961773)
--(axis cs:500,6.50743259610917)
--cycle;

\addplot [semithick, color2]
table {%
	500 4.909
	1000 18.711
	2000 23.135
	5000 20.959
	10000 20.128
	20000 18.356
	50000 84.815
	100000 43.664
	200000 20.406
};

\path [draw=color1, fill=color1, opacity=0.1]
(axis cs:100,1.39920781506626)
--(axis cs:100,21.8807921849337)
--(axis cs:200,46.0550338307571)
--(axis cs:500,71.9076885276712)
--(axis cs:1000,53.8125787511802)
--(axis cs:2000,52.3284152041945)
--(axis cs:5000,146.667596415468)
--(axis cs:10000,85.3447623975484)
--(axis cs:20000,96.6196093891695)
--(axis cs:50000,153.931542505472)
--(axis cs:100000,166.388133788329)
--(axis cs:200000,111.425832904738)
--(axis cs:500000,27.302985547962)
--(axis cs:1000000,185.847737845135)
--(axis cs:1000000,23.7922621548645)
--(axis cs:1000000,23.7922621548645)
--(axis cs:500000,15.277014452038)
--(axis cs:200000,-5.7658329047382)
--(axis cs:100000,20.4918662116706)
--(axis cs:50000,77.0484574945281)
--(axis cs:20000,6.82039061083049)
--(axis cs:10000,-11.1047623975484)
--(axis cs:5000,9.95240358453202)
--(axis cs:2000,10.2715847958055)
--(axis cs:1000,11.4074212488198)
--(axis cs:500,16.0023114723288)
--(axis cs:200,2.26496616924286)
--(axis cs:100,1.39920781506626)
--cycle;

\addplot [semithick, color1]
table {%
	100 11.64
	200 24.16
	500 43.955
	1000 32.61
	2000 31.3
	5000 78.31
	10000 37.12
	20000 51.72
	50000 115.49
	100000 93.44
	200000 52.83
	500000 21.29
	1000000 104.82
};

\path [draw=white!50.19607843137255!black, fill=white!50.19607843137255!black, opacity=0.1]
(axis cs:100,10.12)
--(axis cs:100,24.12)
--(axis cs:1000,43.8881039242147)
--(axis cs:2000,26.1552277238957)
--(axis cs:5000,38.4980756110923)
--(axis cs:10000,47.7351735144589)
--(axis cs:20000,43.0206467662454)
--(axis cs:50000,37.43041175224)
--(axis cs:100000,36.4444396377284)
--(axis cs:200000,41.4211013373271)
--(axis cs:500000,41.6548416867866)
--(axis cs:1000000,43.5529690074966)
--(axis cs:1000000,32.3322809925034)
--(axis cs:1000000,32.3322809925034)
--(axis cs:500000,26.1251583132134)
--(axis cs:200000,36.8456486626729)
--(axis cs:100000,30.8618103622716)
--(axis cs:50000,32.51583824776)
--(axis cs:20000,33.0181032337546)
--(axis cs:10000,39.8145764855411)
--(axis cs:5000,22.7681743889077)
--(axis cs:2000,10.6395222761043)
--(axis cs:1000,12.5488960757853)
--(axis cs:100,10.12)
--cycle;

\addplot [semithick, white!50.19607843137255!black]
table {%
	100 22.12
	1000 28.2185
	2000 18.397375
	5000 30.633125
	10000 43.774875
	20000 38.019375
	50000 34.973125
	100000 33.653125
	200000 39.133375
	500000 33.89
	1000000 37.942625
};

\path [draw=red, fill=red, opacity=0.1]
(axis cs:500,5.3845636490188)
--(axis cs:100,2.58743625847479)
--(axis cs:1000,2.05689838919548)
--(axis cs:2000,1.84721429373931)
--(axis cs:5000,1.26589238926829)
--(axis cs:10000,1.97588135782019)
--(axis cs:20000,2.05543644501086)
--(axis cs:50000,1.60935587092779)
--(axis cs:100000,2.83097763035002)
--(axis cs:200000,2.66504625601929)
--(axis cs:200000,19.0489535875781)
--(axis cs:200000,19.0489535875781)
--(axis cs:100000,7.23902249362764)
--(axis cs:50000,3.77064417198756)
--(axis cs:20000,3.21656355785016)
--(axis cs:10000,2.80611873087152)
--(axis cs:5000,7.94010774806082)
--(axis cs:2000,3.22078576062013)
--(axis cs:1000,3.33710164036843)
--(axis cs:100,5.3845636490188)
--cycle;

\addplot [semithick, red]
table {%
	100 3.9859999537468
	1000 2.69700001478195
	2000 2.53400002717972
	5000 4.60300006866455
	10000 2.39100004434586
	20000 2.63600000143051
	50000 2.69000002145767
	100000 5.03500006198883
	200000 10.8569999217987
};
\path [draw=color0, fill=color0, opacity=0.1]
(axis cs:100,46.1410628079585)
--(axis cs:100,36.1589371920415)
--(axis cs:200,36.1589371920415)
--(axis cs:500,10.1100328732646)
--(axis cs:800,17.6250482117362)
--(axis cs:1000,30.9473706474164)
--(axis cs:2000,2.98138945515031)
--(axis cs:3000,2.58961695109708)
--(axis cs:5000,2.59446562261407)
--(axis cs:5000,3.49793437738593)
--(axis cs:5000,3.49793437738593)
--(axis cs:3000,3.48398304890292)
--(axis cs:2000,3.72661054484969)
--(axis cs:1000,106.104229352584)
--(axis cs:800,50.7037517882638)
--(axis cs:500,29.6887671267354)
--(axis cs:200,46.1410628079585)
--(axis cs:100,46.1410628079585)
--cycle;

\path [draw=green!50.1960784313725!black, fill=green!50.1960784313725!black, opacity=0.1]
(axis cs:100,29.1647487571613)
--(axis cs:100,26.3876512428387)
--(axis cs:200,26.3876512428387)
--(axis cs:500,14.4116094599666)
--(axis cs:800,10.1996337472427)
--(axis cs:1000,11.6206588343577)
--(axis cs:2000,2.12605434347902)
--(axis cs:3000,1.42382257633945)
--(axis cs:5000,1.52649582888631)
--(axis cs:5000,6.05670417111369)
--(axis cs:5000,6.05670417111369)
--(axis cs:3000,10.5873774236606)
--(axis cs:2000,6.14194565652097)
--(axis cs:1000,25.4497411656423)
--(axis cs:800,16.3135662527573)
--(axis cs:500,23.4471905400334)
--(axis cs:200,29.1647487571613)
--(axis cs:100,29.1647487571613)
--cycle;

\addplot [semithick, color0]
table {%
	100 41.15
	200 41.15
	500 19.8994
	800 34.1644
	1000 68.5258
	2000 3.354
	3000 3.0368
	5000 3.0462
};
\addplot [semithick, green!50.1960784313725!black]
table {%
	100 27.7762
	200 27.7762
	500 18.9294
	800 13.2566
	1000 18.5352
	2000 4.134
	3000 6.0056
	5000 3.7916
};
\end{axis}
\end{tikzpicture}
	\end{subfigure}\hspace{0.5em}
	\vspace{-0.2cm}
	\centering
	\begin{tikzpicture}

\definecolor{color0}{rgb}{1,0.647058823529412,0}
\definecolor{color1}{rgb}{0.117647058823529,0.564705882352941,1}
\definecolor{color2}{rgb}{0.61,0.20,0.64}

\begin{axis}[
height=2cm,
width=8cm,
hide axis,
xmin=10,
xmax=50,
ymin=0,
ymax=0.5,
legend columns=-1,
legend entries={{NOPG-D},{NOPG-S}, {BEAR}, {BRAC}, {MOPO}, {MOReL}},
legend style={at={(1.0,0.1)}, anchor=north, draw=none, font=\scriptsize, column sep=1ex, line width=2 pt},
]

\addlegendimage{no markers, color0}
\addlegendimage{no markers, green!50.19607843137255!black}
\addlegendimage{no markers, white!50.19607843137255!black}
\addlegendimage{no markers, color1}
\addlegendimage{no markers, red}
\addlegendimage{no markers, color2}

\end{axis}

\end{tikzpicture}
	\caption{Comparison of NOPG in its deterministic and stochastic versions to state-of-the-art \textbf{offline} algorithms on continuous control tasks: Swing-Up Pendulum with \textbf{random agent} (left), the Cart-Pole stabilization (center) and U-Maze with D4RL dataset (right). The figures depict the mean and 95\% confidence interval over 10 trials. \myalg~ is competitive with the sample efficiency of the considered baselines. \textbf{Note the log-scale along the $x$-axis}. }
	\label{figure:offline-comparison}
\end{figure*}
 
\subsection{Policy Improvement}
In the previous section, we analyzed the statistical properties of our estimator. Conversely, in this section, we use the NOPG estimate to fully optimize the policy. 
At the current state, NOPG is a batch algorithm, meaning that it receives as input a set of data, and it outputs an optimized policy, without any interaction with the environment. 
We study the sample efficiency of the overall algorithm. We compare it with both other batch and online algorithms. 
Please notice that online algorithms, such as DDPG-On, TD3 and SAC, can acquire more valuable samples during the optimization process. Therefore, in a direct comparison, batch algorithms are in disadvantage.
\subsubsection{Uniform Grid}
In this experiment we analyze the performance of \myalg~under a uniformly sampled dataset, since, as the theory suggests, this scenario should yield the least biased estimate of NOPG. We generate datasets from a grid over the state-action space of the pendulum environment with different granularities. We test our algorithm by optimizing a policy encoded with a neural-network for a fixed amount of iterations. The policy is composed of a single hidden layer with $50$ neurons and ReLU activations. This configuration is fixed across all the different experiments and algorithms for the remainder of this document.
The resulting policy is evaluated on trajectories of $500$ steps starting from the bottom position.
The leftmost plot in Figure~\ref{figure:comparison}, depicts the performance against different dataset sizes, showing that \myalg~is able to solve the task with $450$ samples. Figure~\ref{figure:muv} is an example of the value function and state distribution estimates of \myalg-D at the beginning and after $300$ optimization steps. The ability to predict the state-distribution is particularly interesting for robotics, as it is possible to predict in advance whether the policy will move towards dangerous states. Note that this experiment is not applicable to PWIS, as it does not admit non-trajectory-based data.
\subsubsection{Comparison with Online Algorithms}
\label{sec:offline-empirical}
In contrast to the uniform grid experiment, here we collect the datasets using trajectories from a random agent in the pendulum and the cart-pole environments. 
In the pendulum task, the trajectories are generated starting from the up-right position and applying a policy composed of a mixture of two Gaussians. The policies are evaluated starting from the bottom position with an episode length of $500$ steps. The datasets used in the cart-pole experiments are collected using a uniform policy starting from the upright position until the end of the episode, which occurs when the absolute value of the  angle $\theta$ surpasses $3\deg$. The optimization policy is evaluated for $10^4$ steps. The reward is $r_t = \cos \theta_t$. Since $\theta$ is defined as $0$ in the top-right position, a return of $10^4$ indicates an optimal policy behavior. 

We analyze the sample efficiency by testing \myalg~in an offline fashion with pre-collected samples, on a different number of trajectories. In addition, we provide the learning curve of DDPG, TD3 and SAC using the implementation in Mushroom \cite{deramo_mushroomrl_2020}. For a fixed size of the dataset, we optimize DDPG-Off and NOPG for a fixed number of steps.  For NOPG, which is offline, we select the policy from the last optimization step.
The two rightmost plots in Figure~\ref{figure:comparison} highlight that our algorithm has superior sample efficiency by more than one order of magnitude w.r.t. the considered online algorithms (note the log-scale on the x-axis).

To validate the resulting policy learned in the simulated cartpole (Figure~\ref{fig:tasks}), we apply the final learned controller on a real Quanser cart-pole, and observe a successful stabilizing behavior as can be seen in the supplementary video.

\reviewAll{
\subsubsection{Comparison with Offline Algorithms}
We use the same environments provided in Section~\ref{sec:offline-empirical} to compare against state-of-the-art offline algorithms. To this end, we used the same datasets on Cart-Pole and Pendulum-v0 to train BRAC, BEAR, MOPO and MOReL. Furthermore, to allow a fairer comparison using the D4RL dataset, we tested our algorithm with the aforementioned baselines on the U-Maze.
To perform the usual sample-analysis, we sub-sampled the dataset in smaller datasets. The results of such analysis can be viewed in Figure~\ref{figure:offline-comparison}. NOPG exhibits a competitive performance w.r.t. the baselines. In more detail, BEAR and MOReL perform suboptimally in our experiments, MOPO perform similarly to NOPG in Pendulum-v0, while failing in Quanser Cart-Pole. BRAC exhibit a similar behavior with our NOPG. All the algorithms seems, however, to fail with the U-Maze, probably due to the scarcity of data, and due to its sparse reward.
} 
\begin{figure}[t]
	\centering
	\begin{subfigure}{0.49\columnwidth}

\begin{tikzpicture}

\definecolor{color0}{rgb}{1,0.647058823529412,0}

\begin{axis}[
height=4.6cm,
width=4.8cm,
legend cell align={left},
legend columns=1,
legend entries={{NOPG-D},{NOPG-S}, {Behavioral Cloning}, {Human baseline}},
legend style={at={(0.5,-0.5)}, anchor=center, draw=none, font=\scriptsize, line width=2pt},
tick pos=left,
x grid style={white!82.74509803921568!black},
xlabel={Number of trajectories},
xmajorgrids,
yticklabel style={font=\tiny},
xticklabel style={font=\tiny},
xmin=1, xmax=10,
y grid style={white!82.74509803921568!black},
ylabel={Return},
ymajorgrids,
ymin=-1000.243393203803, ymax=-53.0887427201282,
ytick={-900, -800, -700, -600, -500,-400,-300,-200,-100},
scaled y ticks=base 10:-2,
xlabel style ={font=\scriptsize},
ylabel style ={font=\scriptsize, yshift=-5.0pt},
title={Mountain Car},
title style={font=\scriptsize}
]
\addlegendimage{no markers, color0}
\addlegendimage{no markers, green!50.19607843137255!black}
\addlegendimage{no markers, red!50!blue}
\addlegendimage{no markers, black, dashed}
\path [draw=color0, fill=color0, opacity=0.1] (axis cs:1,-390.043726984244)
--(axis cs:1,-158.356273015756)
--(axis cs:2,-83.5504952505809)
--(axis cs:3,-100.087436344246)
--(axis cs:4,-94.0036470748486)
--(axis cs:5,-71.2321359239316)
--(axis cs:6,-75.2454775399103)
--(axis cs:7,-76.8341531201946)
--(axis cs:8,-72.0814498628687)
--(axis cs:9,-95.216)
--(axis cs:10,-78.8726402026378)
--(axis cs:10,-93.5273597973622)
--(axis cs:10,-93.5273597973622)
--(axis cs:9,-96.784)
--(axis cs:8,-85.9185501371313)
--(axis cs:7,-95.3658468798054)
--(axis cs:6,-90.3545224600897)
--(axis cs:5,-82.7678640760684)
--(axis cs:4,-119.396352925151)
--(axis cs:3,-128.712563655754)
--(axis cs:2,-118.449504749419)
--(axis cs:1,-390.043726984244)
--cycle;

\path [draw=green!50.19607843137255!black, fill=green!50.19607843137255!black, opacity=0.1] (axis cs:1,-410.956186102897)
--(axis cs:1,-155.488258341548)
--(axis cs:2,-81.3742096464664)
--(axis cs:3,-94.0336100314196)
--(axis cs:4,-82.7473071310021)
--(axis cs:5,-82.7377886289309)
--(axis cs:6,-78.8041515604406)
--(axis cs:7,-76.4060938206283)
--(axis cs:8,-85.089922003053)
--(axis cs:9,-76.0101712106379)
--(axis cs:10,-74.5548894076772)
--(axis cs:10,-92.3339994812117)
--(axis cs:10,-92.3339994812117)
--(axis cs:9,-93.3231621226955)
--(axis cs:8,-98.2434113302803)
--(axis cs:7,-93.3439061793717)
--(axis cs:6,-93.4458484395594)
--(axis cs:5,-115.706655815514)
--(axis cs:4,-110.363803980109)
--(axis cs:3,-102.21638996858)
--(axis cs:2,-109.514679242423)
--(axis cs:1,-410.956186102897)
--cycle;

\addplot [very thin, color0, opacity=0.7, mark=*, mark size=1, mark options={solid}]
table [row sep=\\]{%
	1	-274.2 \\
	2	-101 \\
	3	-114.4 \\
	4	-106.7 \\
	5	-77 \\
	6	-82.8 \\
	7	-86.1 \\
	8	-79 \\
	9	-96 \\
	10	-86.2 \\
};
\addplot [very thin, green!50.19607843137255!black, opacity=0.7, mark=*, mark size=1, mark options={solid}]
table [row sep=\\]{%
	1	-283.222222222222 \\
	2	-95.4444444444444 \\
	3	-98.125 \\
	4	-96.5555555555556 \\
	5	-99.2222222222222 \\
	6	-86.125 \\
	7	-84.875 \\
	8	-91.6666666666667 \\
	9	-84.6666666666667 \\
	10	-83.4444444444444 \\
};
\addplot [black, opacity=1., dashed, very thick]
table [row sep=\\]{%
	0	-434.1 \\
	10	-434.1 \\
};
\addplot [red!50!blue, opacity=1., very thick]
table [row sep=\\]{%
	0	-878.1 \\
	10	-878.1 \\
};
\path [draw=black, fill opacity=0] (axis cs:0,-452.243393203803)
--(axis cs:0,-53.0887427201282);

\path [draw=black, fill opacity=0] (axis cs:1,-452.243393203803)
--(axis cs:1,-53.0887427201282);

\path [draw=black, fill opacity=0] (axis cs:1,0)
--(axis cs:10,0);

\path [draw=black, fill opacity=0] (axis cs:1,1)
--(axis cs:10,1);
\end{axis}
\end{tikzpicture}
	\end{subfigure}\hspace{-0.1em}
	\begin{subfigure}{0.49\columnwidth}
		\input{}
	\end{subfigure}\vspace{-0.5em}
	\caption{With a small amount of data NOPG is able to reach a policy that surpasses the human demonstrator (dashed line) in the mountain car environment. Depicted are the mean and 95\% confidence over 10 trials (left). An example of a human-demonstrated trajectory and the relative optimized version obtained with \myalg~(right). Although the human trajectories in the dataset are suboptimal, \myalg~converges to an optimal solution (right).}
	\label{figure:mountain}
\end{figure} 
\subsubsection{Human Demonstrated Data}
In robotics, learning from human demonstrations is crucial in order to obtain better sample efficiency and to avoid dangerous policies. This experiment is designed to showcase the ability of our algorithm to deal with such demonstrations without the need for explicit knowledge of the underlying behavioral policy. The experiment is executed in a completely offline fashion after collecting the human dataset, i.e., without any further interaction with the environment. This setting is different from the classical imitation learning and subsequent optimization \cite{kober_policy_2009}.
As an environment we choose the continuous mountain car task from OpenAI. We provide $10$ demonstrations recorded by a human operator and assigned a reward of $-1$ to every step. A demonstration ends when the human operator surpasses the limit of $500$ steps, or arrives at the goal position. The human operator explicitly provides sub-optimal trajectories, as we are interested in analyzing whether \myalg~is able to take advantage of the human demonstrations to learn a better policy than that of the human, without any further interaction with the environment. To obtain a sample analysis, we evaluate NOPG on randomly selected sub-sets of the trajectories from the human demonstrations. Figure~\ref{figure:mountain} shows the average performance as a function of the number of demonstrations as well as an example of a human-demonstrated trajectory. \reviewD{A vanilla behavioral cloning approach trained with the whole dataset leads to a worse performance w.r.t. the human demonstrator. In fact, by simply replicating the demonstrator, the cloned behavior is most of the times not able to reach the flag.}
Notice that both \myalg-S and \myalg-D manage to learn a policy that surpasses the human operator's performance and reach the optimal policy with two demonstrated trajectories. 

\reviewAll{
\subsubsection{Test on a Higher-Dimension Task}
All the considered tasks, are relatively low dimensional, with the higher-dimension of 6, accounting for both states and actions, in the U-Maze.
We tested NOPG also on the Hopper, using the D4RL dataset. NOPG improves w.r.t. the number of samples provided, altough it does not reach a satisfactoy policy (Figure~\ref{figure:hopper}). 
The low performance is probably caused by the poor scaling to high dimensions of nonparametric methods, and the typical need of a large amount of samples due to the complexity of the considered task.}
\begin{figure}[t]
	\begin{subfigure}[t]{0.95\columnwidth}
\begin{tikzpicture}

\definecolor{color0}{rgb}{1,0.647058823529412,0}
\definecolor{color1}{rgb}{0.117647058823529,0.564705882352941,1}
\definecolor{color2}{rgb}{0.61,0.20,0.64}

\begin{axis}[
height=5cm,
width=9cm,
xmin=250.0, 
xmax=20000,
ymin=-10, ymax=200,
try min ticks=3,
tick align=inside,
x grid style={white!82.74509803921568!black},
y grid style={white!82.74509803921568!black},
xmajorticks=true,
ymajorticks=true,
xminorticks=true,
yminorticks=true,
xmajorgrids,
ymajorgrids,
yticklabel style={font=\tiny},
xticklabel style={font=\tiny},
xmode=log,
xmajorgrids,
ymajorgrids,
xlabel={Sample Size},
xlabel style ={font=\scriptsize},
title={Hopper},
title style={font=\scriptsize, yshift=-1pt, xshift=2pt},
]
\addlegendimage{no markers, color0}
\addlegendimage{no markers, green!50.19607843137255!black}
\addlegendimage{no markers, color1}
\addlegendimage{no markers, red}
\addlegendimage{no markers, white!50.19607843137255!black}


\addplot [semithick, color0]
table {%
	3000 10.2046237594006
	5000 83.995039462018
	10000 165.005737915802
	20000 131.420115291166
};
\addplot [semithick, green!50.1960784313725!black]
table {%
	3000 72.8481313374758
	5000 77.1773127385616
	10000 83.0342710370064
	20000 101.892318804502
};
\path [draw=color0, fill=color0, opacity=0.1]
(axis cs:100,46.1410628079585)
--(axis cs:100,36.1589371920415)
--(axis cs:200,36.1589371920415)
--(axis cs:500,10.0751033670617)
--(axis cs:800,17.6250482117362)
--(axis cs:1000,30.9473706474164)
--(axis cs:3000,-12.2736510585401)
--(axis cs:5000,31.8383558116445)
--(axis cs:10000,-117.841443954345)
--(axis cs:20000,12.9725474335599)
--(axis cs:20000,249.867683148773)
--(axis cs:20000,249.867683148773)
--(axis cs:10000,447.852919785949)
--(axis cs:5000,136.151723112392)
--(axis cs:3000,32.6828985773412)
--(axis cs:1000,106.104229352584)
--(axis cs:800,50.7037517882638)
--(axis cs:500,33.4852966329383)
--(axis cs:200,46.1410628079585)
--(axis cs:100,46.1410628079585)
--cycle;

\path [draw=green!50.1960784313725!black, fill=green!50.1960784313725!black, opacity=0.1]
(axis cs:100,29.1647487571613)
--(axis cs:100,26.3876512428387)
--(axis cs:200,26.3876512428387)
--(axis cs:500,12.1488241983539)
--(axis cs:800,10.1996337472427)
--(axis cs:1000,11.6206588343577)
--(axis cs:3000,-53.9168495955546)
--(axis cs:5000,39.9276702784132)
--(axis cs:10000,-11.4863609623606)
--(axis cs:20000,-85.9872720511631)
--(axis cs:20000,289.771909660168)
--(axis cs:20000,289.771909660168)
--(axis cs:10000,177.554903036373)
--(axis cs:5000,114.42695519871)
--(axis cs:3000,199.613112270506)
--(axis cs:1000,25.4497411656423)
--(axis cs:800,16.3135662527573)
--(axis cs:500,20.4883758016461)
--(axis cs:200,29.1647487571613)
--(axis cs:100,29.1647487571613)
--cycle;

\addplot [semithick, color0]
table {%
	100 41.15
	200 41.15
	500 21.7802
	800 34.1644
	1000 68.5258
	3000 10.2046237594006
};
\addplot [semithick, green!50.1960784313725!black]
table {%
	100 27.7762
	200 27.7762
	500 16.3186
	800 13.2566
	1000 18.5352
	3000 72.8481313374758
};
\end{axis}
\end{tikzpicture}
	\end{subfigure}
	\vspace{-0.2cm}
	\centering
	\begin{tikzpicture}

\definecolor{color0}{rgb}{1,0.647058823529412,0}
\definecolor{color1}{rgb}{0.117647058823529,0.564705882352941,1}
\definecolor{color2}{rgb}{0.61,0.20,0.64}

\begin{axis}[
height=2cm,
width=8cm,
hide axis,
xmin=10,
xmax=50,
ymin=0,
ymax=0.5,
legend columns=-1,
legend entries={{NOPG-D},{NOPG-S}},
legend style={at={(1.0,0.1)}, anchor=north, draw=none, font=\scriptsize, column sep=1ex, line width=2 pt},
]

\addlegendimage{no markers, color0}
\addlegendimage{no markers, green!50.19607843137255!black}
\end{axis}

\end{tikzpicture}
	\caption{Performance of NOPG on the Hopper-v0, using the D4RL dataset.}
	\label{figure:hopper}
\end{figure}
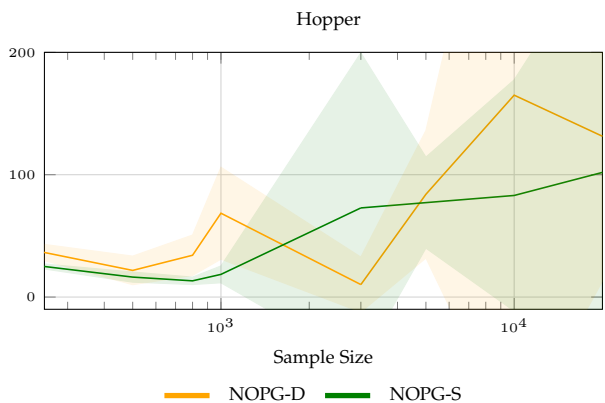

\reviewB{
\subsection{Computational Complexity}
Our nonparametric approach involves a layer of computation that is not usual in classic deep reinforcement learning solutions. In particular, the construction of the matrix $\approxx{\mathbf{P}}_{\pi}^\gamma$ and the inversion of $\bm{\Lambda}_{\pi}$ can be expensive. 
In the following, we analyze the computational resources required by NOPG.
In particular, we use different sizes of the Pendulum-v0 dataset and investigate the time required to compute an iteration of Algorithm~\ref{alg:kbpgalg}. 
As stated in Section~\ref{sec:gradient-estimation}, the iteration time grows quadratically w.r.t. the number of samples contained in the dataset. The computational cost can be lowered by lowering the number of non-zero element of the matrix $\approxx{\mathbf{P}}_{\pi}^\gamma$ (Figure~\ref{fig:complexity}).
However, while our algorithm considers all the samples at every iteration, classic deep reinforcement learning algorithms usually consider only a fixed amount of data (called "mini batch"), and the computational cost of the iterations results, therefore, to be constant. }

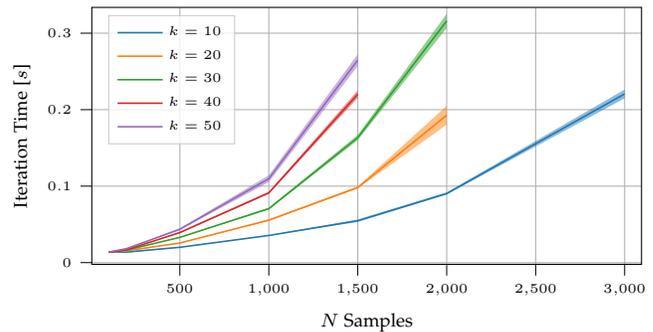
\begin{figure}
\begin{tikzpicture}

\definecolor{color0}{rgb}{0.12156862745098,0.466666666666667,0.705882352941177}
\definecolor{color1}{rgb}{1,0.498039215686275,0.0549019607843137}
\definecolor{color2}{rgb}{0.172549019607843,0.627450980392157,0.172549019607843}
\definecolor{color3}{rgb}{0.83921568627451,0.152941176470588,0.156862745098039}
\definecolor{color4}{rgb}{0.580392156862745,0.403921568627451,0.741176470588235}

\begin{axis}[
legend cell align={left},
legend style={
  fill opacity=0.8,
  draw opacity=1,
  text opacity=1,
  at={(0.03,0.97)},
  anchor=north west,
  draw=white!80!black,
  font=\tiny
},
height=5cm,
width=\columnwidth,
tick align=outside,
tick pos=left,
x grid style={white!69.0196078431373!black},
xlabel={$N$ Samples},
xmin=5, xmax=3095,
xtick style={color=black},
y grid style={white!69.0196078431373!black},
ylabel={Iteration Time [$s$]},
ymin=-0.00221767009727559, ymax=0.333729191019424,
xmajorgrids,
ymajorgrids,
yticklabel style={font=\tiny},
xticklabel style={font=\tiny},
xlabel style ={font=\scriptsize},
ylabel style ={font=\scriptsize},
ytick style={color=black}
]
\path [fill=color0, fill opacity=0.5]
(axis cs:100,0.0137863737520873)
--(axis cs:100,0.0141375281237265)
--(axis cs:200,0.0145744307539414)
--(axis cs:500,0.0203878986855752)
--(axis cs:1000,0.0357563429910915)
--(axis cs:1500,0.0563818518377724)
--(axis cs:2000,0.0918384657347321)
--(axis cs:3000,0.226368745368315)
--(axis cs:3000,0.214857419131603)
--(axis cs:3000,0.214857419131603)
--(axis cs:2000,0.0888995065247894)
--(axis cs:1500,0.0531576039999012)
--(axis cs:1000,0.0352263251438097)
--(axis cs:500,0.0197280935426149)
--(axis cs:200,0.012994333402824)
--(axis cs:100,0.0137863737520873)
--cycle;

\path [fill=color1, fill opacity=0.5]
(axis cs:100,0.0136462833259693)
--(axis cs:100,0.013798238260131)
--(axis cs:200,0.0160065346475615)
--(axis cs:500,0.0258540678141859)
--(axis cs:1000,0.056471843368973)
--(axis cs:1500,0.0995456495761558)
--(axis cs:2000,0.205205026285465)
--(axis cs:2000,0.180340704305355)
--(axis cs:2000,0.180340704305355)
--(axis cs:1500,0.0969307039632051)
--(axis cs:1000,0.0545247208603564)
--(axis cs:500,0.025385157785315)
--(axis cs:200,0.0150136828240911)
--(axis cs:100,0.0136462833259693)
--cycle;

\path [fill=color2, fill opacity=0.5]
(axis cs:100,0.0136249820459905)
--(axis cs:100,0.0139036218256729)
--(axis cs:200,0.0166643582845644)
--(axis cs:500,0.0333879561070444)
--(axis cs:1000,0.0716938447110711)
--(axis cs:1500,0.16702468217142)
--(axis cs:2000,0.325413117690066)
--(axis cs:2000,0.30669716012162)
--(axis cs:2000,0.30669716012162)
--(axis cs:1500,0.159985631498421)
--(axis cs:1000,0.0693869586302646)
--(axis cs:500,0.0326869980318173)
--(axis cs:200,0.0161317067280495)
--(axis cs:100,0.0136249820459905)
--cycle;

\path [fill=color3, fill opacity=0.5]
(axis cs:100,0.0132744607400364)
--(axis cs:100,0.01363504955456)
--(axis cs:200,0.016937933200962)
--(axis cs:500,0.0401580722984762)
--(axis cs:1000,0.0925724595130578)
--(axis cs:1500,0.225180382798268)
--(axis cs:1500,0.215236642026087)
--(axis cs:1500,0.215236642026087)
--(axis cs:1000,0.0898351845468651)
--(axis cs:500,0.0384852496924906)
--(axis cs:200,0.0166207003914317)
--(axis cs:100,0.0132744607400364)
--cycle;

\path [fill=color4, fill opacity=0.5]
(axis cs:100,0.0134084128377424)
--(axis cs:100,0.0140657256340729)
--(axis cs:200,0.0184324323293347)
--(axis cs:500,0.0448838696174679)
--(axis cs:1000,0.114188568897326)
--(axis cs:1500,0.272655629066306)
--(axis cs:1500,0.257117977128084)
--(axis cs:1500,0.257117977128084)
--(axis cs:1000,0.105224658607828)
--(axis cs:500,0.0424685121947337)
--(axis cs:200,0.0181061156103897)
--(axis cs:100,0.0134084128377424)
--cycle;

\addplot [semithick, color0]
table {%
	100 0.0139619509379069
	200 0.0137843820783827
	500 0.0200579961140951
	1000 0.0354913340674506
	1500 0.0547697279188368
	2000 0.0903689861297607
	3000 0.220613082249959
};
\addlegendentry{$k=10$}
\addplot [semithick, color1]
table {%
	100 0.0137222607930501
	200 0.0155101087358263
	500 0.0256196127997504
	1000 0.0554982821146647
	1500 0.0982381767696804
	2000 0.19277286529541
};
\addlegendentry{$k=20$}
\addplot [semithick, color2]
table {%
	100 0.0137643019358317
	200 0.016398032506307
	500 0.0330374770694309
	1000 0.0705404016706679
	1500 0.16350515683492
	2000 0.316055138905843
};
\addlegendentry{$k=30$}
\addplot [semithick, color3]
table {%
	100 0.0134547551472982
	200 0.0167793167961968
	500 0.0393216609954834
	1000 0.0912038220299615
	1500 0.220208512412177
};
\addlegendentry{$k=40$}
\addplot [semithick, color4]
table {%
	100 0.0137370692359077
	200 0.0182692739698622
	500 0.0436761909061008
	1000 0.109706613752577
	1500 0.264886803097195
};
\addlegendentry{$k=50$}
\end{axis}

\end{tikzpicture}
	\caption{\reviewB{For a fixed amount $k$ of non-zero elements per row of the transition matrix, the iteration time grows quadratically w.r.t. the number of samples. A lower $k$ requires a shorter iteration time.} \label{fig:complexity}}	
\end{figure}


\section{Conclusion and Future Work}
In this paper, we presented and analyzed an off-policy gradient technique  \textsl{Nonparametric Off-policy Policy Gradient} (NOPG) \cite{tosatto_nonparametric_2020}. Our estimator overcomes the main issues of the techniques of off-policy gradient estimation. On the one hand, in contrast to semi-gradient approaches, it delivers a full-gradient estimate; and on the other hand, it avoids the high variance of importance sampling by phrasing the problem with nonparametric techniques.
The empirical analysis clearly showed a better gradient estimate in terms of bias, variance, and direction. Our experiments also showed that our method has high sample efficiency and that our algorithm can be behavioral-agnostic and cope with unstructured data.

However, our algorithm, which is built on nonparametric techniques, suffers from the curse of dimensionality. The future work aims to mitigate this problem. We plan to investigate better sparsification techniques united with an adaptive bandwidth. The promising properties of the proposed gradient estimation can, in the future, be adapted to the parametric inference, to extend our approach to the more versatile deep-learning setting.

\ifCLASSOPTIONcompsoc
  \section*{Acknowledgments}
\else
  \section*{Acknowledgment}
\fi

  The research is financially supported by the Bosch-Forschungsstiftung program and the European Union’s Horizon 2020 research and innovation program under grant agreement \#640554 (SKILLS4ROBOTS).

\ifCLASSOPTIONcaptionsoff
  \newpage
\fi



%

\bibliographystyle{IEEEtran}
\bibliography{zotero}

%
%
\vspace{-5em}
\begin{IEEEbiography}[{\includegraphics[width=1in,height=1.25in,clip,keepaspectratio]{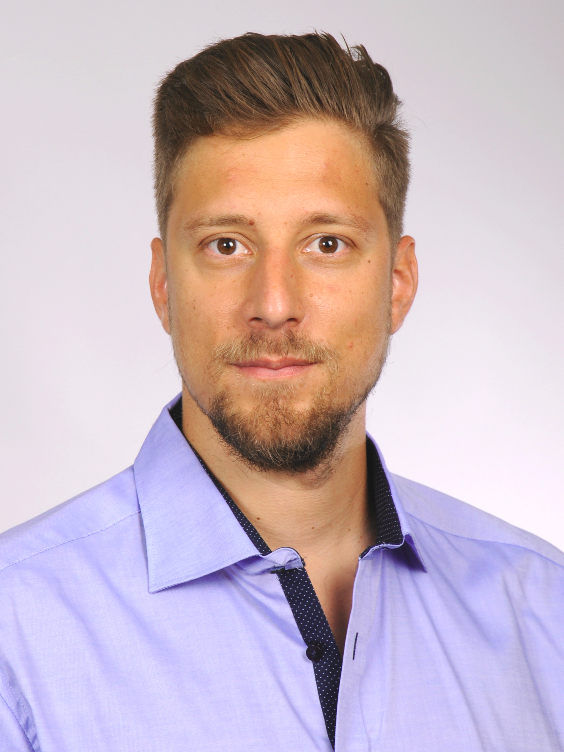}}]{Samuele Tosatto} received his Ph.D. in Computer Science from the Technical University of Darmstadt in 2020. He previously obtained his M.Sc. in the Polytechnic University of Milan. He is currently a post-doc fellow with the University of Alberta, working in the Reinforcement Learning and Artificial Intelligence group. His research interests center around reinforcement learning with a focus on its application to real robotic systems.
\end{IEEEbiography}
\vspace{-5em}
\begin{IEEEbiography}[{\includegraphics[width=1in,height=1.25in,clip,keepaspectratio]{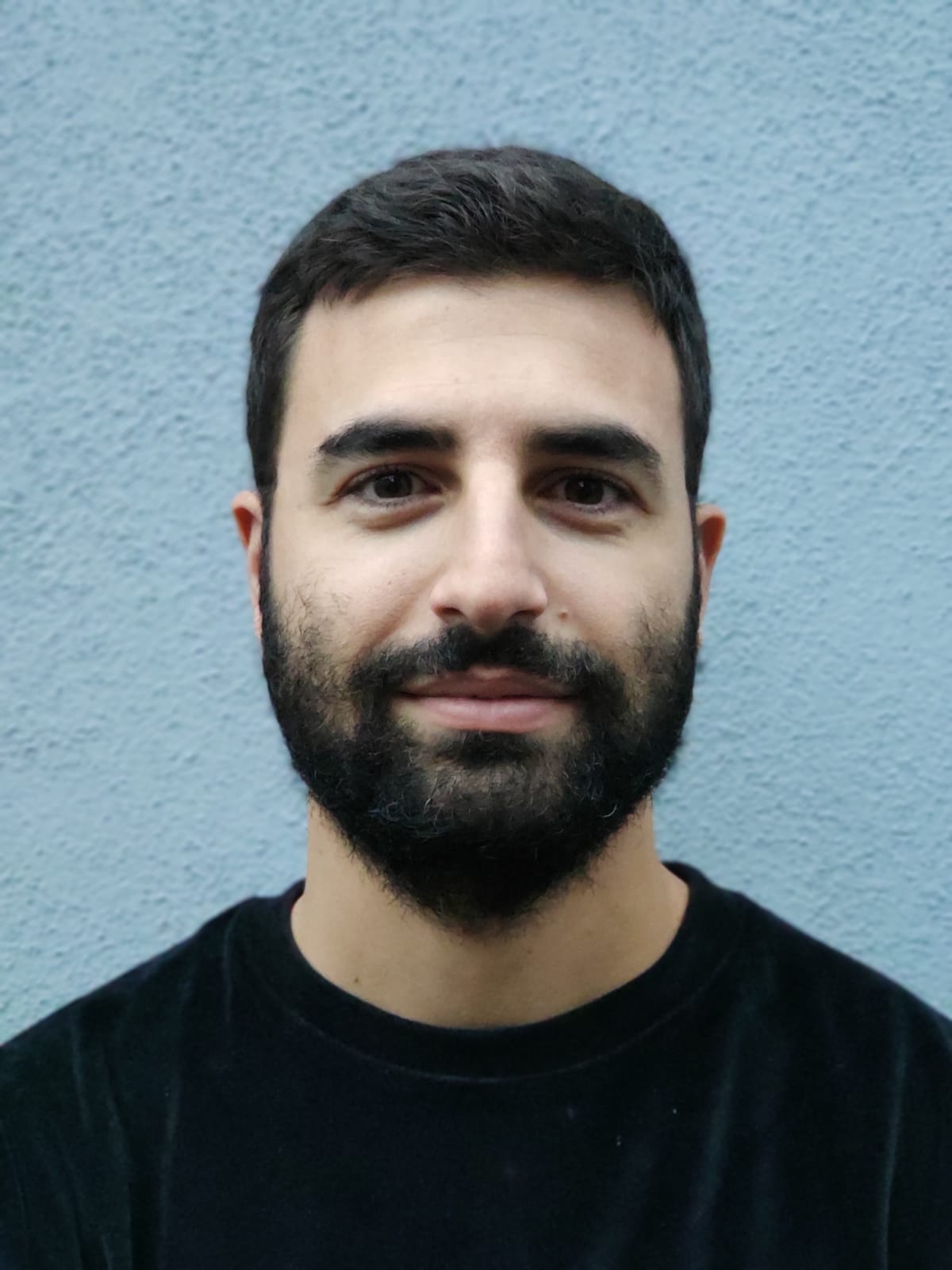}}]{Jo\~ao Carvalho}
is currently a Ph.D. student at the Intelligent Autonomous Systems group of the Technical University of Darmstadt. Previously, he completed a M.Sc. degree in Computer Science from the Albert-Ludwigs-Universit{\"a}t Freiburg, and studied Electrical and Computer Engineering at the Instituto Superior T\'{e}cnico of the University of Lisbon. His research is focused on learning algorithms for control and robotics.
\end{IEEEbiography}
\vspace{-5em}
\begin{IEEEbiography}[{\includegraphics[width=1in,height=1.25in,clip,keepaspectratio]{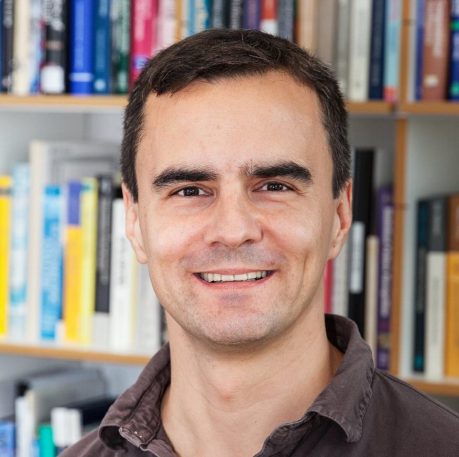}}]{Jan Peters}
is a full professor (W3) for Intelligent Autonomous Systems at the Computer
Science Department of the Technical University
of Darmstadt. He has been a senior
research scientist and group leader at the MaxPlanck Institute for Intelligent Systems, where
he headed the interdepartmental Robot Learning
Group. Jan Peters has received the Dick Volz
Best 2007 US Ph.D. 
Thesis Runner-Up Award,
the Robotics: Science \& Systems - Early Career
Spotlight, the INNS Young Investigator Award,
and the IEEE Robotics \& Automation Society's Early Career Award as
well as numerous best paper awards. In 2015, he received an ERC
Starting Grant and in 2019, he was appointed as an IEEE Fellow.
\end{IEEEbiography}



\newpage
\onecolumn
\section*{\centering \huge{Appendix}}
\vspace{1em}
\setcounter{section}{0}
\section{Support to the Theoretical Analysis}
\subsection{Existence and Uniqueness of the Fixed Point of the Nonparametric Bellman Equation}
In the following, we discuss the existence and uniqueness of the solution of the nonparametric Bellman equation (NPBE).
To do so, we will first notice that the solution must be a linear combination of the responsability vector $\bm{\varepsilon}_\pi(\state)$, and therefore any solution of the NPBE must be bounded. We will also show that such a bound is $\pm R_{\max}/(1-\gamma_c)$. 
We will use this bound to show that the solution must be unique. Subsequently, we will prove Theorem~1.
\begin{proposition}{\emph{Space of the Solution}\\}
	\label{prop:space}
	The solution of the NBPE is $\bm{\varepsilon}_\pi^\intercal(\state)\mathrm{q}$ where $\mathrm{q} \in \mathbb{R}^n$, where $n$ is the number of support points. \\
	Informal Proof: \emph{The NPBE is
	\begin{equation}
		\hat{V}_\pi(\state) = \bm{\varepsilon}_\pi^\intercal(\state) \left(\rvec + \int_\Sset \bm{\phi}_\gamma (\state')\hat{V}_\pi(\state') \de \state' \right) \label{eq:appnpbe}
	\end{equation}
	Both $\rvec$ and $\int_\Sset \bm{\phi}_\gamma (\state')\hat{V}_\pi(\state') \de \state'$ are constant w.r.t. $\state$, therefore $\hat{V}_\pi(\state)$ must be a linear respect to $\bm{\varepsilon}_\pi(\state)$}. 
\end{proposition}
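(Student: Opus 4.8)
The plan is to observe that the right-hand side of the nonparametric Bellman equation~\eqref{eq:appnpbe} factors as $\bm{\varepsilon}_\pi^\intercal(\state)$ multiplied by a vector that carries no dependence on the free argument $\state$, so that any solution is forced into the claimed form. First I would define
\[
\mathbf{q} \defeq \rvec + \int_\Sset \bm{\phi}_\gamma(\state')\hat{V}_\pi(\state')\de\state',
\]
and note that each ingredient of $\mathbf{q}$ is constant in $\state$: the entries of $\rvec$ are the fixed observed rewards $r_1,\dots,r_n$, while the integral is taken over the dummy variable $\state'$ and therefore produces a vector in $\mathbb{R}^n$ that does not see $\state$. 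Substituting this definition back into~\eqref{eq:appnpbe} immediately gives $\hat{V}_\pi(\state) = \bm{\varepsilon}_\pi^\intercal(\state)\mathbf{q}$ for every $\state$, which is exactly the asserted linear form.

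The single point that needs care is the well-definedness of $\mathbf{q}$, i.e. the convergence of the integral $\int_\Sset \bm{\phi}_\gamma(\state')\hat{V}_\pi(\state')\de\state'$. Writing the $i$-th component as $\gamma_i \int_\Sset \phi_i(\state')\hat{V}_\pi(\state')\de\state'$ and recalling that $\phi_i$ is a normalized density and $\gamma_i \le \gamma_c < 1$, this component is a discounted average of $\hat{V}_\pi$ against $\phi_i$, hence finite whenever $\hat{V}_\pi$ is integrable against $\phi_i$. I would avoid circularity with the subsequent boundedness claim, whose bound $\pm R_{\max}/(1-\gamma_c)$ is derived \emph{after} this proposition, by phrasing the statement over the space of measurable functions for which~\eqref{eq:appnpbe} is meaningful; the factorization above is then a purely algebraic consequence of the equation and presupposes no a priori bound.

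I expect the factorization itself to be routine, and the genuinely subtle issue to be exactly the ordering just described: it is the linear form $\bm{\varepsilon}_\pi^\intercal(\state)\mathbf{q}$ that makes the later boundedness estimate possible, rather than the other way around, which is why this proposition is established first. With the form in hand, the infinite-dimensional fixed-point problem collapses to the finite-dimensional equation $\mathbf{q} = \rvec + \hat{\mathbf{P}}_{\pi}^\gamma \mathbf{q}$, equivalently $\bm{\Lambda}_\pi \mathbf{q} = \rvec$, which is where the real work of existence and uniqueness — via the strict sub-stochasticity of $\hat{\mathbf{P}}_{\pi}^\gamma$ invoked in Theorem~\ref{theorem:fixedpoint} — subsequently takes place.
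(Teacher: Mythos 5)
Your proposal is correct and follows essentially the same argument as the paper, which likewise observes that $\rvec$ and $\int_\Sset \bm{\phi}_\gamma(\state')\hat{V}_\pi(\state')\de\state'$ are constant in $\state$, forcing the solution into the form $\bm{\varepsilon}_\pi^\intercal(\state)\mathbf{q}$. Your added remarks on the integrability of $\hat{V}_\pi$ against the kernels $\phi_i$ and on keeping this proposition logically prior to the boundedness estimate are sensible refinements of the paper's informal one-line proof, not a different route.
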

A first consequence of Proposition~\ref{prop:space} is that, since $\bm{\varepsilon}_\pi(\state)$ is a vector of kernels,  $\hat{V}_\pi(\state)$ is bounded.

In Theorem~1 we state that the solution of the NPBE is $\hat{V}_\pi^*(\state) = \bm{\varepsilon}_\pi^\intercal(\state)\bm{\Lambda}_\pi^{-1}\rvec$. It is trivial to show that such a solution is a valid solution of the NPBE; however, the uniqueness of such a solution is non-trivial. In order to prove it, we will before show that if a solution to the NPBE exists, then the solution must be bounded by $[-R_{\max}/(1-\gamma_c), R_{\max}/(1-\gamma_c)]$, where $R_{\max} = \max_i |r_i|$. Note that this nice property is not common for other policy-evaluation algorithms (e.g., Neural Fitted $Q$-Iterations \cite{riedmiller_neural_2005}).
\begin{proposition}{\emph{Bound of NPBE}\\}
	If $V_\pi:\Sset\to\mathbb{R}$ is a solution to the NPBE, then $|V_\pi(\state)| \leq R_{\max}/(1-\gamma_c)$.
	\begin{proof}
		Suppose, by contradiction, that a function $f(\state)$ is the solution of a NPBE, and that $\exists \zvec \in \Sset: f(\zvec) = R_{\max}/(1-\gamma_c) + \epsilon$ where $\epsilon > 0$. Since the solution of the NPBE must be bounded, we can further assume without any loss of generality that $f(\state) \leq f(\zvec)$. Then, 
		\begin{equation}
		\frac{R_{\max}}{1-\gamma_c} + \epsilon	= \bm{\varepsilon}_\pi^\intercal(\zvec)\rvec + \bm{\varepsilon}_\pi^\intercal(\zvec)\int_\Sset \bm{\phi}_\gamma(\state')f(\state')\de \state' \nonumber.
		\end{equation}
		Since by assumption the previous equation must be fulfilled, and $\bm{\varepsilon}_\pi (\zvec)$ is a stochastic vector, $|\bm{\varepsilon}_\pi^\intercal (\zvec) \rvec| \leq R_{\max}$, we have 
		\begin{equation}
		\left| \frac{R_{\max}}{1-\gamma_c} + \epsilon	- \bm{\varepsilon}_\pi^\intercal(\zvec)\int_\Sset \bm{\phi}_\gamma(\state')f(\state')\de \state' \right| \leq R_{\max}. \label{eq:appbound}
		\end{equation}
		However, noticing also that $0 \leq \phi_i^{\pi,\gamma}(\state) \leq \gamma_c$, we have
		\begin{eqnarray}
			\frac{R_{\max}}{1-\gamma_c} + \epsilon	- \bm{\varepsilon}_\pi^\intercal(\zvec)\int_\Sset \bm{\phi}_\gamma(\state')f(\state')\de \state' & \geq & \frac{R_{\max}}{1-\gamma_c} + \epsilon	- \gamma_c \max_\state f(\state) \nonumber \\ 
			& \geq & \frac{R_{\max}}{1-\gamma_c} + \epsilon	- \gamma_c \frac{R_{\max}}{1-\gamma_c} \nonumber \\
			& = & R_{\max} + \epsilon \nonumber,
		\end{eqnarray} 
		which is in contradiction with Equation~\ref{eq:appbound}. A completely symmetric proof can be derived assuming by contradiction that $\exists \zvec \in \Sset: f(\zvec) = -R_{\max}/(1-\gamma_c) - \epsilon$ and $f(\state) \geq f(\zvec)$.
	\end{proof}
\end{proposition}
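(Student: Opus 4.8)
The plan is to avoid a direct appeal to contradiction and instead exploit the self-referential structure of the NPBE together with the boundedness already guaranteed by Proposition~\ref{prop:space}. Since any solution is of the form $\bm{\varepsilon}_\pi^\intercal(\state)\mathbf{q}$ with $\mathbf{q}\in\Real^n$ and the responsibilities are bounded kernels, the quantity $M \defeq \sup_{\state\in\Sset}|V_\pi(\state)|$ is finite; this finiteness is the crucial prerequisite that makes the argument below well-posed. The goal is then to establish the single scalar inequality $M \leq R_{\max} + \gamma_c M$, which immediately yields $M(1-\gamma_c)\leq R_{\max}$ and hence $|V_\pi(\state)|\leq R_{\max}/(1-\gamma_c)$ for every $\state$, since $\gamma_c < 1$.

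First I would record the two structural facts I need. The responsibility vector $\bm{\varepsilon}_\pi(\state)$ is stochastic: each entry is nonnegative and, summing $\varepsilon_i^\pi(\state)=\int\pi_\theta(\action|\state)\varepsilon_i(\state,\action)\de\action$ over $i$ and using $\sum_i\varepsilon_i(\state,\action)=1$ (which holds by the normalization in Definition~\ref{definition:npbe}), one obtains $\sum_i\varepsilon_i^\pi(\state)=\int\pi_\theta(\action|\state)\de\action=1$; the deterministic case is the point-evaluation analogue. Consequently $\bm{\varepsilon}_\pi^\intercal(\state)\rvec$ is a convex combination of the $r_i$ and is bounded in absolute value by $R_{\max}$. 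The second fact concerns $\bm{\phi}_\gamma(\state)=[\gamma_1\phi_1(\state),\dots,\gamma_n\phi_n(\state)]^\intercal$: since each $\phi_i$ integrates to one, $\int_\Sset\phi_i(\state')|V_\pi(\state')|\de\state'\leq M$, and multiplying by $\gamma_i\leq\gamma_c$ bounds the $i$-th coordinate of $\int_\Sset\bm{\phi}_\gamma(\state')V_\pi(\state')\de\state'$ in absolute value by $\gamma_c M$.

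Then I would chain these estimates. Plugging $V_\pi$ back into the NPBE~\eqref{eq:appnpbe} and applying the triangle inequality gives, for every $\state$,
\begin{equation}
|V_\pi(\state)| \leq \big|\bm{\varepsilon}_\pi^\intercal(\state)\rvec\big| + \sum_i \varepsilon_i^\pi(\state)\,\gamma_i\int_\Sset\phi_i(\state')|V_\pi(\state')|\de\state' \leq R_{\max} + \gamma_c M \sum_i\varepsilon_i^\pi(\state) = R_{\max} + \gamma_c M, \nonumber
\end{equation}
where the last equality again uses that the responsibilities sum to one. Taking the supremum over $\state$ on the left-hand side and rearranging closes the argument.

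The main obstacle is not the scalar algebra but the careful justification of the two structural inequalities in the infinite-sample regime together with the interchange of summation and integration: I would need dominated-convergence and Fubini conditions to hold, which is precisely where the boundedness of $R$ (and hence of $\mathbf{q}$, via Proposition~\ref{prop:space}) is genuinely used. A secondary subtlety, which the direct supremum argument deliberately sidesteps, is that the supremum of $V_\pi$ need not be attained on the unbounded domain $\Sset\equiv\Real^{d_s}$; a contradiction-style proof positing an exact maximizer $\zvec$ with $V_\pi(\state)\leq V_\pi(\zvec)$ would first have to argue attainment, whereas working with $M$ as a supremum avoids this entirely.
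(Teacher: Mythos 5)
Your proof is correct, but it takes a genuinely different route from the paper's. The paper argues by contradiction: it posits a point $\zvec$ at which $f$ attains the value $R_{\max}/(1-\gamma_c)+\epsilon$ and assumes, ``without any loss of generality,'' that $f(\state)\leq f(\zvec)$ everywhere, then shows the NPBE forces the right-hand side at $\zvec$ to exceed what the reward term permits. You instead set $M=\sup_{\state}|V_\pi(\state)|$, justify finiteness via Proposition~\ref{prop:space}, and derive the self-bounding inequality $M\leq R_{\max}+\gamma_c M$ directly from the triangle inequality, the stochasticity of $\bm{\varepsilon}_\pi(\state)$, and the facts $\int\phi_i=1$, $\gamma_i\leq\gamma_c$ --- the classic sup-norm contraction estimate. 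Your closing remark identifies a real advantage: on the unbounded domain $\Sset\equiv\Real^{d_s}$ the supremum of $V_\pi$ need not be attained, so the paper's ``WLOG $f(\state)\leq f(\zvec)$'' quietly assumes attainment; it can be repaired with an approximate-maximizer argument (pick $\zvec$ with $f(\zvec)$ within $\delta$ of the supremum and track the extra $\delta$), but your formulation avoids the issue entirely. Note that since $\bm{\varepsilon}_\pi(\state)$ is a stochastic vector, $V_\pi(\state)=\bm{\varepsilon}_\pi^\intercal(\state)\mathbf{q}$ immediately gives $M\leq\max_i|q_i|<\infty$, which is all the finiteness you need; and for finite $n$ the interchange of the (finite) sum with the integral is elementary, so your dominated-convergence caveat is only genuinely needed if one wants the statement in the $n\to\infty$ regime, where the paper itself is also informal. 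Both proofs consume the same structural ingredients ($\sum_i\varepsilon_i^\pi(\state)=1$ and the strictly sub-stochastic transition term), so neither buys extra generality; yours is arguably the cleaner and more robust of the two.
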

\begin{proposition}
	\label{proposition:uniqueness}
	If $\rvec$ is bounded by $R_{\text{max}}$ and if $f^*:\Sset \to \mathbb{R}$ satisfies the NPBE, then there is no other function $f:\Sset \to \mathbb{R}$ for which $\exists \zvec \in \Sset$ and $|f^*(\zvec) - f(\zvec)| > 0$.
	\begin{proof}
		Suppose, by contradiction, that exists a function $g:\Sset \to \mathbb{R}$ such that $f^*(\svec) + g(\svec)$ satisfies Equation~\ref{eq:appnpbe}. Furthermore assume that $\exists \zvec: g(\zvec) \neq 0$.
		Note that, since $f:\Sset\to\mathbb{R}$ is a solution of the NPBE, then
		\begin{equation}
		\int_{\Sset}\bm{\varepsilon}_{\pi}^\intercal(\svec)\phivec_\gamma(\svec')f^*(\svec ' )\de   \svec ' \in \mathbb{R}, \label{equation:existenceint1}
		\end{equation}
		and similarly 
		\begin{equation}
		\int_{\Sset}\bm{\varepsilon}_{\pi}^\intercal(\svec)\phivec_\gamma(\svec')\left(f^*(\svec ' ) + g(\svec')\right)\de   \svec ' \in \mathbb{R}. \label{equation:existenceint2}
		\end{equation}
		The existence of the integrals in Equations~\ref{equation:existenceint1} and \ref{equation:existenceint2} implies
		\begin{equation}
		\int_{\Sset}\bm{\varepsilon}_{\pi}^\intercal(\svec)\phivec_\gamma(\svec') g(\svec')\de   \svec ' \in \mathbb{R}. \label{equation:existenceint3}
		\end{equation}
		Note that
		\begin{eqnarray}
		|g(\svec)| & = & |f^*(\svec) - (f^*(\svec) + g(\svec))| \nonumber \\
		& = & \bigg| f^*(\svec) - \bm{\varepsilon}_{\pi}^\intercal(\svec) \bigg(\rvec + \int_{\Sset} \phivec_\gamma(\svec') \big(f^*(\svec ') + g(\svec ')\big)  \de \nonumber \svec'\bigg) \bigg| \nonumber \\
		& = & \bigg| \bm{\varepsilon}_{\pi}^\intercal(\svec) \bigg(\rvec + \int_{\Sset} \phivec_\gamma(\svec')  f^*(\svec' )  \de \nonumber \svec' \bigg)- \bm{\varepsilon}_{\pi}^\intercal(\svec) \bigg(\rvec + \int_{\Sset} \phivec_\gamma(\svec') \big(f^*(\svec ') + g(\svec ')\big)  \de \nonumber \svec'\bigg) \bigg| \nonumber \\
		& = & \bigg| \bm{\varepsilon}_{\pi}^\intercal(\svec)   \int_{\Sset} \phivec_\gamma(\svec') g(\svec ')  \de \nonumber \svec'\bigg| \nonumber \\
		& = & \bigg| \bm{\varepsilon}_{\pi}^\intercal(\svec)   \int_{\Sset} \phivec_\gamma(\svec') g(\svec ' )  \de \nonumber \svec'\bigg|. \nonumber
		\end{eqnarray}
		Using Jensen's inequality
		\begin{eqnarray}
		|g(\svec)|& \leq &  \bm{\varepsilon}_{\pi}^\intercal(\svec)   \int_{\Sset} \phivec_\gamma(\svec')  |g(\svec ') |  \de \nonumber \svec'. \nonumber
		\end{eqnarray}
		
		Since both $f^*$ and $f+g$ are bounded by $\frac{R_{\text{max}}}{1-\gamma_c}$, then $|g(\svec)| \leq A =  \frac{2R_{\text{max}}}{1-\gamma_c}$, then 
		\begin{eqnarray}
		|g(\svec)|& \leq &  \bm{\varepsilon}_{\pi}^\intercal(\svec)   \int_{\Sset} \phivec_\gamma(\svec')  |g(\svec ') |  \de  \svec' \label{equation:recursion} \\
		& \leq &  A \bm{\varepsilon}_{\pi}^\intercal(\svec)   \int_{\Sset} \phivec_\gamma(\svec')    \de \nonumber \svec' \nonumber \\
		& \leq& \gamma_c  A \nonumber .
		\end{eqnarray}
		We can iterate this reasoning now posing $|g(\state)| \leq \gamma_c A$, and eventually we notice that $|g(\state)| \leq 0$, which is in contradiction with the assumption made.
	\end{proof}
\end{proposition}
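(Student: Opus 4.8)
The plan is to prove uniqueness by contradiction, exploiting the strict sub-stochasticity of $\approxx{\mathbf{P}}_\pi^\gamma$ established in Theorem~\ref{theorem:fixedpoint} together with the boundedness guaranteed by the preceding proposition. Suppose $f^*$ solves the NPBE and that a second solution of the form $f^* + g$ also solves it, with $g$ not identically zero, i.e.\ $g(\zvec)\neq 0$ for some $\zvec\in\Sset$. First I would write down the NPBE for both solutions and subtract them; because the reward vector $\rvec$ and the responsibility prefactor $\bm{\varepsilon}_\pi^\intercal(\state)$ are shared, every term involving $\rvec$ and $f^*$ cancels, leaving the homogeneous identity $g(\state) = \bm{\varepsilon}_\pi^\intercal(\state)\int_\Sset \phivec_\gamma(\state')\,g(\state')\,\de\state'$. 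Care is needed here to argue that the relevant integrals exist; since $f^*$ and $f^*+g$ are each bounded by the previous proposition, their integrals against $\bm{\varepsilon}_\pi^\intercal(\state)\phivec_\gamma(\state')$ are finite, and subtracting shows the integral of $g$ is finite as well.

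Next I would convert this identity into a contraction on the supremum of $|g|$. Using that $\bm{\varepsilon}_\pi(\state)$ is a stochastic vector with nonnegative entries and that $\phivec_\gamma\ge 0$, Jensen's inequality lets me move the absolute value inside, giving $|g(\state)| \le \bm{\varepsilon}_\pi^\intercal(\state)\int_\Sset \phivec_\gamma(\state')\,|g(\state')|\,\de\state'$. The boundedness of both solutions yields $|g(\state)| \le A := 2R_{\max}/(1-\gamma_c)$ uniformly. The decisive step is then to invoke the strict sub-stochasticity: each entry satisfies $0\le \phi_i^{\pi,\gamma}(\state)\le \gamma_c$ and the row integrals of $\approxx{\mathbf{P}}_\pi^\gamma$ are bounded by $\gamma_c<1$, so that $\bm{\varepsilon}_\pi^\intercal(\state)\int_\Sset \phivec_\gamma(\state')\,\de\state' \le \gamma_c$. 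Substituting $|g|\le A$ on the right-hand side gives $|g(\state)| \le \gamma_c A$ for every $\state$.

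Finally I would iterate: re-inserting the improved bound $|g|\le \gamma_c A$ into the same inequality yields $|g|\le \gamma_c^2 A$, and by induction $|g(\state)| \le \gamma_c^k A$ for all $k$. Since $\gamma_c<1$, letting $k\to\infty$ forces $|g(\state)|\le 0$, hence $g\equiv 0$, contradicting $g(\zvec)\neq 0$. Therefore no second solution can differ from $f^*$ at any point, and the fixed point is unique. The hard part will be the second paragraph: justifying the passage of the absolute value through the integral (Jensen) and, above all, establishing the uniform row-sum bound by $\gamma_c$ from the strict sub-stochasticity --- this is exactly where the assumption $\gamma(\state,\action,\state')\le\gamma_c<1$ is indispensable, since without it the multiplier could reach one and the geometric decay that drives the contradiction would collapse. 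The boundedness result from the previous proposition is likewise essential, as it supplies the finite starting constant $A$ needed to launch the iteration.
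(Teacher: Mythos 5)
Your proposal is correct and follows essentially the same route as the paper's proof: a contradiction argument in which the two Bellman equations are subtracted to yield the homogeneous identity for $g$, Jensen's inequality moves the absolute value inside the integral, the boundedness from the preceding proposition supplies the starting constant $A = 2R_{\max}/(1-\gamma_c)$, and the sub-stochasticity bound $\gamma_c < 1$ drives the geometric iteration $|g| \le \gamma_c^k A \to 0$. The only cosmetic difference is that you subtract the two equations directly while the paper expands $|f^*(\svec) - (f^*(\svec)+g(\svec))|$ term by term; the content is identical.
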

\noindent \textbf{Proof of Theorem~1}\\
\begin{proof}
	Saying that $\approxx{V}_{\pi}^*$ is a solution for Equation~\ref{eq:appnpbe} is equivalent to say
	\begin{equation}
	\approxx{V}_{\pi}^*(\svec) - \bm{\varepsilon}^\intercal_\pi(\svec) \bigg(\rvec + \gamma \int_{\Sset} \phivec_\gamma(\svec')\approxx{V}_{\pi}^*(\svec')   \de \svec'\bigg) = 0 \quad \quad \forall \svec \in \Sset. \nonumber
	\end{equation}
	We can verify that by simple algebraic manipulation
	\begin{eqnarray}
	& & \approxx{V}_{\pi}^*(\svec) - \bm{\varepsilon}_{\pi}^\intercal(\svec) \bigg(\rvec + \int_{\Sset} \phivec_\gamma(\svec')\approxx{V}_{\pi}^*(\svec')   \de \svec'\bigg) \nonumber \\
	& = & \bm{\varepsilon}_{\pi}^\intercal(\svec) \bm{\Lambda}_{\pi}^{-1} \rvec - \bm{\varepsilon}_{\pi}^\intercal(\svec) \bigg(\rvec + \int_{\Sset} \phivec_\gamma(\svec')\bm{\varepsilon}_{\pi}^\intercal(\svec') \bm{\Lambda}_{\pi}^{-1}\rvec   \de \svec'\bigg) \nonumber \\
	& = & \bm{\varepsilon}_{\pi}^\intercal(\svec)\bigg(\bm{\Lambda}_{\pi}^{-1} \rvec - \rvec - \int_{\Sset} \phivec_\gamma(\svec')\bm{\varepsilon}_{\pi}^\intercal(\svec') \bm{\Lambda}_{\pi}^{-1}\rvec   \de \svec'\bigg) \nonumber \\
	& = & \bm{\varepsilon}_{\pi}^\intercal(\svec)\Bigg(\bigg(I-  \int_{\Sset} \phivec_\gamma(\svec')\bm{\varepsilon}_{\pi}^\intercal(\svec')   \de \svec'\bigg)\bm{\Lambda}_{\pi}^{-1}\rvec - \rvec   \Bigg) \nonumber \\
	& = & \bm{\varepsilon}_{\pi}^\intercal(\svec)\Bigg(\bm{\Lambda}_{\pi}\bm{\Lambda}_{\pi}^{-1}\rvec - \rvec   \Bigg) \nonumber \\
	& = & 0.
	\end{eqnarray}
	Since equation~\ref{eq:appnpbe} has (at least) one solution, Proposition~\ref{proposition:uniqueness} guarantees that the solution ($\approxx{V}_{\pi}^*$) is unique.
\end{proof}
\subsection{Bias of the Nonparametric Bellman Equation}
In this section, we want to show the findings of Theorem~3. To do so, we introduce the infinite-samples extension of the NPBE.
\begin{proposition}
	Let us suppose to have a dataset of infinite samples, and in particular one sample for each state-action pair of the state-action space. 
	In the limit of infinite samples the NPBE defined in Definition~\ref{definition:npbe} with a data-set $\lim_{n\to \infty}D_n$ collected under distribution $\beta$ on the state-action space and MDP $\mathcal{M}$ converges to
	\begin{eqnarray}
	V_D(\svec) & = & \lim_{n \to \infty} \int_{\Aset}\frac{\sum_{i=1}^n\psi_i(\svec)\varphi_i(\avec)\bigg(r_i + \gamma_i \int_{\Sset}\phi_i(\svec')\approxx{V}_{\pi}(\svec')\de \svec\bigg)}{\sum_{j=1}^n\psi_j(\svec)\varphi_j(\avec)} \pi(\avec | \svec) \de \avec \nonumber \\
	& = & \int_{\Aset}\frac{\lim_{n \to \infty}\frac{1}{n}\sum_{i=1}^n\psi_i(\svec)\varphi_i(\avec)\bigg(r_i + \gamma_i \int_{\Sset}\phi_i(\svec')\approxx{V}_{\pi}(\svec')\de \svec\bigg)}{\lim_{n \to \infty} \frac{1}{n}\sum_{j=1}^n\psi_j(\svec)\varphi_j(\avec)} \pi(\avec | \svec) \de \avec \nonumber \\
	& = & \int_{\Aset}\frac{\int_{\Sset \times \Aset}\psi(\svec, \zvec)\varphi(\avec, \bvec)\bigg( R(\zvec, \bvec) +  \int_{\Sset}\phi(\svec', \zvec')p_\gamma(\zvec'|\bvec, \zvec)\approxx{V}_{\pi}(\svec')\de \svec  \bigg) \beta(\zvec, \bvec) \de \zvec \de \bvec}{\int_{\Sset \times \Aset}\psi(\svec, \zvec)\varphi(\avec, \bvec)\beta(\zvec, \bvec) \de \zvec \de \bvec} \pi(\avec | \svec) \de \avec. \label{eq:infinitesamples}
	\end{eqnarray}
\end{proposition}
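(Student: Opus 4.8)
The plan is to establish the limiting identity in two stages: first pass the large-sample limit through the Nadaraya--Watson ratio using the strong law of large numbers, and then factorize the resulting expectations according to the generative structure of the dataset. To begin, I would interchange the limit with the integral over $\avec$ in the first line of \eqref{eq:infinitesamples}. This is justified by dominated convergence: the kernels $\psi,\varphi,\phi$ are normalized and bounded, the policy $\pi(\cdot\mid\svec)$ integrates to one, and the bound on the NPBE solution established earlier guarantees $|\hat{V}_\pi(\svec')|\le R_{\max}/(1-\gamma_c)$, so the integrand is uniformly bounded in $n$ and in $\avec$. Since $|r_i|\le R_{\max}$ and $\gamma_i\le\gamma_c<1$, the inner term $r_i+\gamma_i\int_\Sset\phi_i(\svec')\hat{V}_\pi(\svec')\de\svec'$ stays controlled, which supplies the required dominating function.

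Next, for each fixed pair $(\svec,\avec)$ I would divide both the numerator and the denominator of the kernel ratio by $n$, turning each sum into an empirical average over the i.i.d.\ samples $\{\state_i,\action_i,r_i,\nextstate_i,\gamma_i\}$. Applying the strong law of large numbers separately gives the denominator limit $\EV_{\zvec,\bvec\sim\beta}[\psi(\svec,\zvec)\varphi(\avec,\bvec)]$, which is strictly positive because the kernels are positive. This positivity is what lets me invoke the continuous mapping theorem and conclude that the ratio of the two sums converges almost surely to the ratio of the two limiting expectations, matching the denominator displayed in the last line of \eqref{eq:infinitesamples}.

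I would then evaluate the numerator expectation by factorizing the joint law of a single sample along its generative chain: $(\state,\action)\sim\beta$, $\nextstate\sim p(\cdot\mid\state,\action)$, with reward and discount conditioned on the transition. Writing $\zvec=\state$, $\bvec=\action$ and marginalizing over the next-state sample against $\phi$, the term $\EV[\gamma_i\phi(\svec',\nextstate_i)\mid\zvec,\bvec]$ collapses into the discounted transition kernel $p_\gamma(\cdot\mid\zvec,\bvec)$ convolved with $\phi$, while $\EV[r_i\mid\zvec,\bvec]$ produces the reward term; together these reproduce exactly the inner bracket of the final line. Assembling the numerator and denominator limits, and reinstating the outer integral over $\avec$ against $\pi(\avec\mid\svec)$, yields the claimed expression.

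The hard part will be the interchange of limits rather than any individual computation: the value function $\hat{V}_\pi$ appearing inside the integral itself depends on $n$, so the displayed identity is most cleanly read as the fixed-point equation satisfied by the limiting value function $V_D$. Making this fully rigorous requires either a uniform bound argument (which the estimate $|\hat{V}_\pi|\le R_{\max}/(1-\gamma_c)$ supplies, preventing the $n$-dependence of the inner integral from disrupting the convergence) or else treating \eqref{eq:infinitesamples} as the \emph{definition} of the infinite-sample NPBE, whose well-posedness then follows from the same uniqueness and boundedness arguments already proven for the finite-$n$ case.
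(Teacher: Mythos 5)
Your proposal follows essentially the same route as the paper's own derivation: normalize the numerator and denominator by $n$, pass the limit through the Nadaraya--Watson ratio via the law of large numbers to obtain the integrals against $\beta$ and the transition density, and read the resulting identity as the fixed-point (defining) equation of the infinite-sample NPBE. The paper presents this as a direct three-line computation without spelling out the dominated-convergence, SLLN, and continuous-mapping justifications you supply, and it resolves the $n$-dependence of $\hat{V}_\pi$ exactly as you suggest --- by treating the displayed equation as the definition of $V_D$, whose well-posedness follows from the boundedness and uniqueness results already established for the finite-sample case.
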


If we impose the process generating samples to be non-degenerate distribution, and $-R_{\max} \leq R \leq R_{\max}$, we see that Propositions~1-3 remain valid.
Furthermore, from Equation~\ref{eq:infinitesamples} we are able to infer that $\mathbb{E}_D[V_D(\state)] = V_D(\state)$ (since $D$ is an infinite dataset, it does not matter if we re-sample it, the resulting value-function will be always the same).
\begin{proof}[Proof of Theorem~\ref{theorem:ultimate}]
To keep the notation uncluttered, let us introduce
\begin{equation}
	\varepsilon(\state, \action,\zvec, \bvec) = \frac{\psi(\svec, \zvec)\varphi(\avec, \bvec)\beta(\zvec, \bvec)}{\int_{\Sset \times \Aset}\psi(\svec, \zvec)\varphi(\avec, \bvec)\beta(\zvec, \bvec) \de \zvec \de \bvec}.
\end{equation}
We want to bound 
\begin{eqnarray}
\overline{V}(\state) - V^*(\state) & =& \int_\Aset \bigg( \int \varepsilon(\state, \action,\zvec, \bvec)R(\zvec,\bvec)\de\zvec\de\bvec + \int_{\Sset\times\Aset}\varepsilon(\state, \action,\zvec, \bvec) \int_{\Sset}\phi_i(\svec', \zvec')p_\gamma(\zvec'|\bvec, \zvec)\overline{V}(\svec')\de\state' \de \zvec \de \bvec \nonumber  \\
& & \quad \quad  - R(\state, \action) - \int_\Sset V^*(\state')p_\gamma(\state'|\state,\action)\de \state'\bigg)\pi(\action|\state) \de \action \nonumber \\
\implies \left|\overline{V}(\state) - V^*(\state)\right| & \leq & \max_\action\left|\int \varepsilon(\state, \action,\zvec, \bvec)\left(R(\zvec,\bvec)- R(\state, \action)\right)\de\zvec\de\bvec \right| \\
& & \quad \quad + \max_\action\left|\int_{\Sset}p_\gamma(\zvec'|\bvec, \zvec)\left(\phi_i(\svec', \zvec')\overline{V}(\svec') -V^*(\state')\right) \de \zvec \de \bvec \right| \nonumber \\
& \leq & \max_\action\left| \underbrace{ \int \varepsilon(\state, \action,\zvec, \bvec)\left(R(\zvec,\bvec)- R(\state, \action)\right)\de\zvec\de\bvec}_A \right| \\
& & \quad \quad + \gamma_c\max_\action\left| \underbrace{\int_{\Sset\times\Aset}\varepsilon(\state, \action,\zvec, \bvec)\int_{\Sset}p(\zvec'|\bvec, \zvec)\left(\phi_i(\svec', \zvec')\overline{V}(\svec') -V^*(\state')\right)\de \state' \de \zvec \de \bvec}_B \right| \nonumber \\
&= & \text{A}_\text{Bias} + \gamma_c \text{B}_\text{Bias}
\end{eqnarray}
Term A is the bias of Nadaraya-Watson kernel regression, as it is possible to observe in  \cite{tosatto_upper_2020}, therefore Theorem~\ref{theorem:biasnadaraya} applies
\begin{eqnarray}
\text{A}_\text{Bias} & = & \quad \frac{L_R \sum_{k=1}^d \hvec_k \Bigg(\prod_{i\neq k}^d \e{\frac{L_{\beta}^2 \hvec_i^2}{2}}\Bigg(1 +  \erf\bigg(\frac{\hvec_i L_{\beta}}{\sqrt{2}}\bigg) \Bigg)\Bigg)  \Bigg( \frac{1}{\sqrt{2 \pi} } + L_{\beta}\hvec_k \frac{\e{\frac{L_{\beta}^2 \hvec_k^2}{2}}}{2}\Bigg(1 + \erf\bigg(\frac{\hvec_k L_{\beta}}{\sqrt{2}}\bigg) \Bigg)\Bigg)}{\prod_{i=1}^d  \e{\frac{L_{\beta}^2 h_i^2}{2}}\Bigg(1 - \erf\bigg(\frac{\hvec_i L_{\beta}}{\sqrt{2}}\bigg) \Bigg)}\nonumber,  \label{bias}
\end{eqnarray}
where $\hvec = [\hvec_{\psi},\hvec_{\varphi}]$ and $d = d_s + d_a$.
\begin{eqnarray}
\text{B}_\text{bias}
& \leq & \gamma_c \max_{\avec}\Bigg|\frac{\int_{\Sset \times \Aset}\psi(\svec, \zvec)\varphi(\avec, \bvec)\big(\int_{\Sset\times \Sset}\overline{V}(\zvec')\phi(\zvec', \svec')p(\svec'|\svec,\avec)\de \svec'\de \zvec'- \int_{\Sset} V^*(\svec')p(\svec'|\svec, \avec)\de \svec'  \big)\beta(\zvec, \bvec)\de \zvec \de \bvec}{\int_{\Sset, \Aset}\psi(\svec, \zvec)\varphi(\avec, \bvec)\beta(\zvec, \bvec) \de \zvec \de \bvec}\Bigg| \nonumber \\
& = & \gamma_c\max_{\avec}\Bigg|\frac{\int_{\Sset \times \Aset}\psi(\svec, \zvec)\varphi(\avec, \bvec)\big(\int_{\Sset}\int_{\Sset}\big(\overline{V}(\zvec')\phi(\zvec', \svec')-V^*(\svec')\big)p(\svec'|\svec,\avec)\de \svec'\de \zvec'  \big)\beta(\zvec, \bvec)\de \zvec \de \bvec}{\int_{\Sset, \Aset}\psi(\svec, \zvec)\varphi(\avec, \bvec)\beta(\zvec, \bvec) \de \zvec \de \bvec}\Bigg| \nonumber \\
& \leq & \gamma_c\max_{\avec, \svec'}\Bigg|\frac{\int_{\Sset \times \Aset}\psi(\svec, \zvec)\varphi(\avec, \bvec)\big(\int_{\Sset}\overline{V}(\zvec')\phi(\zvec', \svec')-V^*(\svec')\de \zvec'  \big)\beta(\zvec, \bvec)\de \zvec \de \bvec}{\int_{\Sset, \Aset}\psi(\svec, \zvec)\varphi(\avec, \bvec)\beta(\zvec, \bvec) \de \zvec \de \bvec}\Bigg| \nonumber \\
& = & \gamma_c\max_{\avec, \svec'}\Bigg|\frac{\int_{\Sset \times \Aset}\psi(\svec, \zvec)\varphi(\avec, \bvec)\beta(\zvec, \bvec)\de \zvec \de \bvec}{\int_{\Sset, \Aset}\psi(\svec, \zvec)\varphi(\avec, \bvec)\beta(\zvec, \bvec) \de \zvec \de \bvec}\bigg(\int_{\Sset}\overline{V}(\zvec')\phi(\zvec', \svec')-V^*(\svec')\de \zvec'  \bigg) \Bigg|\nonumber \\
& = & \gamma_c\max_{ \svec'}\bigg|\int_{\Sset}\overline{V}(\zvec')\phi(\zvec', \svec')-V^*(\svec')\de \zvec'  \bigg|\nonumber \\
& = & \gamma_c\max_{ \svec'}\bigg|\int_{\Sset}\overline{V}(\svec'+\deltavec)\phi(\svec'+\deltavec, \svec')-V^*(\svec')\de \deltavec  \bigg|. \label{equation:finitesamples}
\end{eqnarray}
Note that
\begin{equation}
\phi(\svec' + \deltavec, \svec') = \prod_{i=1}^{d_s}\frac{e^{- \frac{\delta_i^2}{2h^2_{\phi,i}}}}{\sqrt{2\pi h^2_{\phi,i}}}, \nonumber
\end{equation}
thus, using the Lipschitz inequality,
\begin{eqnarray}
&  &  \max_{\svec'}\bigg|\int_{\Sset}\overline{V}(\svec'+\deltavec)\phi(\svec'+\deltavec, \svec')-V^*(\svec')\de \deltavec  \bigg|\nonumber\\
& \leq & \max_{\svec'}\bigg| \overline{V}(\svec')-V^*(\svec')\bigg| + \int_{\Sset} L_{V}\bigg(\sum_{i=1}^{d_s}|\delta_i|\bigg)\prod_{i=1}^{d_s}\frac{e^{- \frac{\delta_i^2}{2h^2_{\phi,i}}}}{\sqrt{2\pi h^2_{\phi,i}}}\de \deltavec \nonumber \\
& = & \max_{\svec'}\bigg| \overline{V}(\svec')-V^*(\svec')\bigg| + L_{V} \int_{\Sset} \bigg(\sum_{i=1}^{d_s}|\delta_i|\bigg)\prod_{i=1}^{d_s}\frac{e^{- \frac{\delta_i^2}{2h^2_{\phi,i}}}}{\sqrt{2\pi h^2_{\phi,i}}}\de \deltavec \nonumber \\
& =  & \max_{\svec'}\bigg| \overline{V}(\svec')-V^*(\svec')\bigg| +L_{V} \sum_{k=1}^{d_s}\bigg(\prod_{i\neq k}^{d_s}\int_{-\infty}^{+\infty} \frac{e^{- \frac{\delta_i^2}{2h^2_{\phi,i}}}}{\sqrt{2\pi h^2_{\phi,i}}}\de \delta_i\bigg)\int_{-\infty}^{+\infty}|\delta_k| \frac{e^{- \frac{\delta_k^2}{2h^2_{\phi,k}}}}{\sqrt{2\pi h^2_{\phi,k}}} \de \delta_k \nonumber \\
& =  & \max_{\svec'}\bigg| \overline{V}(\svec')-V^*(\svec')\bigg| +L_{V}2 \sum_{k=1}^{d_s}\int_{0}^{+\infty}\delta_k \frac{e^{- \frac{\delta_k^2}{2h^2_{\phi,k}}}}{\sqrt{2\pi h^2_{\phi,k}}} \de \delta_k \nonumber \\
& =  & \max_{\svec'}\bigg| \overline{V}(\svec')-V^*(\svec')\bigg| +L_{V} \sum_{k=1}^{d_s}\frac{h_{\phi,k}}{\sqrt{2 \pi}} \nonumber, 
\end{eqnarray}
which means that
\begin{eqnarray}
& & \bigg| \overline{V}(\svec)-V^*(\svec)\bigg| \leq \text{A}_\text{Bias} + \gamma_c \bigg(\max_{\svec'}\bigg| \overline{V}(\svec')-V^*(\svec')\bigg| +L_{V} \sum_{k=1}^{d_s}\frac{h_{\phi,k}}{\sqrt{2 \pi}}\bigg).  \nonumber
\end{eqnarray}
Since both $\overline{V}(\svec)$ and $V^*(\svec)$ are bounded by $-R_{\max}/(1-\gamma_c)$ and $R_{\max}(1-\gamma_c)$, then  $\big|\overline{V}(\svec)-V^*(\svec)\big| \leq 2\frac{R_{\text{max}}}{1-\gamma_c}$, thus
\begin{eqnarray}
& & \bigg| \overline{V}(\svec)-V^*(\svec)\bigg| \leq \text{A}_\text{Bias} + \gamma_c \bigg(\max_{\svec'}\bigg| \overline{V}(\svec')-V^*(\svec')\bigg| +L_{V} \sum_{k=1}^{d_s}\frac{h_{\phi,k}}{\sqrt{2 \pi}}\bigg)  \label{equation:recursion1}  \\
& & \bigg| \overline{V}(\svec)-V^*(\svec)\bigg| \leq \text{A}_\text{Bias} + \gamma_c \bigg(2\frac{R_{\text{max}}}{1-\gamma_c} +L_{V} \sum_{k=1}^{d_s}\frac{h_{\phi,k}}{\sqrt{2 \pi}}\bigg) \nonumber \\
\implies & &  \bigg| \overline{V}(\svec)-V^*(\svec)\bigg| \leq \text{A}_\text{Bias} + \gamma_c \bigg(\text{A}_\text{Bias} + \gamma_c \bigg(2\frac{R_{\text{max}}}{1-\gamma_c} +L_{V} \sum_{k=1}^{d_s}\frac{h_{\phi,k}}{\sqrt{2 \pi}}\bigg) +L_{V} \sum_{k=1}^{d_s}\frac{h_{\phi,k}}{\sqrt{2 \pi}}\bigg)  \qquad \text{using Equation~\eqref{equation:recursion1}} \nonumber \\
\implies & &  \bigg| \overline{V}(\svec)-V^*(\svec)\bigg| \leq \sum_{t=0}^{\infty}\gamma_c^t \bigg(\text{A}_\text{Bias} + \gamma_c L_{V} \sum_{k=1}^{d_s}\frac{h_{\phi,k}}{\sqrt{2 \pi}} \bigg)  \qquad \text{using Equation~\eqref{equation:recursion1}} \nonumber \\
\implies & &  \bigg| \overline{V}(\svec)-V^*(\svec)\bigg| \leq  \frac{1}{1-\gamma_c} \bigg(\text{A}_\text{Bias} + \gamma_c L_{V} \sum_{k=1}^{d_s}\frac{h_{\phi,k}}{\sqrt{2 \pi}} \bigg). \nonumber 
\end{eqnarray}
\end{proof}

\section{Support to the Empirical Analysis}

\subsection{Gradient Analysis}
The parameters used for the LQG are

\begin{eqnarray}
	A = \left[\begin{array}{cc}
	1.2 & 0 \\
	0 & 1.1
	\end{array}\right]; \quad B = \left[\begin{array}{cc}
	1 & 0 \\
	0 & 1
	\end{array}\right]; \quad Q = \left[\begin{array}{cc}
	1 & 0 \\
	0 & 1
	\end{array}\right]; \quad R = \left[\begin{array}{cc}
	0.1 & 0 \\
	0 & 0.1
	\end{array}\right]; \quad \Sigma = \left[\begin{array}{cc}
	1 & 0 \\
	0 & 1
	\end{array}\right]; \quad \state_0 = [-1, -1]. \nonumber 
\end{eqnarray}
The discount factor is $\gamma=0.9$, and the length of the episodes is $50$ steps.
The parameters of the optimization policy are $\bm{\theta} = [-0.6, -0.8]$ and the off-policy parameters are $\bm{\theta}' = [-0.35, -0.5]$.
The confidence intervals have been computed using \textsl{bootstrapped percentile intervals}. The size of the bootstrapped dataset vary from plot to plot (usually from 1000 to 5000 different seeds). The confidence intervals, instead, have been computed using 10000 bootstraps. 
We used this method instead the more classic standard error (using a $\chi^2$ or a $t$-distribution), because often, due to the importance sampling, our samples are highly non-Gaussian and heavy-tailed. The bootstrapping method relies on less assumptions, and their confidence intervals were more precise in this case. 

\subsection{Policy Improvement analysis}

We use a policy encoded as neural network with parameters $\bm{\theta}$. A deterministic policy is encoded with a neural network $\action = f_{\bm{\theta}}(\state)$. The stochastic policy is encoded as a Gaussian distribution with parameters determined by a neural network with two outputs, the mean and covariance. In this case we represent by $f_{\bm{\theta}}(\state)$ the slice of the output corresponding to the mean and by $g_{\bm{\theta}}(\state)$ the part of the output corresponding to the covariance. 

NOPG can be described with the following hyper-parameters

\begin{longtable}{l l}
	\textbf{NOPG Parameters} & Meaning\\
	\hline
	dataset sizes & number of samples contained in the dataset used for training \\        
	discount factor $\gamma$ & discount factor in infinite horizon MDP \\
	state $\vec{h}_{\textrm{factor}}$ & constant used to decide the bandwidths for the state-space \\
	action $ \vec{h}_{\textrm{factor}}$ & constant used to decide the bandwidths for the action-space \\
	policy & parametrization of the policy\\
	policy output & how is the output of the policy encoded \\               
	learning rate  & the learning rate and the gradient ascent algorithm used \\
	$N_{\pi}^{\textrm{MC}}$ (NOPG-S) & number of samples drawn to compute the integral $\bm{\varepsilon}_{\pi}(\state)$ with MonteCarlo sampling \\
	$N_{\phi}^{\textrm{MC}}$ & number of samples drawn to compute the integral over the next state $\int \phi(\state')\de \state'$\\
	$N_{\mu_0}^{\textrm{MC}}$ & number of samples drawn to compute the integral over the initial \\
	&  distribution $\int \hat{V}_{\pi}(\state) \mu_{0}(\state) \de \state$ \\
	policy updates & number of policy updates before returning the optimized policy \\
	\hline
\end{longtable}

A  few considerations about NOPG parameters. If $N_{\phi}^{\textrm{MC}}=1$ we use the mean of the kernel $\phi$ as a sample to approximate the integral over the next state. When optimizing a stochastic policy represented by a Gaussian distribution, we set and linearly decay the variance over the policy optimization procedure. The kernel bandwidths are computed in two steps: first we find the best bandwidth for each dimension of the state and action spaces using cross validation; second we multiply each bandwidth by an empirical constant factor ($\vec{h}_{\textrm{factor}}$). This second step is important to guarantee that the state and action spaces do not have a zero density. For instance, in a continuous action environment, when sampling actions from a uniform grid we have to guarantee that the space between the grid points have some density. The problem of estimating the bandwidth in kernel density estimation is well studied, but needs to be adapted to the problem at hand, specially with a low number of samples. We found this approach to work well for our experiments but it can be further improved.

\subsubsection{Pendulum with Uniform Dataset}
Tables~\ref{tab:pendulum_uniform} and~\ref{tab:pendulum_uniform_configs} describe the hyper-parameters used to run the experiment shown in the first plot of Figure~\ref{figure:comparison}.

\textbf{Dataset Generation:}
The datasets have been generated using a grid over the state-action spaces $\theta, \dot{\theta}, u$, where $\theta$ and $\dot{\theta}$ are respectively angle and angular velocity of the pendulum, and $u$ is the torque applied. 
In Table~\ref{tab:pendulum_uniform} are enumerated the different datasets used.

\begin{table}[H]
\begin{center}
	\normalsize
    \begin{tabular}{c c c c}
        $\#\theta$ & $\#\dot{\theta}$ & $\#u$ & Sample size \\ \hline
        $10$ & $10$ & $2$ & $200$ \\
        $15$ & $15$ & $2$ & $450$ \\
        $20$ & $20$ & $2$ & $800$ \\
        $25$ & $25$ & $2$ & $1250$ \\
        $30$ & $30$ & $2$ & $1800$ \\
        $40$ & $40$ & $2$ & $3200$ \\
        \hline
    \end{tabular}
    \end{center}

    \caption[Pendulum uniform grid dataset configurations]{\textbf{Pendulum uniform grid dataset configurations} This table shows the level of discretization for each dimension of the state space ($\#\theta$ and $\#\dot{\theta}$) and the action space ($\#u$). Each line corresponds to a uniformly sampled dataset, where $\theta \in [-\pi, \pi]$, $\dot{\theta} \in [-8, 8]$ and $u \in [-2, 2]$. The entries under the states' dimensions and action dimension correspond to how many linearly spaced states or actions are to be queried from the corresponding intervals. The Cartesian product of states and actions dimensions is taken in order to generate the state-action pairs to query the environment transitions. The rightmost column indicates the total number of corresponding samples.}
    \label{tab:pendulum_uniform}
\end{table}

\textbf{Algorithm details:}
The configuration used for NOPG-D and NOPG-S are listed in Table~\ref{tab:pendulum_uniform_configs}.
\begin{table}[H]
\begin{center}
	\normalsize
    \begin{tabular}{l l}
        \textbf{NOPG} & \\
        \hline
        discount factor $\gamma$ & 0.99 \\
        state $\vec{h}_{\textrm{factor}}$ & $1.0$ $1.0$  $1.0$   \\
        action $ \vec{h}_{\textrm{factor}}$ & $50.0$ \\
        policy & neural network parameterized by $\bm{\theta}$\\
               & 1 hidden layer, 50 units, ReLU activations \\
        policy output & $2 \tanh(f_{\bm{\theta}}(\state))$ (NOGP-D) \\
                      & $\mu = 2 \tanh(f_{\bm{\theta}}(\state))$, $\sigma = \textrm{sigmoid}(g_{\bm{\theta}}(\state))$ (NOGP-S) \\               
        learning rate  & $10^{-2}$ with ADAM optimizer \\
        $N_{\pi}^{\textrm{MC}}$ (NOPG-S) & 15 \\
        $N_{\phi}^{\textrm{MC}}$ & 1 \\
        $N_{\mu_0}^{\textrm{MC}}$ & (non applicable) fixed initial state \\           
        policy updates & $1.5 \cdot 10^3$ \\
        \hline
    \end{tabular}
    \end{center}

    \caption[NOPG configurations for the Pendulum uniform grid experiment]{\textbf{NOPG configurations for the Pendulum uniform grid experiment}
    }
    \label{tab:pendulum_uniform_configs}
\end{table}

\subsubsection{Pendulum with Random Agent}
The following tables show the hyper-parameters used for generating the second plot starting from the left in Figure~\ref{figure:comparison}
\begin{center}
    \begin{longtable}{l l}
        \textbf{NOPG} & \\
        \hline
        dataset sizes & $10^2$, $5 \cdot 10^2$, $10^3$, $1.5 \cdot 10^3$, $2 \cdot 10^3$, $3 \cdot 10^3$, \\
                      & $5 \cdot 10^3$, $7 \cdot 10^3$, $9 \cdot 10^3$, $10^4$ \\
        discount factor $\gamma$ & $0.99$ \\
        state $\vec{h}_{\textrm{factor}}$ & $1.0$ $1.0$ $1.0$  \\
        action $ \vec{h}_{\textrm{factor}}$ & $25.0$ \\
        policy & neural network parameterized by $\bm{\theta}$\\
               & $1$ hidden layer, $50$ units, ReLU activations \\
        policy output & $2 \tanh(f_{\bm{\theta}}(\state))$ (NOGP-D) \\
                      & $\mu = 2 \tanh(f_{\bm{\theta}}(\state))$, $\sigma = \textrm{sigmoid}(g_{\bm{\theta}}(\state))$ (NOGP-S) \\               
        learning rate  & $10^{-2}$ with ADAM optimizer \\
        $N_{\pi_0}^{\textrm{MC}}$ (NOPG-S) & $10$ \\
        $N_{\phi}^{\textrm{MC}}$ & $1$ \\
        $N_{\mu_0}^{\textrm{MC}}$ & (non applicable) fixed initial state \\        
        policy updates & $2 \cdot 10^3$ \\
        \hline
        & \\ & \\
        
        \textbf{SAC} & \\
		\hline
		discount factor $\gamma$ & $0.99$ \\
		rollout steps & $500$ \\
		actor  & neural network parameterized by $\bm{\theta}_{\textrm{actor}}$\\
		& $1$ hidden layer, $50$ units, ReLU activations \\
		actor output & $2 \tanh(u)$, $u \sim \mathcal{N}(\cdot | \mu = f_{\bm{\theta}_{\textrm{actor}}}(\state), \sigma = g_{\bm{\theta}_{\textrm{actor}}}(\state))$ \\
		actor learning rate & $10^{-3}$ with ADAM optimizer \\
		critic  & neural network parameterized by $\bm{\theta}_{\textrm{critic}}$\\
		& $2$ hidden layers, $50$ units, ReLU activations \\
		critic output & $f_{\bm{\theta}_{\textrm{critic}}}(\state, \action)$ \\
		critic learning rate & $5 \cdot 10^{-3}$ with ADAM optimizer \\        
		max replay size & $5 \cdot 10^5$ \\
		initial replay size & $128$ \\
		batch size & $64$ \\
		soft update & $\tau = 5 \cdot 10^{-3}$ \\
		policy updates & $2.5 \cdot 10^5$ \\
		\hline        
		& \\ & \\        
        
        \textbf{DDPG / TD3} & \\
        \hline
        discount factor $\gamma$ & $0.99$ \\
        rollout steps & $500$ \\
        actor  & neural network parameterized by $\bm{\theta}_{\textrm{actor}}$\\
               & $1$ hidden layer, $50$ units, ReLU activations \\
        actor output & $2 \tanh(f_{\bm{\theta}_{\textrm{actor}}}(\state))$ \\
        actor learning rate & $10^{-3}$ with ADAM optimizer \\
        critic  & neural network parameterized by $\bm{\theta}_{\textrm{critic}}$\\
                & $2$ hidden layers, $50$ units, ReLU activations \\
        critic output & $f_{\bm{\theta}_{\textrm{critic}}}(\state, \action)$ \\
        critic learning rate & $10^{-2}$ with ADAM optimizer \\        
        max replay size & $5 \cdot 10^5$ \\
        initial replay size & $128$ \\
        batch size & $64$ \\
        soft update & $\tau = 5 \cdot 10^{-3}$ \\
        policy updates & $2.5 \cdot 10^5$ \\
        \hline        
        & \\ & \\
%
%

        \textbf{PWIS} & \\
        \hline
        dataset sizes & $10^2$, $5 \cdot 10^2$, $10^3$, $2 \cdot 10^3$, $5 \cdot 10^3$, $7.5 \cdot 10^3$, \\
                      &  $10^4$, $1.2 \cdot 10^4$, $1.5 \cdot 10^4$, $2 \cdot 10^4$, $2.5 \cdot 10^4$ \\        
        discount factor $\gamma$ & $0.99$ \\
        policy & neural network parameterized by $\bm{\theta}$\\
               & $1$ hidden layer, $50$ units, ReLU activations \\
        policy output & $\mu = 2 \tanh(f_{\bm{\theta}}(\state))$, $\sigma = \textrm{sigmoid}(g_{\bm{\theta}}(\state))$ \\               
        learning rate  & $10^{-2}$ with ADAM optimizer \\
        policy updates & $2 \cdot 10^3$ \\
        \hline
        & \\ & \\ & \\
        
        \textbf{BEAR} & \\
        \hline
        dataset sizes & $10^2$, $2 \cdot 10^2$ $5 \cdot 10^2$, $10^3$, $2 \cdot 10^3$, $5 \cdot 10^3$, \\
        & $10^4$, $2 \cdot 10^4$, $5 \cdot 10^4$, $10^5$ \\          
        discount factor $\gamma$ & $0.99$ \\
        policy & neural network parameterized by $\bm{\theta}$\\
        & $1$ hidden layer, $50$ units, ReLU activations \\
        policy output & $\mu = 2 \tanh(f_{\bm{\theta}}(\state))$, $\sigma = \textrm{sigmoid}(g_{\bm{\theta}}(\state))$ \\               
        learning rate  & $10^{-4}$ \\
        policy updates & $1 \cdot 10^3$ \\
        \hline
        & \\ & \\ & \\
        
        \textbf{BRAC-(dual)} & \\
        \hline
        dataset sizes & $10^2$, $2 \cdot 10^2$ $5 \cdot 10^2$, $10^3$, $2 \cdot 10^3$, $5 \cdot 10^3$, \\
        & $10^4$, $2 \cdot 10^4$, $5 \cdot 10^4$, $10^5$ \\          
        discount factor $\gamma$ & $0.99$ \\
        policy & neural network parameterized by $\bm{\theta}$\\
        & $1$ hidden layer, $50$ units, ReLU activations \\
        policy output & $\mu = 2 \tanh(f_{\bm{\theta}}(\state))$, $\sigma = \textrm{sigmoid}(g_{\bm{\theta}}(\state))$ \\               
        learning rate  & $10^{-3}$ with Adam \\
        policy updates & $5 \cdot 10^4$ \\
        batch size & $\leq 256$ \\
        soft update & $\tau = 5 \cdot 10^{-3}$ \\
        \hline
        & \\ & \\ & \\
        
        \textbf{MOPO} & \\
        \hline
        dataset sizes & $10^2$, $2 \cdot 10^2$ $5 \cdot 10^2$, $10^3$, $2 \cdot 10^3$, $5 \cdot 10^3$, \\
        & $10^4$, $2 \cdot 10^4$, $5 \cdot 10^4$, $10^5$ \\          
        discount factor $\gamma$ & $0.99$ \\
        policy & neural network parameterized by $\bm{\theta}$\\
        & $1$ hidden layer, $50$ units, ReLU activations \\
        policy output & $\mu = 2 \tanh(f_{\bm{\theta}}(\state))$, $\sigma = \textrm{sigmoid}(g_{\bm{\theta}}(\state))$ \\               
        learning rate  & $3 \cdot 10^{-4}$ with Adam \\
        number of epochs & $5 \cdot 10^2$ \\
        batch size & $256$ \\
        soft update & $\tau = 5 \cdot 10^{-3}$ \\
        BNN hidden dims & $64$ \\
        BNN max epochs & $100$ \\
        BNN ensemble size & $7$ \\
        \hline
        & \\ & \\ & \\

		\textbf{MOReL} & \\
		\hline
		dataset sizes & $10^2$, $2 \cdot 10^2$ $5 \cdot 10^2$, $10^3$, $2 \cdot 10^3$, $5 \cdot 10^3$, \\
		& $10^4$, $2 \cdot 10^4$, $5 \cdot 10^4$, $10^5$ \\          
		discount factor $\gamma$ & $0.99$ \\
		policy & neural network parameterized by $\bm{\theta}$\\
		& $1$ hidden layer, $50$ units, ReLU activations \\
		policy output & $\mu = 2 \tanh(f_{\bm{\theta}}(\state))$, $\sigma = \textrm{sigmoid}(g_{\bm{\theta}}(\state))$ \\               
		step size & $0.02$ \\
		number of iterations & $1 \cdot 10^3$ \\
		dynamics hidden dims & $(128, 128)$, ReLU activations \\
		dynamics lr & $0.001$ \\
		dynamics batch-size & $256$ \\	
		dynamics fit-epochs & $25$ \\	
		dynamics num-models & $4$ \\
		\hline        

        \caption[Algorithms configurations for the Pendulum random data experiment]{\textbf{Algorithms configurations for the Pendulum random data experiment}}
        \label{tab:pendulum_random_configs}
            
    \end{longtable}
\end{center}

\subsubsection{Cart-pole with Random Agent}
The following tables show the hyper-parameters used to generate the third plot in Figure~\ref{figure:comparison}.
\begin{center}
    \begin{longtable}{l l}
        \textbf{NOPG} & \\
        \hline
        dataset sizes & $10^2$, $2.5 \cdot 10^2$, $5 \cdot 10^2$, $10^3$, $1.5 \cdot 10^3$, $2.5 \cdot 10^3$,  \\
                      & $3 \cdot 10^3$, $5 \cdot 10^3$, $6 \cdot 10^3$, $8 \cdot 10^3$, $10^4$ \\        
        discount factor $\gamma$ & $0.99$ \\
        state $\vec{h}_{\textrm{factor}}$ & $1.0$ $1.0$ $1.0$  \\
        action $ \vec{h}_{\textrm{factor}}$ & $20.0$ \\
        policy & neural network parameterized by $\bm{\theta}$\\
               & $1$ hidden layer, $50$ units, ReLU activations \\
        policy output & $5 \tanh(f_{\bm{\theta}}(\state))$ (NOGP-D) \\
                      & $\mu = 5 \tanh(f_{\bm{\theta}}(\state))$, $\sigma = \textrm{sigmoid}(g_{\bm{\theta}}(\state))$ (NOGP-S) \\               
        learning rate  & $ 10^{-2}$ with ADAM optimizer \\
        $N_{\pi}^{\textrm{MC}}$ (NOPG-S) & $10$ \\
        $N_{\phi}^{\textrm{MC}}$ & $1$ \\
        $N_{\mu_0}^{\textrm{MC}}$ & $15$ \\
        policy updates & $2 \cdot 10^3$ \\
        \hline
        & \\ & \\
        
        \textbf{SAC} & \\
        \hline
        discount factor $\gamma$ & $0.99$ \\
        rollout steps & $10000$ \\
        actor  & neural network parameterized by $\bm{\theta}_{\textrm{actor}}$\\
        & $1$ hidden layer, $50$ units, ReLU activations \\
        actor output & $5 \tanh(u)$, $u \sim \mathcal{N}(\cdot | \mu = f_{\bm{\theta}_{\textrm{actor}}}(\state), \sigma = g_{\bm{\theta}_{\textrm{actor}}}(\state))$ \\
        actor learning rate & $10^{-3}$ with ADAM optimizer \\
        critic  & neural network parameterized by $\bm{\theta}_{\textrm{critic}}$\\
        & $2$ hidden layers, $50$ units, ReLU activations \\
        critic output & $f_{\bm{\theta}_{\textrm{critic}}}(\state, \action)$ \\
        critic learning rate & $5 \cdot 10^{-3}$ with ADAM optimizer \\        
        max replay size & $5 \cdot 10^5$ \\
        initial replay size & $128$ \\
        batch size & $64$ \\
        soft update & $\tau = 5 \cdot 10^{-3}$ \\
        policy updates & $2.5 \cdot 10^5$ \\
        \hline        
        & \\ & \\    
        
        \textbf{DDPG / TD3} & \\
        \hline
        discount factor $\gamma$ & $0.99$ \\
        rollout steps & $10000$ \\
        actor  & neural network parameterized by $\bm{\theta}_{\textrm{actor}}$\\
               & $1$ hidden layer, $50$ units, ReLU activations \\
        actor output & $5 \tanh(f_{\bm{\theta}_{\textrm{actor}}}(\state))$ \\
        actor learning rate & $10^{-3}$ with ADAM optimizer \\
        critic  & neural network parameterized by $\bm{\theta}_{\textrm{critic}}$\\
                & $1$ hidden layer, $50$ units, ReLU activations \\
        critic output & $f_{\bm{\theta}_{\textrm{critic}}}(\state, \action)$ \\
        critic learning rate & $10^{-2}$ with ADAM optimizer \\        
        soft update & $\tau = 10^{-3}$ \\
        policy updates & $2 \cdot 10^5$ \\
        \hline        
        & \\ & \\ & \\
%

        \textbf{PWIS} & \\
        \hline
        dataset sizes & $10^2$, $5 \cdot 10^2$, $10^3$, $2 \cdot 10^3$, $3.5 \cdot 10^3$, $5 \cdot 10^3$, \\
                      & $8 \cdot 10^3$,  $10^4$, $1.5 \cdot 10^4$, $2 \cdot 10^4$, $2.5 \cdot 10^4$ \\          
        discount factor $\gamma$ & $0.99$ \\
        policy & neural network parameterized by $\bm{\theta}$\\
               & $1$ hidden layer, $50$ units, ReLU activations \\
        policy output & $\mu = 5 \tanh(f_{\bm{\theta}}(\state))$, $\sigma = \textrm{sigmoid}(g_{\bm{\theta}}(\state))$ \\               
        learning rate  & $10^{-3}$ with ADAM optimizer \\
        policy updates & $2 \cdot 10^3$ \\
        \hline
        & \\ & \\ & \\
        
        \textbf{BEAR} & \\
        \hline
        dataset sizes & $10^2$, $2 \cdot 10^2$ $5 \cdot 10^2$, $10^3$, $2 \cdot 10^3$, $5 \cdot 10^3$, \\
        & $10^4$, $2 \cdot 10^4$, $5 \cdot 10^4$, $10^5$ \\          
        discount factor $\gamma$ & $0.99$ \\
        policy & neural network parameterized by $\bm{\theta}$\\
        & $1$ hidden layer, $50$ units, ReLU activations \\
        policy output & $\mu = 5 \tanh(f_{\bm{\theta}}(\state))$, $\sigma = \textrm{sigmoid}(g_{\bm{\theta}}(\state))$ \\               
        learning rate  & $10^{-4}$ \\
        policy updates & $1 \cdot 10^3$ \\
        \hline
        & \\ & \\ & \\
        
        \textbf{BRAC-(dual)} & \\
        \hline
        dataset sizes & $10^2$, $2 \cdot 10^2$ $5 \cdot 10^2$, $10^3$, $2 \cdot 10^3$, $5 \cdot 10^3$, \\
        & $10^4$, $2 \cdot 10^4$, $5 \cdot 10^4$, $10^5$ \\          
        discount factor $\gamma$ & $0.99$ \\
        policy & neural network parameterized by $\bm{\theta}$\\
        & $1$ hidden layer, $50$ units, ReLU activations \\
        policy output & $\mu = 5 \tanh(f_{\bm{\theta}}(\state))$, $\sigma = \textrm{sigmoid}(g_{\bm{\theta}}(\state))$ \\               
        learning rate  & $10^{-3}$ with Adam \\
        policy updates & $5 \cdot 10^4$ \\
        batch size & $\leq 256$ \\
        soft update & $\tau = 5 \cdot 10^{-3}$ \\
        \hline
        & \\ & \\ & \\        

        \textbf{MOPO} & \\
		\hline
		dataset sizes & $10^2$, $2 \cdot 10^2$ $5 \cdot 10^2$, $10^3$, $2 \cdot 10^3$, $5 \cdot 10^3$, \\
		& $10^4$, $2 \cdot 10^4$, $5 \cdot 10^4$, $10^5$ \\          
		discount factor $\gamma$ & $0.99$ \\
		policy & neural network parameterized by $\bm{\theta}$\\
		& $1$ hidden layer, $50$ units, ReLU activations \\
		policy output & $\mu = 2 \tanh(f_{\bm{\theta}}(\state))$, $\sigma = \textrm{sigmoid}(g_{\bm{\theta}}(\state))$ \\               
		learning rate  & $3 \cdot 10^{-4}$ with Adam \\
		number of epochs & $5 \cdot 10^2$ \\
		batch size & $256$ \\
		soft update & $\tau = 5 \cdot 10^{-3}$ \\
		BNN hidden dims & $32$ \\
		BNN max epochs & $100$ \\ 
		BNN ensemble size & $7$ \\
		\hline
		& \\ & \\ & \\
		
		\textbf{MOReL} & \\
		\hline
		dataset sizes & $10^2$, $2 \cdot 10^2$ $5 \cdot 10^2$, $10^3$, $2 \cdot 10^3$, $5 \cdot 10^3$, \\
		& $10^4$, $2 \cdot 10^4$, $5 \cdot 10^4$, $10^5$ \\          
		discount factor $\gamma$ & $0.99$ \\
		policy & neural network parameterized by $\bm{\theta}$\\
		& $1$ hidden layer, $50$ units, ReLU activations \\
		policy output & $\mu = 2 \tanh(f_{\bm{\theta}}(\state))$, $\sigma = \textrm{sigmoid}(g_{\bm{\theta}}(\state))$ \\               
		step size & $0.02$ \\
		number of iterations & $500$ \\
		dynamics hidden dims & $(128, 128)$, ReLU activations \\
		dynamics lr & $0.001$ \\
		dynamics batch-size & $256$ \\	
		dynamics fit-epochs & $25$ \\	
		dynamics num-models & $4$ \\
		\hline

        \caption[Algorithms configurations for the CartPole random data experiment]{\textbf{Algorithms configurations for the CartPole random data experiment}.}
        \label{tab:cartpole_random_configs}
            
    \end{longtable}
\end{center}

\subsubsection{U-Maze with D4RL}
\begin{center}
    \begin{longtable}{l l}
        \textbf{NOPG} & \\
        \hline
        dataset sizes & $10^2$, $3.5 \cdot 10^2$, $5 \cdot 10^2$, $6.5 \cdot 10^2$, $8 \cdot 10^2$, $1 \cdot 10^3$ \\        
        discount factor $\gamma$ & $0.99$ \\
        state $\vec{h}_{\textrm{factor}}$ & $2.0$  \\
        action $ \vec{h}_{\textrm{factor}}$ & $5.0$ \\
        policy & neural network parameterized by $\bm{\theta}$\\
               & $2$ hidden layers, $64$ and $32$ units, ReLU activations \\
        policy output & $5 \tanh(f_{\bm{\theta}}(\state))$ (NOGP-D) \\
                      & $\mu = 5 \tanh(f_{\bm{\theta}}(\state))$, $\sigma = \textrm{sigmoid}(g_{\bm{\theta}}(\state))$ (NOGP-S) \\               
        learning rate  & $ 3\cdot 10^{-4}$ with ADAM optimizer \\
        $N_{\pi}^{\textrm{MC}}$ (NOPG-S) & $1$ \\
        $N_{\phi}^{\textrm{MC}}$ & $1$ \\
        $N_{\mu_0}^{\textrm{MC}}$ & $50$ \\
        policy updates & $5 \cdot 10^3$ \\
        \hline
        & \\ & \\
        
        \textbf{BEAR} & \\
        \hline
        dataset sizes & $10^2$, $2 \cdot 10^2$ $5 \cdot 10^2$, $10^3$, $2 \cdot 10^3$, $5 \cdot 10^3$, \\
        & $10^4$, $2 \cdot 10^4$, $5 \cdot 10^4$, $10^5$ \\          
        discount factor $\gamma$ & $0.99$ \\
        policy & neural network parameterized by $\bm{\theta}$\\
        & $2$ hidden layer, $64$ and $32$ units, ReLU activations \\
        policy output & $\mu = 5 \tanh(f_{\bm{\theta}}(\state))$, $\sigma = \textrm{sigmoid}(g_{\bm{\theta}}(\state))$ \\               
        learning rate  & $10^{-4}$ \\
        policy updates & $10^3$ \\
        \hline
        & \\ & \\ 
        
        \textbf{BRAC-(dual)} & \\
        \hline
        dataset sizes & $10^2$, $2 \cdot 10^2$ $5 \cdot 10^2$, $10^3$, $2 \cdot 10^3$, $5 \cdot 10^3$, \\
        & $10^4$, $2 \cdot 10^4$, $5 \cdot 10^4$, $10^5$ \\          
        discount factor $\gamma$ & $0.99$ \\
        policy & neural network parameterized by $\bm{\theta}$\\
        & $2$ hidden layer, $64$ and $32$ units, ReLU activations \\
        policy output & $\mu = 5 \tanh(f_{\bm{\theta}}(\state))$, $\sigma = \textrm{sigmoid}(g_{\bm{\theta}}(\state))$ \\               
        learning rate  & $10^{-3}$ with Adam \\
        policy updates & $5 \cdot 10^4$ \\
        batch size & $\leq 256$ \\
        soft update & $\tau = 5 \cdot 10^{-3}$ \\
        \hline
        & \\ & \\ 
                
        \textbf{MOPO} & \\
        \hline
        dataset sizes & $10^2$, $2 \cdot 10^2$ $5 \cdot 10^2$, $10^3$, $2 \cdot 10^3$, $5 \cdot 10^3$, \\
        & $10^4$, $2 \cdot 10^4$, $5 \cdot 10^4$, $10^5$ \\          
        discount factor $\gamma$ & $0.99$ \\
        policy & neural network parameterized by $\bm{\theta}$\\
        & $2$ hidden layer, $64$ and $32$ units, ReLU activations \\
        policy output & $\mu = 5 \tanh(f_{\bm{\theta}}(\state))$, $\sigma = \textrm{sigmoid}(g_{\bm{\theta}}(\state))$ \\               
        learning rate  & $3 \cdot 10^{-4}$ with Adam \\
        number of epochs & $5 \cdot 10^2$ \\
        batch size & $256$ \\
        soft update & $\tau = 5 \cdot 10^{-3}$ \\
        BNN hidden dims & $64$ \\
        BNN max epochs & $100$ \\ 
        BNN ensemble size & $7$ \\
        \hline
		& \\ & \\ & \\

		\textbf{MOReL} & \\
		\hline
		dataset sizes & $10^2$, $2 \cdot 10^2$ $5 \cdot 10^2$, $10^3$, $2 \cdot 10^3$, $5 \cdot 10^3$, \\
		& $10^4$, $2 \cdot 10^4$, $5 \cdot 10^4$, $10^5$ \\          
		discount factor $\gamma$ & $0.99$ \\
		policy & neural network parameterized by $\bm{\theta}$\\
		& $(64, 32)$, ReLU activations \\
		policy output & $\mu = 5 \tanh(f_{\bm{\theta}}(\state))$, $\sigma = \textrm{sigmoid}(g_{\bm{\theta}}(\state))$ \\               
		step size & $0.005$ \\
		number of iterations & $500$ \\
		dynamics hidden dims & $(32, 32)$, ReLU activations \\
		dynamics lr & $0.001$ \\
		dynamics batch-size & $256$ \\	
		dynamics fit-epochs & $25$ \\	
		dynamics num-models & $4$ \\
		\hline

        \caption[Algorithms configurations for the CartPole random data experiment]{\textbf{Algorithms configurations for the CartPole random data experiment}.}
        \label{tab:cartpole_random_configs}
            
    \end{longtable}
\end{center}

\subsubsection{Hopper with D4RL}
\begin{center}
    \begin{longtable}{l l}
        \textbf{NOPG} & \\
        \hline
        dataset & hopper-expert-v2 \\
        dataset sizes & $3 \cdot 10^3$, $5 \cdot 10^3$, $1 \cdot 10^4$ \\        
        discount factor $\gamma$ & $0.99$ \\
        state $\vec{h}_{\textrm{factor}}$ & $4.0$  \\
        action $ \vec{h}_{\textrm{factor}}$ & $10.0$ \\
        policy & neural network parameterized by $\bm{\theta}$\\
               & $2$ hidden layers, $256$ and $256$ units, ReLU activations \\
        policy output & $\tanh(f_{\bm{\theta}}(\state))$ (NOGP-D) \\
                      & $\mu = \tanh(f_{\bm{\theta}}(\state))$, $\sigma = \textrm{sigmoid}(g_{\bm{\theta}}(\state))$ (NOGP-S) \\               
        learning rate  & $ 3 \cdot 10^{-4}$ with ADAM optimizer \\
        $N_{\pi}^{\textrm{MC}}$ (NOPG-S) & $1$ \\
        $N_{\phi}^{\textrm{MC}}$ & $1$ \\
        $N_{\mu_0}^{\textrm{MC}}$ & $500$ \\
        policy updates & $5 \cdot 10^3$ \\
        \hline
        & \\ & \\
    \end{longtable}
\end{center}

\subsubsection{Mountain Car with Human Demonstrator}
The dataset ($10$ trajectories) for the experiment in Figure~\ref{figure:mountain} has been generated by a human demonstrator, and is available in the source code provided.

\begin{center}

    \begin{longtable}{l l}
        \textbf{NOPG} & \\
        \hline
        discount factor $\gamma$ & $0.99$ \\
        state $\vec{h}_{\textrm{factor}}$ & $1.0$ $1.0$ \\
        action $ \vec{h}_{\textrm{factor}}$ & $50.0$ \\
        policy & neural network parameterized by $\vec{\theta}$\\
               & $1$ hidden layer, $50$ units, ReLU activations \\
        policy output & $ \tanh(f_{\vec{\theta}}(\state))$ (NOGP-D) \\
                      & $\mu =  \tanh(f_{\vec{\theta}}(\state))$, $\sigma = \textrm{sigmoid}(g_{\vec{\theta}}(\state))$ (NOGP-S) \\               
        learning rate  & $10^{-2}$ with ADAM optimizer \\
        $N_{\pi}^{\textrm{MC}}$ (NOPG-S) & $15$ \\
        $N_{\phi}^{\textrm{MC}}$ & $1$ \\
        $N_{\mu_0}^{\textrm{MC}}$ & $15$ \\
        policy updates & $1.5 \cdot 10^3$ \\
        \hline
        \caption[NOPG configurations for the MountainCar experiment]{\textbf{NOPG configurations for the MountainCar experiment.}}
        \label{tab:mountaincar_configs}
    \end{longtable}
    
\end{center}

\reviewB{
\subsection{Computational and Memory Complexity}
Here we detail the computational and memory complexity of NOPG. We denote $N_{\mu_0}^{\text{MC}}$ Monte-Carlo samples for expectations under the initial state distribution, $N_{\pi}^{\text{MC}}$ samples for expectations under the stochastic policy (for deterministic it is $1$), and $N_{\phi}^{\text{MC}}$ samples for the expectations of each entry of $\approxx{\mathbf{P}}_{\pi}^\gamma$. We keep the main constants throughout the analysis and drop constant factors, e.g. for the normalization of a matrix row, which involves summing up and diving each element of the row.
\begin{itemize}
	\item Constructing the vector $\bm{\varepsilon}_{\pi,0}$ takes $\bigO\left(N_{\mu_0}^{\text{MC}} N_{\pi}^{\text{MC}} n \right)$ time. Storing $\bm{\varepsilon}_{\pi,0}$ occupies $\bigO\left(N_{\mu_0}^{\text{MC}} N_{\pi}^{\text{MC}} n \right)$ memory.
	\item We compute $\approxx{\mathbf{P}}_{\pi}^\gamma$ row by row by sparsifying each row selecting the largest $k$ elements, amounting to a complexity of $\bigO\left(N_{\phi}^{\text{MC}} N_{\pi}^{\text{MC}} n^2 \log k \right)$, with $k \ll n$, since the cost of processing one row is $\bigO \left( N_{\phi}^{\text{MC}} N_{\pi}^{\text{MC}} n \log k\right)$, e.g. with a Max-Heap, and there are $n$ rows. Storing $\approxx{\mathbf{P}}_{\pi}^\gamma$ needs $\bigO \left( N_{\phi}^{\text{MC}} N_{\pi}^{\text{MC}} n k \right)$ memory.
	\item We solve the linear system of equations $\rvec = \bm{\Lambda}_\pi\mathbf{q}_{\pi}$ and $\bm{\varepsilon}_{\pi,0} = \bm{\Lambda}_{\pi}^{\intercal}\bm{\mu}_{\pi}$ for $\mathbf{q}_{\pi}$ and $\bm{\mu}_{\pi}$ using the conjugate gradient method. Both computations take $\bigO \left( \sqrt{\delta} (k+1) n \right)$, where $\delta$ is the condition number of $\bm{\Lambda}_{\pi}$ after sparsification, and $(k+1)n$ the number of nonzero elements. The \textit{plus one} comes from the computation of $\bm{\Lambda}_\pi$, since subtracting $\approxx{\mathbf{P}}_{\pi}^\gamma$ from the identity matrix leads to an increase of $n$ nonzero elements. The conjugate gradient method is specially advantageous when using sparse matrices. As a side note, computing the condition number $\delta$ is computationally intensive, but there exist methods to compute an upper bound.
	\item In computing the surrogate loss for the gradient computation, the cost of the vector-vector multiplication $\frac{\partial}{\partial \theta} \bm{\varepsilon}_{\pi, 0}^\intercal \bm{q}_{\pi} $ is $\bigO\left( N_{\mu_0}^{\text{MC}} N_{\pi}^{\text{MC}} n  \right)$, and the vector-(sparse) matrix-vector multiplication $\bm{\mu}_{\pi}^{\intercal}\bigg(\pargrad{\theta} \approxx{\mathbf{P}}_{\pi}^\gamma \bigg)\mathbf{q}_{\pi}$ is $\bigO \left(  N_{\phi}^{\text{MC}} N_{\pi}^{\text{MC}} n^2 k \right)$, thus totaling $\bigO \left( N_{\mu_0}^{\text{MC}} N_{\pi}^{\text{MC}} n +  N_{\phi}^{\text{MC}} N_{\pi}^{\text{MC}} n^2 k \right)$. Assuming the number of policy parameters $M$ to be much	lower than the number of samples, $M \ll n$, we ignore the gradient computation, since even when using finite differences, we would have $\bigO(M) \ll \bigO(n)$.
\end{itemize}

Even thought the Monte-Carlo sample terms are fixed constants, and usually set to $1$, we left these terms to emphasize that the policy parameters are inside each entry of $\bm{\varepsilon}_{\pi, 0}$ and $\approxx{\mathbf{P}}_{\pi}^\gamma$, and thus we need to keep these terms until we compute the gradient with automatic differentiation. Since modern frameworks for gradient computation, such as Tensorflow or PyTorch, build a (static or dynamic) computational graph to backpropagate gradients, we cannot simply ignore these constants in terms of time and memory complexity. The only exception is when computing $\bm{q}_{\pi}$  and $\bm{\mu}_{\pi}$ with the conjugate gradient. Since we do not need to backprogagate through this matrix, here we drop the Monte-Carlo terms because $\bm{\Lambda}_{\pi}$ is evaluated at the current policy parameters, leading to a matrix represented by $n \times k$ elements.

Taking all costs into account, we conclude that the computational complexity of NOPG \textit{per policy update} is
\begin{align*}
	& \underbrace{\bigO\left(N_{\mu_0}^{\text{MC}} N_{\pi}^{\text{MC}} n \right)}_{\bm{\varepsilon}_{\pi, 0}} + \underbrace{\bigO\left(N_{\phi}^{\text{MC}} N_{\pi}^{\text{MC}} n^2 \log k \right)}_{\approxx{\mathbf{P}}_{\pi}^\gamma} + \underbrace{\bigO \left( \sqrt{\delta} (k+1) n \right) }_{\text{conj. grad } \bm{q}_\pi, \bm{\mu}_\pi} \\
	& \quad + \underbrace{\bigO\left(N_{\mu_0}^{\text{MC}} N_{\pi}^{\text{MC}} n \right)}_{\frac{\partial}{\partial \theta} \bm{\varepsilon}_{\pi, 0}^\intercal \bm{q}_{\pi}} + \underbrace{\bigO \left(  N_{\phi}^{\text{MC}} N_{\pi}^{\text{MC}} n^2 k \right)}_{\bm{\mu}_{\pi}^{\intercal} \pargrad{\theta} \approxx{\mathbf{P}}_{\pi}^\gamma \mathbf{q}_{\pi}} \\
	& = \bigO\left(N_{\mu_0}^{\text{MC}} N_{\pi}^{\text{MC}} n \right) + \bigO\left(N_{\phi}^{\text{MC}} N_{\pi}^{\text{MC}} n^2 (k + \log k) \right) + \bigO \left( \sqrt{\delta} (k+1) n \right).
\end{align*}

The memory complexity is
\begin{equation*}
	\underbrace{\bigO\left(N_{\mu_0}^{\text{MC}} N_{\pi}^{\text{MC}} n \right)}_{\bm{\varepsilon}_{\pi, 0}}  +  \underbrace{\bigO\left(N_{\phi}^{\text{MC}} N_{\pi}^{\text{MC}} n k \right)}_{\approxx{\mathbf{P}}_{\pi}^\gamma} + \underbrace{\bigO\left( n \right)}_{\bm{q}_\pi} + \underbrace{\bigO\left( n \right)}_{\bm{\mu}_\pi}.
\end{equation*}

The quantities in ``underbrace'' indicate the source of the complexities as described in the previous paragraphs.
Hence, after dropping task-specific constants, the algorithm to implement NOPG has close to quadratic computational complexity and linear memory complexity with respect to the number of samples $n$, per policy update.}

\end{document}